\documentclass[sigconf,nonacm]{acmart}

\usepackage{bbm}
\usepackage{graphicx}
\usepackage{array}
\usepackage{amsfonts}

\usepackage{colortbl}
\usepackage[table,xcdraw,HTML]{xcolor}
\usepackage{subcaption}  
\usepackage{placeins}    
\usepackage{enumitem}
\usepackage{wrapfig}

\usepackage{booktabs,multirow}
\usepackage[ruled,vlined,linesnumbered]{algorithm2e}
\usepackage{adjustbox}

\DontPrintSemicolon
\SetKwFunction{BuildPrompt}{BuildPrompt}
\SetKwFunction{DualConstraintEval}{DualConstraintEval}
\SetKwFunction{PiTPOUpdate}{PiT-PO\_Update}
\SetKwFunction{SelectAndReset}{SelectAndReset}

\AtBeginDocument{%
  }

\acmConference[]{}{}

\begin{document}

\title{LLM-Based Scientific Equation Discovery via Physics-Informed Token-Regularized Policy Optimization}

\author{Boxiao Wang}
\email{wangboxiao22@mails.ucas.ac.cn}
\orcid{0009-0008-2970-7575}
\affiliation{%
  \institution{Institute of Automation, Chinese Academy of Sciences}
  \city{Beijing}
%   \state{Ohio}
  \country{China}
}

\author{Kai Li}
\authornotemark[1]
\email{kai.li@ia.ac.cn}
    \affiliation{%
      \institution{Institute of Automation, Chinese Academy of Sciences}
      \city{Beijing}
    %   \state{Ohio}
      \country{China}
}

\author{Tianyi Liu}
\email{franktyliu@outlook.com}
    \affiliation{%
      \institution{State Key Laboratory of Aerodynamics}
      \city{Mianyang, Sichuan}
    %   \state{Ohio}
      \country{China}
}

\author{Chen Li}
\email{lichen@skla.cardc.cn}
    \affiliation{%
      \institution{State Key Laboratory of Aerodynamics}
      \city{Mianyang, Sichuan}
    %   \state{Ohio}
      \country{China}
}

\author{Junzhe Wang}
\email{wangjunzhe21@mails.ucas.ac.cn}
 \affiliation{%
   \institution{School of Mathematical Sciences,
University of Chinese Academy of Sciences}
   \city{Beijing}
   \country{China}
 }

\author{Yifan Zhang}
\email{yifan.zhang@ia.ac.cn}
    \affiliation{%
      \institution{Institute of Automation, Chinese Academy of Sciences}
      \city{Beijing}
    %   \state{Ohio}
      \country{China}
}

\author{Jian Cheng}
\email{jian.cheng@ia.ac.cn}
    \affiliation{%
      \institution{Institute of Automation, Chinese Academy of Sciences}
      \city{Beijing}
    %   \state{Ohio}
      \country{China}
}

\renewcommand{\shortauthors}{}

% \end{abstract}

\begin{abstract}
Symbolic regression aims to distill mathematical equations from observational data. Recent approaches have successfully leveraged Large Language Models (LLMs) to generate equation hypotheses, capitalizing on their vast pre-trained scientific priors. However, existing frameworks predominantly treat the LLM as a static generator, relying on prompt-level guidance to steer exploration. This paradigm fails to update the model’s internal representations based on search feedback, often yielding physically inconsistent or mathematically redundant expressions. In this work, we propose \textbf{PiT-PO} (Physics-informed Token-regularized Policy Optimization), a unified framework that evolves the LLM into an adaptive generator via reinforcement learning. Central to PiT-PO is a \textit{dual-constraint mechanism} that rigorously enforces hierarchical physical validity while simultaneously applying fine-grained, token-level penalties to suppress redundant structures. Consequently, PiT-PO aligns LLM to produce equations that are both scientifically consistent and structurally parsimonious. Empirically, PiT-PO achieves state-of-the-art performance on standard benchmarks and successfully discovers novel turbulence models for challenging fluid dynamics problems. We also demonstrate that PiT-PO empowers small-scale models to outperform closed-source giants, democratizing access to high-performance scientific discovery.

\end{abstract}

\maketitle

\section{Introduction}
Symbolic Regression (SR)~\cite{article} stands as a cornerstone of data-driven scientific discovery, uniquely capable of distilling interpretable mathematical equations from observational data. Unlike black-box models that prioritize mere prediction, SR elucidates the fundamental mechanisms governing system behavior, proving instrumental in uncovering physical laws~\cite{10.1093pnasnexuspgae467,10.1007978-3-031-29573-7_3}, modeling chemical kinetics~\cite{ICLR2025_a76b693f,DENG2023109010}, and analyzing complex biological dynamics~\cite{Wahlquist2024,shi2024alphaforgeframeworkdynamicallycombine}. 

However, the search for exact governing equations represents a formidable challenge, formally classified as an NP-hard problem~\cite{virgolin2022symbolicregressionnphard}. To navigate this vast search space, algorithmic strategies have evolved from Genetic Programming (GP)~\cite{doi:10.1126science.1165893,cranmer2023interpretablemachinelearningscience} and Reinforcement Learning (RL)~\cite{petersen2021deepsymbolicregressionrecovering} to Transformer-based architectures that map numerical data directly to symbolic equations~\cite{biggio2021neuralsymbolicregressionscales,kamienny2022endtoendsymbolicregressiontransformers,zhang2025ragsr}. Most recently, the advent of Large Language Models (LLMs) has introduced a new paradigm. Methods such as LLM-SR~\cite{shojaee2025llmsrscientificequationdiscovery} and LaSR~\cite{grayeli2024symbolicregressionlearnedconcept} leverage the pre-trained scientific priors and in-context learning capabilities of LLMs to generate equation hypotheses. These approaches typically employ an evolutionary search paradigm, where candidates are evaluated, and high-performing solutions are fed back via prompt-level conditioning to steer subsequent generation.

Despite encouraging progress, existing LLM-based SR methods remain constrained by severe limitations. First, most approaches treat the LLM as a static generator, relying primarily on prompt-level, \textit{verbal} guidance to steer the evolutionary search~\cite{shojaee2025llmsrscientificequationdiscovery,grayeli2024symbolicregressionlearnedconcept}. This ``frozen'' paradigm inherently neglects the opportunity to adapt and enhance the generative capability of the LLM itself based on evaluation signals, preventing the model from internalizing feedback and adjusting its generation strategies to the specific problem. Second, they typically operate in a physics-agnostic manner, prioritizing syntactic correctness over physical validity~\cite{shojaee2025llmsrscientificequationdiscovery,grayeli2024symbolicregressionlearnedconcept}. Without rigorous constraints, LLMs often generate equations that fit the data numerically but violate fundamental physical principles, rendering them prone to overfitting and practically unusable.

In this work, we propose to fundamentally shift the role of the LLM in SR from a static proposer to an adaptive generator.
We establish a dynamic feedback loop in which evolutionary exploration and parametric learning reinforce each other: evolutionary search uncovers diverse candidate equations and generates informative evaluation signals, while parametric adaptation enables the LLM to consolidate effective symbolic patterns and guide subsequent exploration more efficiently.
By employing \textit{in-search} fine-tuning, i.e., updating the LLM parameters during the evolutionary search process, we move beyond purely verbal, prompt-level guidance and introduce \textit{numerical} guidance that allows feedback to be directly internalized into the model parameters, progressively aligning LLM with the intrinsic properties of the target system.

To realize this vision, we introduce PiT-PO (Physics-informed Token-regularized Policy Optimization), a unified framework that bridges LLM-driven evolutionary exploration with rigorous verification.
PiT-PO is built upon two technical components. \textbf{1) In-Search LLM Evolution.} We implement the numerical guidance via reinforcement learning, which efficiently updates the LLM's parameters during the search process. Instead of relying on static pre-trained knowledge, this in-search policy optimization enables LLM to dynamically align its generative distribution with the structural characteristics of the specific task, effectively transforming general scientific priors into domain-specific expertise on the fly. \textbf{2) Dual Constraints as Search Guidance.} We enforce hierarchical physical constraints to ensure scientific validity, and uniquely, we incorporate a fine-grained regularization based on our proposed \textit{Support Exclusion Theorem}. This theorem allows us to identify mathematically redundant terms and translate them into token-level penalties, effectively pruning the search space and guiding LLM toward physically meaningful and structurally parsimonious equations.

Comprehensive experimental results demonstrate that PiT-PO achieves state-of-the-art performance across standard SR benchmarks, including the LLM-SR Suite~\cite{shojaee2025llmsrscientificequationdiscovery} and LLM-SRBench~\cite{shojaee2025llmsrbenchnewbenchmarkscientific}, and recovers the largest number of ground-truth equations among all evaluated methods. In addition to synthetic benchmarks, the effectiveness of PiT-PO is validated on an application-driven turbulence modeling task involving flow over periodic hills. PiT-PO improves upon traditional Reynolds-Averaged Navier-Stokes (RANS) approaches by producing anisotropic Reynolds stresses closer to Direct Numerical Simulation (DNS) references. The learned method shows enhanced physical consistency, with reduced non-physical extremes and better flow field predictions. Finally, PiT-PO maintains robust performance even when using resource-constrained small models, such as a quantized version of Llama-8B, and remains efficient under strict wall-clock time budgets, establishing a practical methodology for automated scientific discovery.

\begin{figure}[t]
  \centering
  
  \includegraphics[width=1\linewidth]{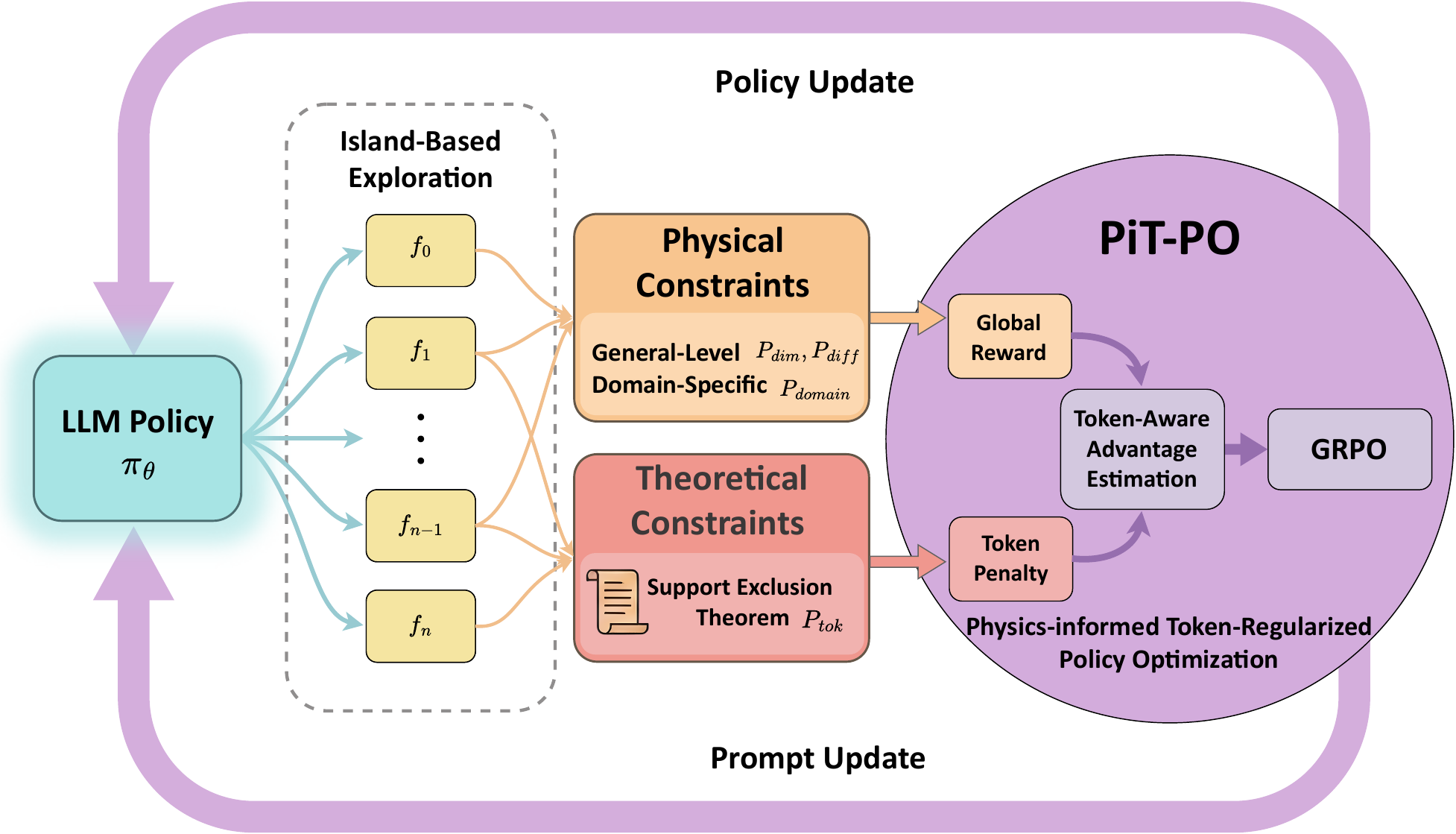}
  \caption{\textbf{The overall framework of PiT-PO.}
  PiT-PO transforms the LLM from a static proposer into an adaptive generator via a closed-loop evolutionary process. 
  The framework integrates dual-constraint evaluation—comprising physical constraints and theoretical constraints—to generate fine-grained token-level learning signals. 
  These signals guide the LLM policy update via reinforcement learning, ensuring the discovery of parsimonious, physically consistent equations.
  }
  
  \label{fig:framework}
\end{figure}

\section{Preliminaries}
\subsection{Problem Setup}
In SR, a dataset of input--output observations is given:
\begin{equation}
    D=\{(x_i,y_i)\}_{i=1}^{n},\; x_i\in\mathbb{R}^d, \; y_i\in\mathbb{R}.
\end{equation}
The objective is to identify a compact and interpretable function $f\in\mathcal{F}$ such that $f(x_i)\approx y_i$ for the observed samples, while retaining the ability to generalize to unseen inputs.

\subsection{LLM-based SR Methods}
Contemporary LLM-based approaches reformulate SR as a iterative program synthesis task. In this paradigm, typified by frameworks such as LLM-SR~\cite{shojaee2025llmsrscientificequationdiscovery}, the discovery process is decoupled into two phases: structure proposal and parameter estimation. Specifically, the LLM functions as a symbolic generator, emitting functional skeletons with placeholders for learnable coefficients. A numerical optimizer (e.g., BFGS~\cite{fletcher1987_practical_methods}) subsequently fits these constants to the observed data. To navigate the combinatorial search space, these methods employ an evolutionary-style feedback loop: high-fitness equations are maintained in a pool to serve as in-context examples, prompting the LLM to refine subsequent generations. Our work leverages this architecture as a backbone, but fundamentally redefines the LLM's role from a static proposer to an adaptive generator.

\subsection{Group Relative Policy Optimization}
Group Relative Policy Optimization (GRPO)~\cite{shao2024deepseekmathpushinglimitsmathematical} is a RL algorithm tailored for optimizing LLMs on reasoning tasks, characterized by its efficient baseline estimation without the need for a separate value network.
In the context of LLM-based SR, the generation process is modeled as a Markov Decision Process (MDP), where LLM functions as a policy $\pi_\theta$ that generates a sequence of tokens $o = (t_1, \dots, t_L)$ given a prompt $q$.
For each $q$, GRPO samples a group of $G$ outputs $\{o_1, \dots, o_G\}$ from the sampling policy $\pi_{\theta_{old}}$. GRPO maximizes the following surrogate loss function:
\begin{equation}
\footnotesize
\mathcal{J}_{GRPO}(\theta) = \mathbb{E}_{q \sim P(Q), \{o_i\} \sim \pi_{\theta_{old}}} \left[ \frac{1}{G} \sum_{i=1}^G \left( \frac{1}{L_i} \sum_{k=1}^{L_i} \mathcal{L}^{clip}_{i,k}(\theta) - \beta \mathbb{D}_{KL}(\pi_\theta || \pi_{ref}) \right) \right],
\end{equation}
where $\pi_{ref}$ is the reference policy to prevent excessive deviation, and $\beta$ controls the KL-divergence penalty. The clipping term $\mathcal{L}^{clip}_{i,k}(\theta)$ ensures trust region updates:
\begin{equation}
\footnotesize
\mathcal{L}^{clip}_{i,k}(\theta) = \min \left( \frac{\pi_\theta(t_{i,k}|q, o_{i,<k})}{\pi_{\theta_{old}}(t_{i,k}|q, o_{i,<k})} \hat{A}_i, \;\text{clip}\left(\frac{\pi_\theta(t_{i,k}|q, o_{i,<k})}{\pi_{\theta_{old}}(t_{i,k}|q, o_{i,<k})}, 1-\epsilon, 1+\epsilon\right) \hat{A}_i \right).
\end{equation}
Here, $\epsilon$ is the clipping coefficient. 
A distinctive feature of GRPO lies in its advantage estimation, it computes the advantage $\hat{A}_i$ by standardizing the reward $R(o_i)$ relative to the group:
\begin{equation}
\hat{A}_i = \frac{R(o_i) - \text{mean}(\{R(o_j)\})}{\text{std}(\{R(o_j)\})}.
\end{equation}
Consequently, every token within the sequence $o_i$ is assigned the exact same feedback signal. This coarse granularity treats valid and redundant terms indistinguishably, a limitation that our work addresses by introducing fine-grained, token-level regularization.

\section{Method}

We propose PiT-PO (Physics-informed Token-regularized Policy Optimization), a framework that evolves LLM into an adaptive, physics-aware generator. PiT-PO establishes a closed-loop evolutionary process driven by two synergistic mechanisms:
(1) a dual-constraint evaluation system that rigorously assesses candidates through hierarchical physical verification and theorem-guided redundancy pruning; and (2) a novel policy optimization strategy that updates LLM using fine-grained, token-level feedback derived from these constraints. This combination effectively aligns the LLM with the intrinsic structure of the problem, guiding the LLM toward solutions that are not only numerically accurate but also structurally parsimonious and scientifically consistent.

\subsection{Dual-Constraint Learning Signals}
\label{dual-constrint}
Navigating the combinatorial space of symbolic equations requires rigorous guidance. We employ a reward system driven by dual constraints: physical constraints delineate the scientifically valid region, while theoretical constraints drive the search toward simpler equations by identifying and pruning redundant terms.

\subsubsection{Hierarchical Physical Constraints.}

To ensure scientific validity, we construct a hierarchical filter that categorizes constraints into two levels: general properties and domain-specific priors.

\textbf{General-Level Constraints.} We enforce fundamental physical properties applicable across scientific disciplines. To prune physically impossible structures (e.g., adding terms with mismatched units), we assign penalty-based rewards for Dimensional Homogeneity ($P_{dim}$) and Differentiability ($P_{diff}$). The former strictly penalizes equations with unit inconsistencies, while the latter enforces smoothness on the data-defined domain.

\textbf{Domain-Specific Constraints.} To tackle specialized tasks, we inject expert knowledge as inductive biases. We define the domain-specific penalty $P^{(j)}_{domain}$ to penalize candidate equations that violate the $j$-th domain-specific constraint. Taking the turbulence modeling task (detailed in~Appendix~\ref{subsec:turb_domain_constraints}) as a representative instantiation, we enforce four rigorous constraints: (1) Realizability~\cite{Pope2000}, ensuring the Reynolds stress tensor has positive eigenvalues; (2) Boundary Condition Consistency~\cite{Monkewitz2021}, requiring stresses to decay to zero at the wall; (3) Asymptotic Scaling~\cite{Tennekes1972, 0258-1825(2019)03-0419-07}, enforcing the cubic relationship between stress and wall distance in the viscous sublayer; and (4) Energy Consistency~\cite{Pope2000,MOCHIZUKI2000}, aligning predicted stress with turbulent kinetic energy production. 

This hierarchical design effectively embeds physical consistency as a hard constraint in the reward function, prioritizing scientific validity over mere empirical fitting.

\subsubsection{Theorem-Guided Mathematical Constraints}
While physical constraints ensure validity, they do not prevent mathematical redundancy.
To rigorously distinguish between essential terms and redundant artifacts, we introduce the Support Exclusion Theorem. 

Let $\mathcal{S}$ denote the \textbf{full support set} containing all candidate basis functions $\{\phi_j\}$. The \textbf{ground truth equation} is $f^* = \sum_{j \in \mathcal{S}'} a_j \phi_j$, where $\mathcal{S}' \subseteq \mathcal{S}$ is the \textbf{true support set} (i.e., the indices of basis functions that truly appear in the governing equation), and $\{a_j\}_{j \in \mathcal{S}'}$ are the corresponding true coefficients. Consider a candidate equation $f = \sum_{j \in \mathcal{K}} b_j \phi_j$, where $\mathcal{K} \subseteq \mathcal{S}$ represents the \textbf{current support set} (i.e., the selected terms in the skeleton), $\mathbf{b} = \{b_j\}_{j \in \mathcal{K}}$ are the optimized coefficients derived from the data. We define the empirical Gram matrix of these basis functions as $G \in \mathbb{R}^{|\mathcal{S}| \times |\mathcal{S}|}$ and the corresponding Projection Matrix as $T$, where $T_{ij} := G_{ji} / G_{ii}$.

\begin{theorem}[Support Exclusion Theorem]
\label{thm:support_exclusion}

Assume the ground-truth support is finite and satisfies $|\mathcal{S}'|\le M$, and let the true function coefficients be bounded by $A \le |a_j| \le B$ for all $j \in \mathcal{S}'$. A term $\phi_i$ ($i \in \mathcal{K}$) is theoretically guaranteed to be a false discovery (not in the true support $\mathcal{S}'$) if its fitted coefficient magnitude satisfies:
\begin{equation}
|b_i| < A - \left( \underbrace{\sum_{j \in \mathcal{K}, j \neq i} (B + |b_j|) |T_{ij}|}_{\text{Internal Interference}} +\underbrace{B \sum_{k=1}^{m} s_{(k)}}_{\text{External Interference}} \right).
\end{equation}
$s(k)$ denotes the $k$-th largest value in $\{|T_{i\ell}|:\ell\in\mathcal{S}\setminus\mathcal{K}\}$, and $m := \min\!\big(M-1,\;|\mathcal{S}\setminus\mathcal{K}|\big)$.

\end{theorem}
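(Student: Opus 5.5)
The plan is to argue by contraposition. Assume $i\in\mathcal{K}$ also lies in the true support $\mathcal{S}'$, and show that $|b_i|$ is then at least the right-hand side of the stated bound. The only structural input needed is that $\mathbf{b}$ is the least-squares fit of the data on the current support $\mathcal{K}$.

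I take the data to be noiseless, $y_\ell=f^*(x_\ell)$, with $G_{jk}=\frac1n\sum_\ell \phi_j(x_\ell)\phi_k(x_\ell)$, so that $G$ is symmetric. I also assume $G_{ii}>0$, which is needed for $T$ to be well defined. These are the implicit hypotheses under which $T$ acts as a projection coefficient, and I will state them explicitly.

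\textbf{Step 1 (normal equations).} Stationarity of the squared loss in $b_i$ gives $\sum_{j\in\mathcal{K}} G_{ij} b_j = \langle \phi_i, y\rangle = \sum_{j\in\mathcal{S}'} G_{ij} a_j$. Divide by $G_{ii}$ and use symmetry, so that $G_{ij}/G_{ii}=T_{ij}$ and $T_{ii}=1$. This yields the identity
\begin{equation*}
b_i + \sum_{j\in\mathcal{K},\,j\neq i} T_{ij} b_j \;=\; a_i + \sum_{j\in\mathcal{S}',\,j\neq i} T_{ij} a_j ,
\end{equation*}
where the $a_i$ on the right is present precisely because $i\in\mathcal{S}'$.

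\textbf{Step 2 (isolate $a_i$ and bound).} Solve for $a_i$ and apply the triangle inequality. Use $|a_i|\ge A$ on the left and $|a_j|\le B$ on the right to get
\begin{equation*}
A \le |b_i| + \sum_{j\in\mathcal{K},\,j\neq i}|b_j||T_{ij}| + \sum_{j\in\mathcal{S}'\setminus\{i\}} B\,|T_{ij}| .
\end{equation*}
Split the last sum according to whether $j\in\mathcal{K}$ or $j\in\mathcal{S}\setminus\mathcal{K}$.

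For $j\in(\mathcal{S}'\cap\mathcal{K})\setminus\{i\}$, enlarge the sum to all of $\mathcal{K}\setminus\{i\}$. Combined with the $|b_j|$ terms, this gives exactly the internal interference $\sum_{j\in\mathcal{K},j\ne i}(B+|b_j|)|T_{ij}|$.

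For $j\in\mathcal{S}'\setminus\mathcal{K}$, note that $i\in\mathcal{S}'$ and $|\mathcal{S}'|\le M$. Hence this index set has at most $\min(M-1,|\mathcal{S}\setminus\mathcal{K}|)=m$ elements, and a sum of at most $m$ values from $\{|T_{i\ell}|:\ell\in\mathcal{S}\setminus\mathcal{K}\}$ is at most the sum of the $m$ largest, $\sum_{k=1}^m s_{(k)}$. This gives the external interference term.

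\textbf{Step 3 (conclude).} Rearranging, $i\in\mathcal{S}'$ forces $|b_i|\ge A-(\text{Internal}+\text{External})$. Therefore, if the strict inequality in the theorem holds, $i\notin\mathcal{S}'$, so $\phi_i$ is a false discovery.

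The computations are routine. The main obstacle is pinning down the setting so that Step 1 is an exact identity: that $\mathbf{b}$ is the exact least-squares minimizer over $\mathcal{K}$, that $y$ carries no noise, and that $G$ is symmetric with positive diagonal. If the intended setting allows noise, I would add a residual term $\langle\phi_i,\varepsilon\rangle/G_{ii}$ to the identity and carry its magnitude as an extra subtracted quantity. The stated bound then holds only in the noiseless case.
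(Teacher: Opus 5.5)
Your proof is correct and follows essentially the same route as the paper. Both use the least-squares normal equation at index $i$ and divide by $G_{ii}$. Both then apply the triangle inequality with $|a_j|\le B$ absorbed into the internal-interference sum, and bound the external sum by the top $m$ values, using $|\mathcal{S}'\setminus\mathcal{K}|\le M-1$ because $i\in\mathcal{K}\cap\mathcal{S}'$.

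The only differences are cosmetic. You bound $|a_i|$ directly, whereas the paper bounds $|a_i-b_i|$ above and then below via the reverse triangle inequality. Your explicit noiseless and $G_{ii}>0$ assumptions match the paper's appendix formulation, which fits $f^*$ itself.
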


Detailed definitions of all notations and the rigorous proof of Theorem~\ref{thm:support_exclusion} are provided in Appendix~\ref{app:therom}. This theorem formalizes the intuition that coefficients of redundant terms (absent from the true support $\mathcal{S}'$) have significantly smaller magnitudes than those of valid components.

Specifically, after fitting $\mathbf{b}$, we compute the normalized coefficient ratio $\tau_i=|b_i|/(\sum_j |b_j|+ \epsilon)$. We introduce a threshold $\rho \in (0,1)$ to identify potentially redundant terms. Terms satisfying $\tau_i > \rho$ incur no penalty, while components with $\tau_i \le \rho$ are considered redundant.
To suppress these redundancies, we define a \textbf{token penalty} for each token in redundant term $i$:
\begin{equation}
\label{pik}
P_{tok} = p \cdot \max\left(0, -\log\left(|b_i| + \epsilon\right)\right),
\end{equation}
where $p > 0$ is a scaling coefficient.
We use a logarithmic scale to impose stronger penalties on terms with smaller coefficients.

By integrating this penalty into the policy optimization, we guide the LLM to reduce the probability of generating redundant terms, thereby steering the optimization toward parsimonious equations.

\subsection{Token-Aware Policy Update}

\label{3.2}
Our proposed PiT-PO effectively operationalize the hierarchical constraints and theoretical insights derived in Section~\ref{dual-constrint}. Unlike standard GRPO that assign a uniform scalar reward to the entire generated sequence, our method transitions the learning process from coarse-grained sequence scoring to fine-grained token-level credit assignment. This ensures that the policy not only learns to generate physically valid equations but also explicitly suppresses theoretically redundant terms.

\subsubsection{Global Reward with Gated Constraints}

The optimization is driven by a composite global reward, $R_{global}$, which balances fitting accuracy, structural parsimony, and physical consistency. Formally, for a sampled equation $o_i$, the rewards are defined as follows:
 
\textbf{Fitting Accuracy} ($R_{fit}$). We use the normalized log-MSE to encourage precise data fitting:
       \begin{equation}
        R_{fit} = -\alpha \log(\text{MSE} + \epsilon),
       \end{equation}
where $MSE = \frac{1}{n} \sum_{i=1}^n (y_i - \hat{y}_i)^2$.

\textbf{Complexity Penalty} ($P_{cplx}$). Adhering to Occam's Razor, we penalize structural complexity based on the Abstract Syntax Tree (AST)~\cite{10.11451083142.1083143} node count:
             \begin{equation}
              P_{cplx} = \lambda_{len} \cdot \text{Length}(\text{AST}).
              \end{equation}
In PiT-PO, each equation generated by the LLM is represented as a Python function and parsed into an AST, where each node corresponds to a variable or operator. The total node count provides a meaningful estimate of structural complexity.

\textbf{Gated Physical Penalty} ($P_{phy}$). Imposing strict physical constraints too early can hinder exploration, causing the model
to discard potentially promising functional forms. We therefore activate physical penalties only after the candidate equation reaches a baseline fitting accuracy threshold ($\delta_{gate}$). Specifically, we define
\begin{equation}
\resizebox{\columnwidth}{!}{$
P_{phy}(o_i)
= \mathbbm{1}\!\big({\scriptsize \mathrm{MSE}(o_i)} < \delta_{gate}\big)\,
\Big( P_{diff}(o_i) + P_{dim}(o_i) + \sum_{j=1}^{J} P^{(j)}_{\text{domain}}(o_i) \Big),
$}
\end{equation}
where $\mathbb{1}(\cdot)$ is the indicator function. This mechanism effectively creates a soft curriculum: it allows ``free'' exploration in the early stages and enforcing strict physical compliance only after the solution enters a plausible region.

The total reward $R_{global}$ is then formulated as:
\begin{equation}
R_{global}(o_i)= R_{fit}(o_i) - P_{cplx}(o_i) - P_{phy}(o_i).
\end{equation}

\subsubsection{Fine-Grained Advantage Estimation}
Standard GRPO applies a uniform advantage across all tokens in a sequence. 
We refine this by synthesizing the global reward with the token-level penalty $P_{tok}$ (Equation~\ref{pik}).
Specifically, we define the \textbf{token-aware advantage} $\hat{A}_{i,k}$ for the $k$-th token in the $i$-th sampled equation as:
\begin{equation}
\hat{A}_{i,k} = \underbrace{\frac{R_{global}(o_i) - \mu_{group}}{\sigma_{group}}}_{\text{Global Standardization}} - \underbrace{P_{i,k}}_{\text{Local Pruning}}.
\end{equation}

Here, the first term standardizes the global reward against the group statistics ($\mu_{group}, \sigma_{group}$), reinforcing equations that satisfy multi-objective criteria relative to their peers. 
The second term, $P_{i,k}$, applies a targeted penalty to suppress redundancy.
Specifically, we set $P_{i,k}=0$ if token $k$ belongs to a non-redundant term, and $P_{i,k}=P_{tok}$ otherwise. This ensures that penalties are applied exclusively to tokens contributing to mathematically redundant structures, while valid terms remain unaffected.

Substituting this token-aware advantage into the GRPO objective, the policy gradient update of our PiT-PO becomes:
\begin{equation}
\nabla \mathcal{J}_{PiT-PO} \propto \sum_{i,k} \hat{A}_{i,k} \nabla \log \pi_\theta(t_{i,k} | o_{i,<k}).
\end{equation}
This creates a dual-pressure optimization landscape: global rewards guide the policy toward physically consistent and accurate equations, while local penalties surgically excise redundant terms. This ensures the final output aligns with the sparse, underlying physical laws rather than merely overfitting numerical data.

{\setlength{\textfloatsep}{2pt}
\begin{algorithm}[t]
\caption{PiT-PO Overall Training Pipeline}
\label{alg:pitpo_overall_training}
\KwIn{Dataset $D=\{(x_i,y_i)\}_{i=1}^n$; LLM $\pi_{\theta}$; number of islands $N$; group size $G$; iterations $T$.}
\KwOut{Best equation $o^{*}$.}
\BlankLine

Initialize $o^{*}$, $s^{*}$, and buffers $\mathcal{B}_j \leftarrow \emptyset$ for $j=1,\dots,N$\;

\For{$t \leftarrow 1$ \KwTo $T$}{
  \tcp{Stage 1: Island-Based Exploration}
  \For{$j \leftarrow 1$ \KwTo $N$}{
    $q_j \leftarrow \textsc{BuildPrompt}(D,\mathcal{B}_j)$\tcp*[r]{in-context rule}
    $\{o_i\}_{i=1}^{G} \sim \pi_{\theta}(\cdot \mid q_j)$\tcp*[r]{sample a group}
    \For{$i \leftarrow 1$ \KwTo $G$}{
      $(R_i,\{P_{i,k}\}) \leftarrow \textsc{DualConstraintEval}(o_i,D)$\tcp*[r]{$R_i=R_{\mathrm{global}}(o_i)$}
      $\mathcal{B}_j \leftarrow \mathcal{B}_j \cup \{(q_j,o_i,R_i,\{P_{i,k}\})\}$ 
      
      \If{$R_i > s^{*}$}{
        $o^{*} \leftarrow o_i$;\quad $s^{*} \leftarrow R_i$
      }
    }
  }

  \tcp{Stage 2: In-Search LLM Evolution}
  $\theta \leftarrow \textsc{PiT-PO\_Update}(\theta,\{\mathcal{B}_j\}_{j=1}^{N},\pi_{{\theta}})$

  \tcp{Stage 3: Hierarchical Selection}
  $\{\mathcal{B}_j\}_{j=1}^{N} \leftarrow \textsc{SelectAndReset}(\{\mathcal{B}_j\}_{j=1}^{N})$
}
\Return{$o^{*}$}\;
\end{algorithm}
}

\subsection{Overall Training Pipeline}

We orchestrate the PiT-PO framework through a closed-loop evolutionary RL cycle. As illustrated in Algorithm~\ref{alg:pitpo_overall_training}, the training process iterates through three synergistic phases:

Phase 1: Island-Based Exploration (Data Generation). To prevent premature convergence to local optima, a common pitfall in SR, we employ a standard multi-island topology ($N$ islands) to structurally enforce search diversity~\cite{cranmer2023interpretablemachinelearningscience,romera2024mathematical}. 
Each island $j$ maintains an isolated experience buffer $\mathcal{B}_j$, evolving its own lineage of equations.
This information isolation allows distinct islands to cultivate diverse functional forms independently.

Phase 2: In-Search LLM Evolution (Policy Update). This phase transforms the collected data into parametric knowledge. We aggregate the trajectories from all $N$ islands into a global batch to perform policy optimization using PiT-PO. By minimizing the loss, the model explicitly lowers the probability of generating mathematically redundant tokens and physically inconsistent structures. To ensure computational efficiency during this iterative search, we implement the update using Low-Rank Adaptation (LoRA)~\cite{hu2021loralowrankadaptationlarge}.

Phase 3: Hierarchical Selection (Population Management). We apply standard survival-of-the-fittest mechanisms to maintain population quality. Local buffers $\mathcal{B}_j$ are updated by retaining only top-performing candidates, while underperforming islands are periodically reset with high-fitness seeds to escape local optima.

This cycle establishes a reciprocal reinforcement mechanism: the island-based exploration maintains search diversity, while policy update consolidates these findings into the model weights, progressively transforming the LLM into a domain-specialized scientific discoverer.

\section{Experiments}

\subsection{Setup}

\noindent\textbf{Benchmarks.}
To provide a comprehensive evaluation of PiT-PO, we adopt two widely used benchmarks to compare against state-of-the-art baselines:

\textit{LLM-SR Suite}~\cite{shojaee2025llmsrscientificequationdiscovery}. This suite comprises four tasks spanning multiple scientific domains: \textit{Oscillation 1 \& 2} (Nonlinear Oscillatory Systems) feature dissipative couplings and non-polynomial nonlinearities with explicit forcing, making recovery of the correct interaction terms from trajectory data non-trivial; \textit{E. coli Growth} ~\cite{annurev:contentjournals10.1146annurev.mi.03.100149.002103,Rosso1995}models multivariate population dynamics with strongly coupled, multiplicative effects from nutrients, temperature, and acidity; and \textit{Stress-Strain}~\cite{Aakash2019} uses experimental measurements of Aluminum 6061-T651 and exhibits temperature-dependent, piecewise non-linear deformation behavior. Detailed information about these tasks is provided in Appendix~\ref{appendix:llmsr-suite}.

\textit{LLM-SRBench:}~\cite{shojaee2025llmsrbenchnewbenchmarkscientific} To evaluate generalization beyond canonical forms, we adopt the comprehensive LLM-SRBench benchmark, which contains 239 tasks organized into two complementary subsets, LSR-Transform and LSR-Synth. LSR-Transform changes the prediction target to rewrite well-known physics equations into less common yet analytically equivalent forms, producing 111 transformed tasks. This design aims to reduce reliance on direct memorization of canonical templates and tests whether a method can recover the same physical law under non-trivial variable reparameterizations. Complementarily, LSR-Synth composes equations from both known scientific terms and synthetic but plausible terms to further assess discovery beyond memorized templates: candidate terms are proposed by an LLM under domain context, assembled into full equations, and then filtered through multiple checks, including numerical solvability, contextual novelty, and expert plausibility, yielding 128 synthetic tasks. Further details are given in Appendix~\ref{appendix:llmsrbench}.

\medskip

\noindent\textbf{Baselines.}

We compare {PiT-PO} against representative baselines spanning both classical and LLM-based SR methods. For the four tasks in the LLM-SR Suite, we include {GPlearn}, a genetic programming-based SR approach; {PySR}~\citep{grayeli2024symbolicregressionlearnedconcept}, which couples evolutionary search with symbolic simplification; {uDSR}~\citep{NEURIPS2022_dbca58f3}, which replaces the RNN policy in DSR with a pretrained Transformer and employs neural-guided decoding; {RAG-SR}~\citep{zhang2025ragsr}, which incorporates structure retrieval to assist equation generation; and {LLM-SR}~\citep{shojaee2025llmsrscientificequationdiscovery}. For the broader LLM-SRBench benchmark, we further compare against leading LLM-based SR methods, including {SGA}~\citep{pmlr-v235-ma24m}, which integrates LLM-driven hypothesis proposal with physics-informed parameter optimization in a bilevel search framework, and {LaSR}~\citep{grayeli2024symbolicregressionlearnedconcept}, which leverages abstract symbolic concepts distilled from prior equations to guide hybrid LLM--evolutionary generation.

\medskip
\noindent\textbf{Evaluation metrics.}
We evaluate methods using {Accuracy to Tolerance} and {Normalized Mean Squared Error (NMSE)}.
For a tolerance $\tau$, we report $\mathrm{Acc}_{\mathrm{all}}(\tau)$ ~\cite{biggio2021neuralsymbolicregressionscales}and $\mathrm{Acc}_{\mathrm{avg}}(\tau)$ based on relative error:
{\small $\mathrm{Acc}_{\mathrm{all}}(\tau)=\mathbbm{1}\!\bigl(
\max_{1\le i\le N_{\mathrm{test}}}\bigl|\tfrac{\hat{y}_i-y_i}{y_i}\bigr|
\le\tau\bigr)$
} and
{\small $\mathrm{Acc}_{\mathrm{avg}}(\tau)=\tfrac{1}{N_{\mathrm{test}}}\sum_{i=1}^{N_{\mathrm{test}}}
\mathbbm{1}\!\bigl(\bigl|\tfrac{\hat{y}_i-y_i}{y_i}\bigr|\le\tau\bigr)$},
where $\hat{y}_i$ and $y_i$ denote the predicted and ground-truth values at the $i$-th test point, respectively.
We additionally report $\mathrm{NMSE}=\frac{1}{N_{\mathrm{test}}}\sum_{i=1}^{N_{\mathrm{test}}}\frac{(\hat{y}_i-y_i)^2}{\mathrm{Var}(y)}$ to assess overall numerical accuracy. We additionally adopt the Symbolic Accuracy (SA) metric~\cite{shojaee2025llmsrbenchnewbenchmarkscientific}, which directly measures whether the discovered equation recovers the correct symbolic form (i.e., whether it is mathematically equivalent to the ground-truth equation up to fitted constants).

\medskip
\noindent\textbf{Hyperparameter Configurations.}
All experiments were run for 2,500 search iterations. To ensure a fair comparison, all hyperparameters related to LLM generation and search were kept consistent with the default configuration of LLM-SR. For the in-search policy optimization specific to PiT-PO, we use a learning rate of $1 \times 10^{-6}$, a group size of $G=4$, and a multi-island setting of $N=4$, resulting in an effective per-device batch size of $G \times N = 16$. The coefficient of the KL regularization term was set to 0.01, and the LoRA rank was set to $r=16$. In addition, experiments were conducted on a single NVIDIA RTX 3090 using 4-bit quantized Llama-3.2-1B-Instruct, Llama-3.2-3B-Instruct, and Llama-3.1-8B-Instruct~\cite{kassianik2025llama31foundationaisecurityllmbase8btechnicalreport} to evaluate the training stability and performance transferability of PiT-PO across different parameter scales under constrained compute and memory budgets. More details are in Appendix~\ref{app:pitpo_hparams}.

\begin{table*}[!t]
\centering
\normalsize
\resizebox{\textwidth}{!}{
\begin{tabular}{lcccccccc}
\hline
\multirow{2}{*}{\textbf{Models}} &
\multicolumn{2}{c}{\textbf{Oscillation 1}} &
\multicolumn{2}{c}{\textbf{Oscillation 2}} &
\multicolumn{2}{c}{\textbf{E. coli growth}} &
\multicolumn{2}{c}{\textbf{Stress-Strain}} \\
\cline{2-9}
& {Acc\textsubscript{avg-0.001}(\%)$\uparrow$} & {NMSE$\downarrow$}
& {Acc\textsubscript{avg-0.001}(\%)$\uparrow$} & {NMSE$\downarrow$}
& {Acc\textsubscript{avg-0.1}(\%)$\uparrow$} & {NMSE$\downarrow$}
& {Acc\textsubscript{avg-0.1}(\%)$\uparrow$} & {NMSE$\downarrow$} \\
\hline
GPlern & 0.11 & 0.0972 & 0.05 & 0.2000 & 0.76 & 1.0023 & 28.43 & 0.3496 \\
uDSR   & 1.78 & 0.0002 & 0.36 & 0.0856 & 1.12 & 0.5059 & 59.15 & 0.0639 \\
PySR   & 3.80 & 0.0003 & 7.02 & 0.0002 & 2.80 & 0.4068 & 70.60 & 0.0347 \\
RAG-SR & 39.47 & 1.49e-6 & 0.43 & 0.0282 & 2.04 & 0.2754 & 76.28 & 0.0282 \\

LLM-SR (Mixtral) & \cellcolor{gray!20}\textbf{100.00} & 1.32e-11 & 99.98 & 1.18e-11 & 2.88 & 0.0596 & 71.44 & 0.0276 \\
LLM-SR (4o-mini) & 99.92 & 8.84e-12 & 99.97 & 8.70e-10 & 5.52 & 0.0453 & \cellcolor{gray!20}\textbf{85.33} & 0.0245 \\

\hline

LLM-SR (Llama-3.1-8B)  & 59.58 & 1.17e{-6} & 99.96 & 9.66e-10 & 4.86 & 0.0555 & 77.74 & 0.0246 \\
LLM-SR (Llama-3.2-3B)  & 39.45  & 1.76e-6 & 66.34 & 6.99e{-7} & 1.08 & 0.3671 & 74.78 & 0.0324 \\
LLM-SR (Llama-3.2-1B)  & 3.28 & 4.47e-4 & 7.81 & 0.0002 & 1.70 & 0.5801 & 30.35 & 0.3801 \\

\hline

PiT-PO (Llama-3.1-8B) &
\cellcolor[HTML]{D9D9FF}\textbf{100.00} &
\cellcolor[HTML]{D9D9FF}\textbf{6.41e{-31}} &
\cellcolor{gray!20}\textbf{99.99} &
\cellcolor{gray!20}\textbf{2.11e{-13}} &
\cellcolor{gray!20}\textbf{10.42} &
\cellcolor{gray!20}\textbf{0.0090} &
84.45 &
\cellcolor{gray!20}\textbf{0.0136} \\

PiT-PO (Llama-3.2-3B) &
\cellcolor[HTML]{D9D9FF}\textbf{100.00} &
\cellcolor[HTML]{D9D9FF}\textbf{7.58e{-31}} &
99.97 & 9.77e{-10} & 7.01 & 0.0248 & 84.54 & 0.0156 \\

PiT-PO (Llama-3.2-1B) & 99.95 & 1.34e{-11} & 99.97 & 1.70e{-8} & 4.76 & 0.0240 & 76.91 & 0.1767 \\

\hline
\end{tabular}
}
\caption{Overall performance on LLM-SR Suite.}
\label{tab:main}

\end{table*}

\subsection{PiT-PO Demonstrates Superior Equation Discovery Capability}

As evidenced in Table~\ref{tab:main}, PiT-PO establishes a new state-of-the-art on LLM-SR Suite. It consistently dominates baseline methods across all metrics, achieving the highest accuracy while maintaining the lowest NMSE in nearly all test cases. Crucially, when controlling for the LLM backbone, PiT-PO yields a substantial performance margin over LLM-SR, validating the effectiveness of our in-search policy optimization framework. Notably, PiT-PO is the only approach to successfully identify the exact ground-truth equation for the Oscillator 1. This structural precision extends to the larger-scale LLM-SRBench (Table~\ref{tab:tb2}), where PiT-PO achieves the highest symbolic accuracy across all categories. These results collectively demonstrate that PiT-PO not only fits data numerically but excels in uncovering the true underlying equations.

These quantitative gains are not accidental but stem from PiT-PO's structural awareness. 
Analysis of the iterative trajectories of LLM-SR and PiT-PO in Appendix~\ref{app:eq_case_study} corroborates this conclusion: the iterations of LLM-SR remain persistently influenced by clearly incorrect terms, which violate physical meaning despite providing strong numerical fits, as well as by additional nuisance terms. Consequently, the search of LLM-SR often stagnates in a low-MSE regime without reaching the correct structure. In contrast, once PiT-PO enters the same regime, the dual constraints rapidly eliminate terms that improve fitting performance but are structurally incorrect. This behavior highlights the central advantage of the proposed dual-constraint learning signals, in which the physical constraints and token-level penalties provide data-driven signals for precise structural correction, guiding the LLM toward the true underlying equation rather than mere numerical overfitting.

\begin{figure}[t]
  \centering
  \includegraphics[width=0.98\linewidth]{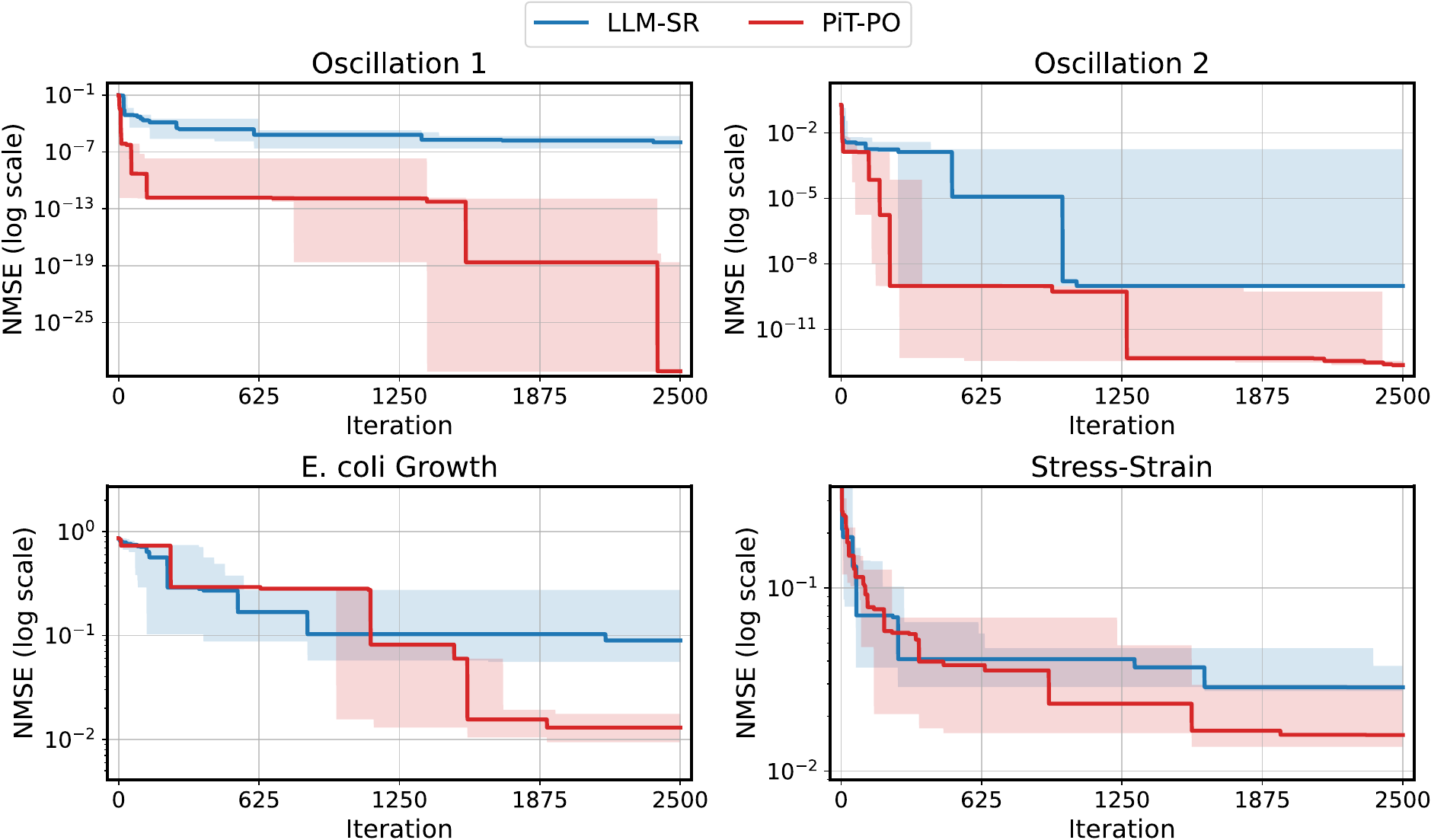}
  \caption{NMSE trajectories (log scale) over search iterations for LLM-SR and PiT-PO (Llama-3.1-8B) on LLM-SR Suite. Lines denote the median over seeds, and shaded regions indicate the min--max range.The remaining iteration curves for smaller backbones (3B and 1B) are deferred to Appendix~\ref{app:extra_curves}.
}
  \label{fig:nmse_vs_iteration}
  
\end{figure}

\begin{table*}[t]

\centering
\normalsize
\setlength{\tabcolsep}{4.0pt}
\renewcommand{\arraystretch}{1.25}

\resizebox{\textwidth}{!}{%
\begin{tabular}{l
>{\columncolor{gray!15}}c cc
>{\columncolor{gray!15}}c cc
>{\columncolor{gray!15}}c cc
>{\columncolor{gray!15}}c cc
>{\columncolor{gray!15}}c cc}
\toprule
\textbf{Models}
& \multicolumn{3}{c}{\textbf{LSR-Transform}}
& \multicolumn{12}{c}{\textbf{LSR-Synth}} \\
\cmidrule(lr){5-16}
& \multicolumn{3}{c}{}
& \multicolumn{3}{c}{Chemistry}
& \multicolumn{3}{c}{Biology}
& \multicolumn{3}{c}{Physics}
& \multicolumn{3}{c}{Material Science} \\
\cmidrule(lr){5-7}\cmidrule(lr){8-10}\cmidrule(lr){11-13}\cmidrule(lr){14-16}
& SA(\%)$\uparrow$ & Acc$_{all-0.1}$(\%)$\uparrow$ & NMSE$\downarrow$
& SA(\%)$\uparrow$ & Acc$_{all-0.1}$(\%)$\uparrow$ & NMSE$\downarrow$
& SA(\%)$\uparrow$ & Acc$_{all-0.1}$(\%)$\uparrow$ & NMSE$\downarrow$
& SA(\%)$\uparrow$ & Acc$_{all-0.1}$(\%)$\uparrow$ & NMSE$\downarrow$
& SA(\%)$\uparrow$ & Acc$_{all-0.1}$(\%)$\uparrow$ & NMSE$\downarrow$ \\
\midrule

Direct Prompting
& 3.61 & 1.801 & 0.3697
& 0.00 & 0.00 & 0.0644
& 0.00 & 0.00 & 0.5481
& 0.00 & 0.00 & 0.0459
& 0.00 & 0.00 & 0.0826 \\
SGA
& 2.70 & 0.909 & 0.3519
& 0.00 & 8.33 & 0.0458
& 0.00 & 0.00 & 0.2416
& 0.00 & 2.27 & 0.1549
& 0.00 & 12.12 & 0.0435 \\
LaSR
& 5.41 & 45.94 & \textbf{0.0021}
& 0.00 & 27.77 & 2.77e{-4}
& 4.16 & 16.66 & 2.73e{-4}
& 4.54 & 25.02 & 0.0018
& 8.21 & 64.22 & 7.44e{-5} \\
LLM-SR
& 30.63 & 38.55 & 0.0101
& 8.33 & 66.66 & 8.01e{-6}
& 25.30 & 58.33 & 1.04e{-6}
& 6.97 & 34.09 & 1.23e{-4}
& 4.10 & 88.12 & 1.15e{-7} \\
PiT-PO
& \textbf{34.23} & \textbf{46.84} & 0.0056
& \textbf{13.89} & \textbf{77.78} & \textbf{4.13e{-7}}
& \textbf{29.17} & \textbf{70.83} & \textbf{9.37e{-8}}
& \textbf{11.36} & \textbf{40.91} & \textbf{6.57e{-5}}
& \textbf{12.00} & \textbf{92.00} & \textbf{1.18e{-8}} \\
\bottomrule
\end{tabular}%
}
\caption{Overall performance on LLM-SRBench (Llama-3.1-8B-Instruct).}
\label{tab:tb2}

\end{table*}

\subsection{PiT-PO Empowers Lightweight Backbones to Rival Large Models}
As shown in Table~\ref{tab:main}, the performance of PiT-PO with the Llama-3.1-8B, Llama-3.2-3B, and Llama-3.2-1B backbones is competitive with, and often exceeds, the performance of LLM-SR that relies on substantially larger or proprietary models, including Mixtral 8$\times$7B and 4o-mini.

These results indicate that PiT-PO effectively bridges the capability gap between lightweight open-source models and large-scale commercial systems. From a practical standpoint, this reduces the barrier to entry for scientific discovery: by delivering state-of-the-art performance on consumer-grade hardware (even maintaining competitiveness with a 1B backbone), PiT-PO eliminates the dependence on massive compute and closed-source APIs, thereby democratizing access to powerful SR tools.

\begin{figure}[t]
  \centering
  \includegraphics[width=0.8\linewidth]{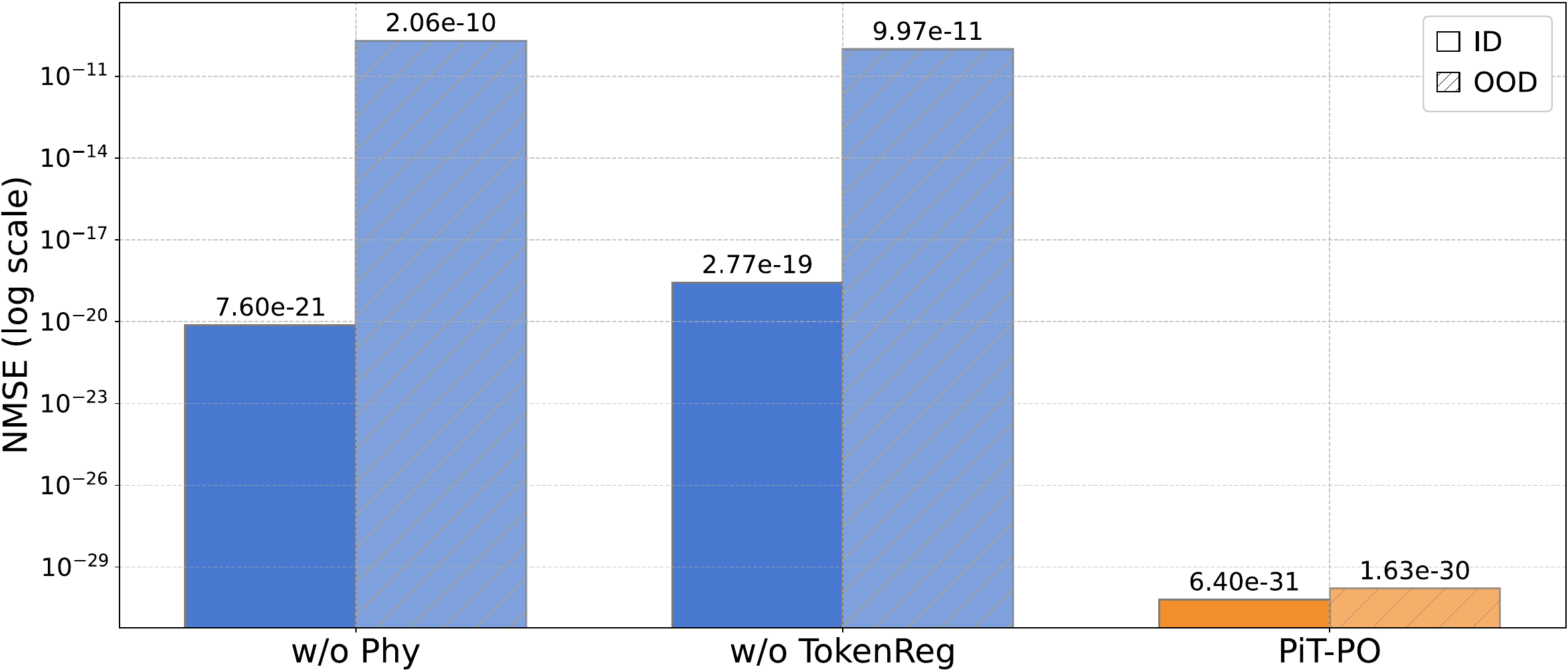}
  \caption{Ablation results of PiT-PO and its variants.}
  \label{fig:ablation}
  
\end{figure}

\subsection{PiT-PO Enhances Search Efficiency and Breaks Stagnation}
\label{4.4}
Figure~\ref{fig:nmse_vs_iteration} shows that PiT-PO achieves superior search efficiency in discovering accurate equations.
In the early search stage, the red and blue curves are close across all four tasks: both methods primarily rely on the fitting signal (MSE) and therefore exhibit comparable per-iteration progress. 
As NMSE enters a lower regime, the trajectories consistently separate: PiT-PO exhibits abrupt step-wise drops while LLM-SR tends to plateau, yielding a clear red--blue gap in every subplot. Concretely, once the search reaches these lower-error regions, PiT-PO repeatedly exits stagnation and transitions to the next accuracy phase with orders-of-magnitude NMSE reductions (most prominently in Oscillation 1 and Oscillation 2, and also evident in E. coli Growth and Stress-Strain), whereas LLM-SR often remains trapped near its current error floor. This behavior confirms that the proposed dual-constraint mechanism effectively activates exactly when naive MSE feedback becomes insufficient. By penalizing physical inconsistencies and structural redundancy, PiT-PO forces the LLM to exit stagnation and transition toward the correct functional form.

While the in-search fine-tuning introduces a computational overhead, this cost is decisively outweighed by the substantial gains in performance. As detailed in Appendix~\ref{app:time_cost}, PiT-PO maintains a significant performance edge even when evaluated under equivalent wall-clock time, demonstrating that the accelerated convergence speed effectively compensates for the additional training time.

\subsection{Ablation Study}
To rigorously validate the contribution of each algorithmic component, we conduct an ablation study across three settings: \textbf{w/o Phy}, which excludes the physics-consistency penalty $P_{\text{phy}}$; \textbf{w/o TokenReg}, which removes the redundancy-aware token-level regularization; and the full \textbf{PiT-PO} framework. As shown in Figure~\ref{fig:ablation}, removing any single component leads to a substantial deterioration of NMSE and a larger generalization gap between In-Distribution (ID) and Out-Of-Distribution (OOD) data. These empirical results underscore the necessity of the complete framework, demonstrating that the proposed dual constraints are indispensable for ensuring both search stability and robust generalization.

\begin{figure}[t]
  \centering
  \includegraphics[width=0.9\linewidth]{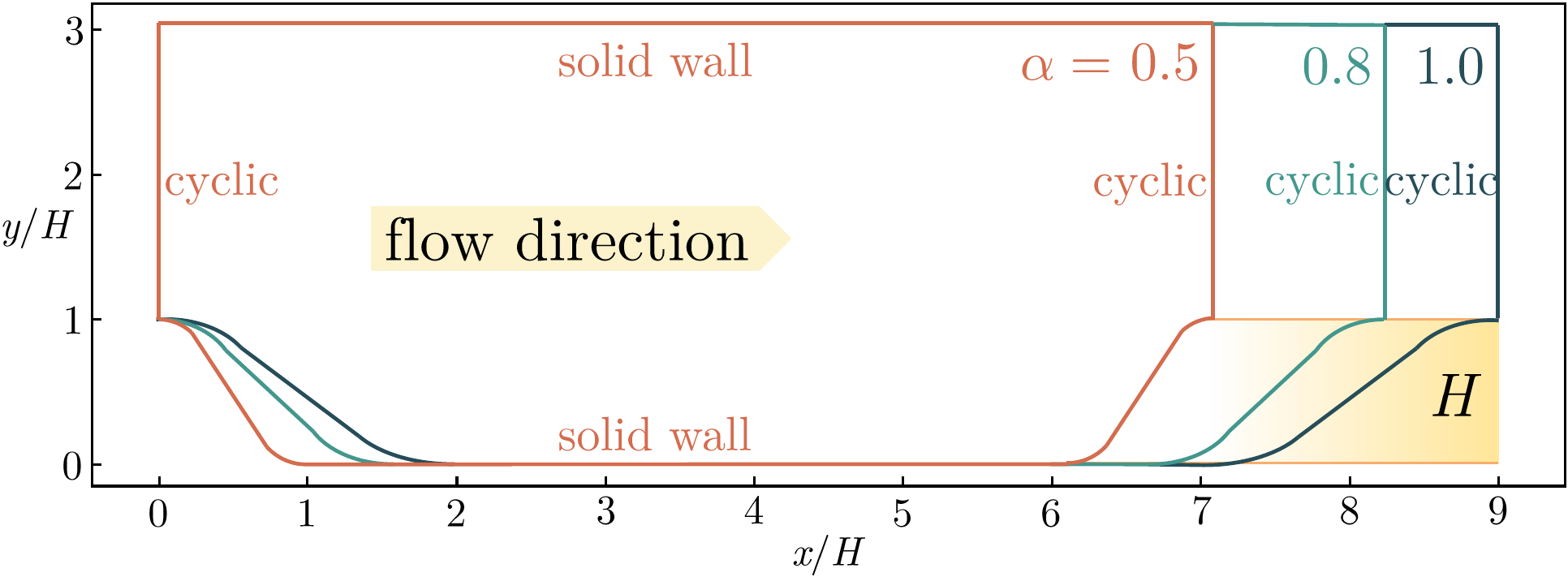}
  \caption{Schematic of the geometries for periodic hills.}
  \label{fig:set}
  
\end{figure}

\subsection{Case Study: Turbulence Modeling}

To validate the practical utility of PiT-PO in high-fidelity scientific discovery, we select the \textbf{Flow over Periodic Hills}~\cite{XIAO2020104431} (Figure~\ref{fig:set}) as a testbed. This problem is widely recognized in Computational Fluid Dynamics (CFD)~\cite{Pope2000} as a benchmark for \textit{Separated Turbulent Flows}, presenting complex features such as strong adverse pressure gradients, massive flow detachment, and reattachment.

\textit{Problem Definition and Physics:} The geometry consists of a sequence of polynomially shaped hills arranged periodically in the streamwise direction. The flow is driven by a constant body force at a bulk Reynolds number of $Re_b = 5600$ (based on hill height $H$ and bulk velocity $U_b$). The domain height is fixed at $L_y/H = 3.036$, while the streamwise length $L_x$ varies with the slope factor $\alpha$ according to $L_x/H = 3.858\alpha + 5.142$. Periodic boundary conditions are applied in the streamwise direction, with no-slip conditions on the walls.

\textit{The Scientific Challenge:} The challenge lies in the \textbf{Separation Bubble}~\cite{Pope2000}, a region where turbulence exhibits strong \textbf{anisotropy} due to streamline curvature. Traditional Linear Eddy Viscosity Models (LEVM)~\cite{Pope2000}, such as the $k$-$\omega$ SST model~\cite{menter1994two,menter2003ten}, rely on the Boussinesq hypothesis which assumes isotropic turbulence. Consequently, they systematically fail to predict key flow features, such as the separation bubble size and reattachment location.

\textit{Discovery Objective:} Instead of fitting a simple curve, our goal is to discover a \textbf{Non-linear Constitutive Relation} for the Reynolds stress anisotropy tensor $a_{ij}$ and the dimensionless Reynolds stress anisotropy tensor $b_{ij}$. By learning the Reynolds stress tensor $\tau_{ij}$ from high-fidelity Direct Numerical Simulation (DNS)~\cite{Pope2000} data, PiT-PO aims to formulate a symbolic correction term that captures the anisotropic physics missed by linear models.

\textit{Baselines:} We follow standard turbulence modeling protocols and compare primarily against the standard $k$-$\omega$ SST model of RANS. We also include LLM-SR and DSRRANS~\cite{doi:10.10635.0135638}, a strong SR-based turbulence modeling method specifically designed for turbulence tasks.

\begin{figure}[t]
  \centering
  \includegraphics[width=0.98\linewidth]{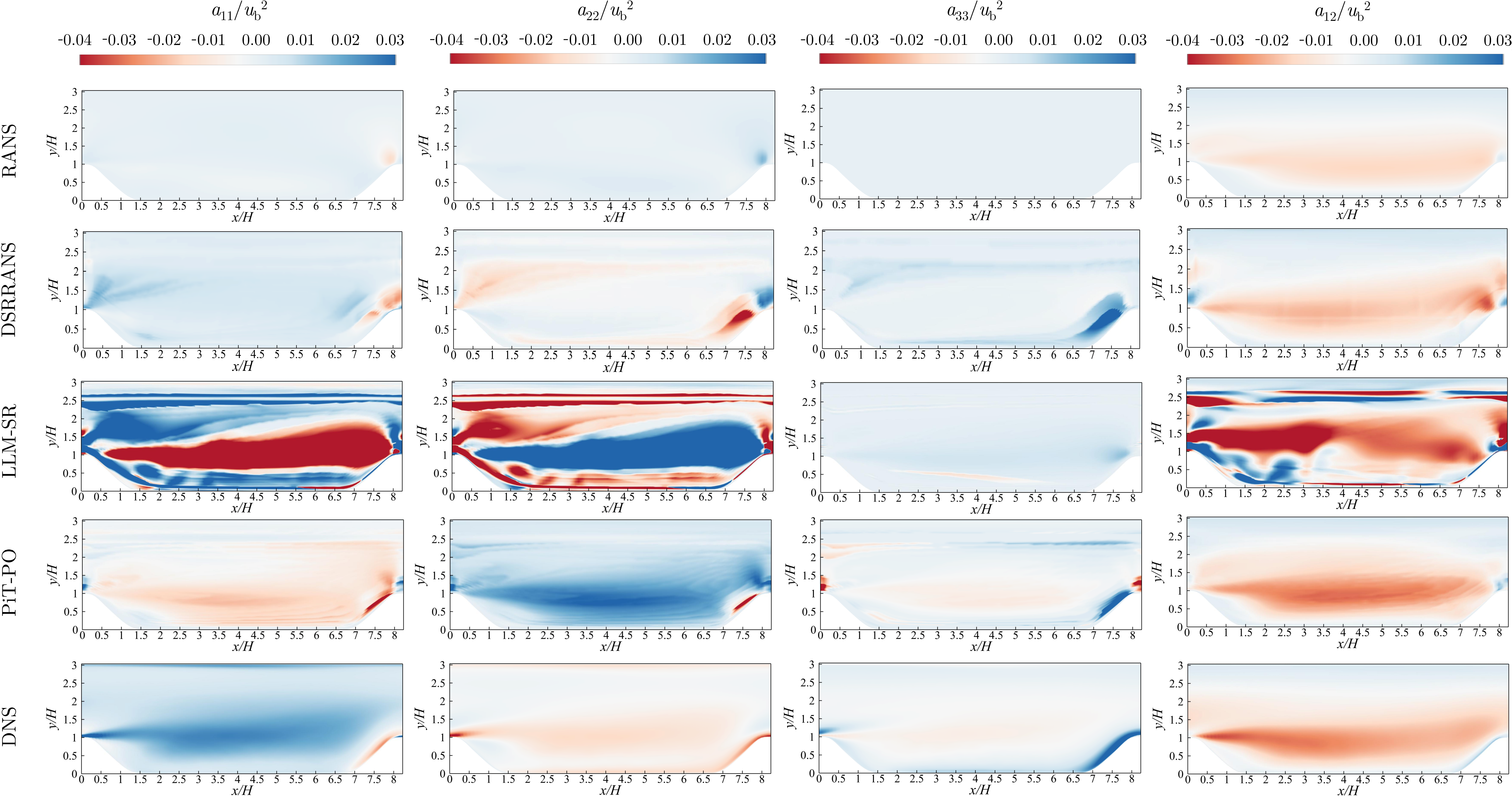}
  \caption{Comparison of the four anisotropic Reynolds stress components for periodic hill training flow 
  using RANS, DSRRANS, LLM-SR, PiT-PO and DNS, respectively. }
  
  \label{fig:aniso}
  \vspace{-1\baselineskip}
\end{figure}

We cast turbulence closure modeling as a SR problem~\cite{Tang2023} (see Appendix~\ref{sec:app_turbulence_setup} for details). 
After obtaining the final symbolic equation, we embed it into a RANS solver of OpenFOAM~\cite{OpenFOAM} and run CFD simulations on the periodic-hill configuration. 
We compare the resulting Reynolds-stress components, mean-velocity fields, and skin-friction profiles against DNS references. 
Figures~\ref{fig:aniso}--\ref{fig:cf} visualize these quantities, enabling a direct assessment of physical fidelity and flow-field prediction quality.

Based on the comparative analysis of the anisotropic Reynolds stress contours (Figure~\ref{fig:aniso}), DSRRANS and PiT-PO show enhancement over the traditional RANS approach.
Among them, PiT-PO performs the best: its contour matches the DNS reference most closely, with reduced error compared to DSRRANS and LLM-SR, demonstrating less severe non-physical extremes.

The stream-wise velocity contours illustrate the correction of the bubble size, a region of reversed flow that forms when fluid detaches from a surface. In Figure~\ref{fig:ux}, PiT-PO most accurately represents the extent and shape of the recirculation zone, where fluid circulates within the separated region, closely consistent with the DNS data throughout the domain, particularly within the separation region and the recovery layer, where flow re-attaches to the surface.

The skin friction coefficient (Figure~\ref{fig:cf}), defined as the ratio of the wall stress to the dynamic pressure of the flow along the bottom wall, is a sensitive metric for predicting flow separation. The $k$-$\omega$ SST model of RANS underestimates the magnitude of the skin friction and predicts a delayed reattachment location compared to the DNS. The learned model (PiT-PO) improves the prediction, aligning more closely with the DNS profile.

These results demonstrate that PiT-PO can generate symbolic equations tailored to turbulence modeling and that, under \emph{a posteriori} CFD evaluation, the resulting predictions more closely match DNS references, which increases the practical value of LLM-based SR in real scientific and engineering workflows. With the proposed dual constraints, 

PiT-PO provides targeted search and learning signals that enable the internalization of turbulence priors during equation discovery, thereby steering the model toward physically consistent and domain-relevant structures.

\begin{figure}[t]
  \centering
  \includegraphics[width=0.97\columnwidth]{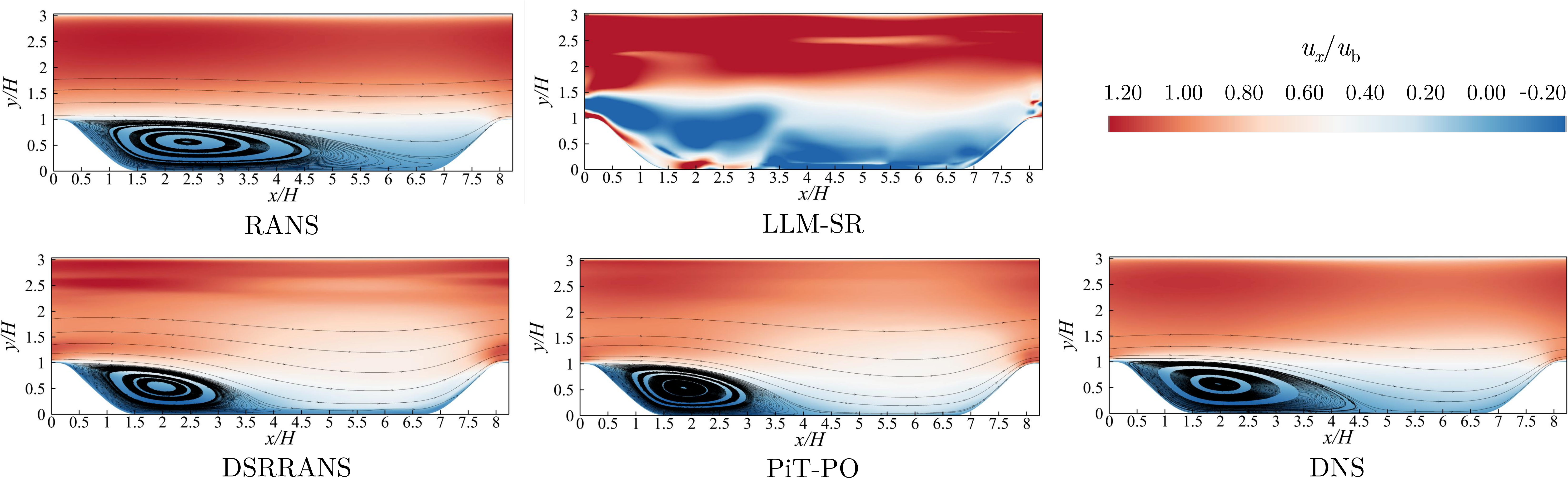}
  \caption{Non-dimensional stream-wise velocity contours obtained by the learned model and the standard $k$-$\omega$ SST model of RANS, compared with DNS data.}
  
  \label{fig:ux}
\vspace{-1\baselineskip}
\end{figure}

\section{Related Work }

Traditional SR has been studied through several lines, including genetic programming , reinforcement learning~\cite{petersen2021deepsymbolicregressionrecovering}, and transformer-based generation~\cite{Biggioetal21}. Genetic programming~\cite{130444} casts equation discovery as an evolutionary search over tree-structured programs, where candidate expressions are iteratively refined via mutation and crossover. Reinforcement learning-based SR, introduced by Petersen et al.~\cite{petersen2021deepsymbolicregressionrecovering}, has developed into a family of policy-optimization frameworks~\cite{mundhenk2021symbolicregressionneuralguidedgenetic,landajuela2021improvingexplorationpolicygradient,crochepierre:hal-03695471,du2023discoverdeepidentificationsymbolically} that formulate SR as a sequential decision-making process. More recently, transformer-based models~\cite{valipour2021symbolicgptgenerativetransformermodel,10462113,kamienny2022endtoendsymbolicregressiontransformers,Li2023TransformerbasedMF,zhang2025ragsr} have been adopted for SR, using large-scale pretraining to map numerical data directly to equations. However, these methods typically fail to incorporate scientific prior knowledge.

Recent progress in natural language processing has further enabled LLM-based SR methods, including LLM-SR~\cite{shojaee2025llmsrscientificequationdiscovery}, LaSR~\cite{grayeli2024symbolicregressionlearnedconcept}, ICSR~\cite{Merler_2024}, CoEvo~\cite{guo2025coevocontinualevolutionsymbolic}, and SR-Scientist~\cite{xia2025srscientistscientificequationdiscovery}. LLM-SR exploits scientific priors that are implicitly captured by LLMs to propose plausible functional forms, followed by data-driven parameter estimation. LaSR augments SR with abstract concept generation to guide hypothesis formation, while ICSR reformulates training examples as in-context prompts to elicit function generation. However, a unifying limitation across these methods is their reliance on the LLM as a frozen generator, which precludes incorporating search feedback to update the generation strategy and consequently restricts their ability to adapt to complex problems.

While some recent works, such as SOAR~\cite{pourcel2025selfimprovinglanguagemodelsevolutionary} and CALM~\cite{huang2025calmcoevolutionalgorithmslanguage}, have begun to explore adaptive in-search tuning, they primarily focus on algorithm discovery or combinatorial optimization problems, whereas our method is specifically tailored for SR. By integrating hierarchical physical constraints and theorem-guided token regularization, PiT-PO establishes an adaptive framework capable of discovering accurate and physically consistent equations.

\begin{figure}[t]
  \centering
  \includegraphics[width=0.95\linewidth]{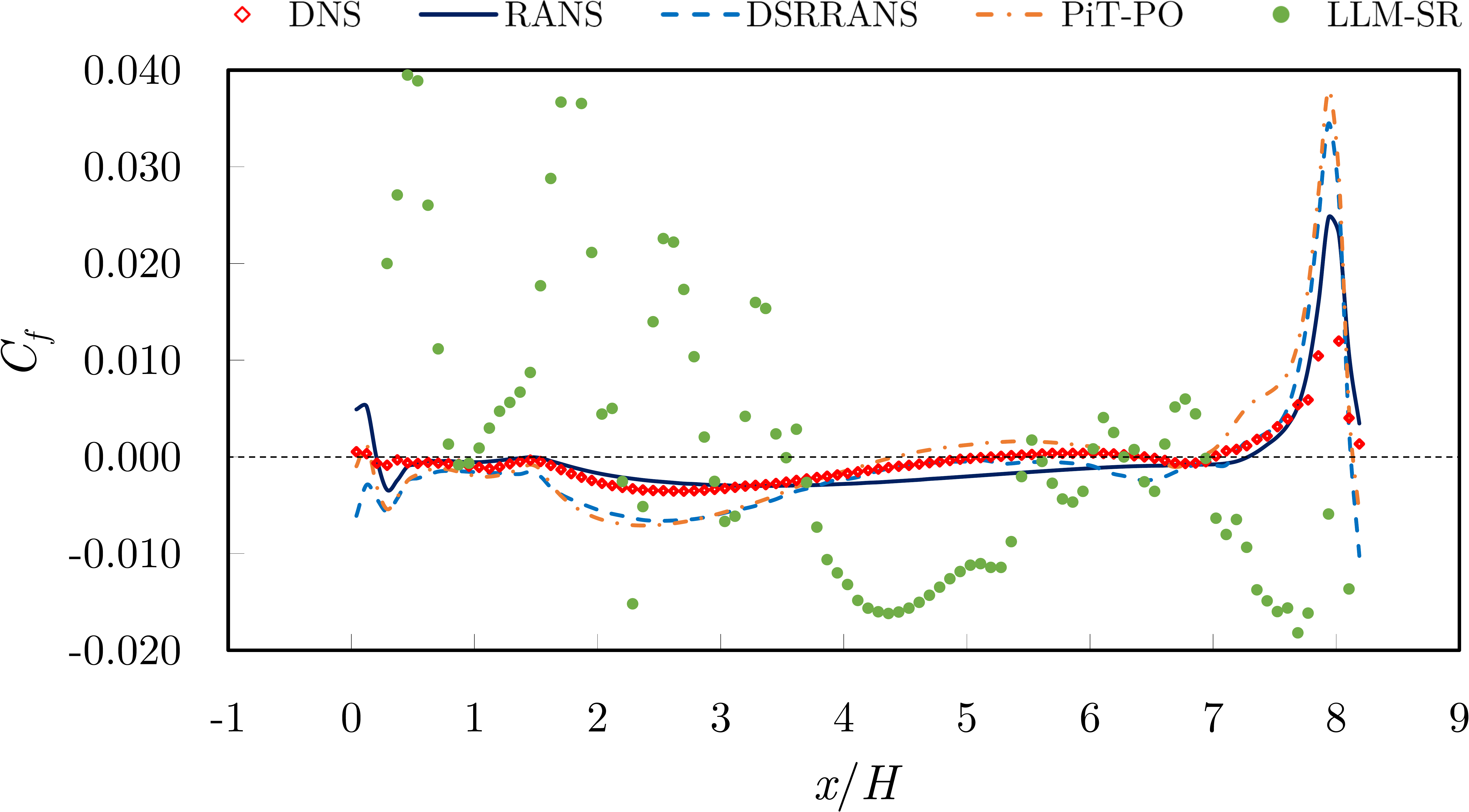}
  \caption{Skin friction distribution along the bottom obtained by the learned model and the standard $k$-$\omega$ SST model of RANS, compared with DNS data.}
  \label{fig:cf}
  % \vspace{-1\baselineskip}
\end{figure}

\section{Conclusion}

In this work, we introduced PiT-PO, a unified framework that fundamentally transforms LLMs from static equation proposers into adaptive, physics-aware generators for SR. By integrating in-search policy optimization with a novel dual-constraint evaluation mechanism, PiT-PO rigorously enforces hierarchical physical validity while leveraging theorem-guided, token-level penalties to eliminate structural redundancy. This synergistic design aligns generation with numerical fitness, scientific consistency, and parsimony, establishing new state-of-the-art performance on SR benchmarks. Beyond synthetic tasks, PiT-PO demonstrates significant practical utility in turbulence modeling, where the discovered symbolic corrections improve Reynolds stress and flow-field predictions. Notably, PiT-PO achieves these results using small open-source backbones, making it a practical and accessible tool for scientific communities with limited computational resources. Looking forward, we plan to extend PiT-PO to broader scientific and engineering domains by enriching the library of domain-specific constraints and validating it across more complex, real-world systems. 
Moreover, we anticipate that integrating PiT-PO with larger-scale multi-modal foundation models could further unlock its potential in processing heterogeneous scientific data.

\newpage

%%% -*-BibTeX-*-
%%% Do NOT edit. File created by BibTeX with style
%%% ACM-Reference-Format-Journals [18-Jan-2012].

\newpage
\newpage
\appendix
\begin{table*}[t]
\centering
\small
\setlength{\tabcolsep}{6pt}
\begin{tabular}{l l l p{0.4\textwidth}}
\hline
Component & Symbol & Value & Notes \\
\hline
Fitting log-MSE scale & $\alpha$ & \texttt{1.0} &
Scaling in $R_{\mathrm{fit}}=-\alpha\log(\mathrm{MSE}+\epsilon)$. \\
Complexity penalty weight & $\lambda_{\mathrm{len}}$ & \texttt{5e-3} &
Weight in $P_{\mathrm{cplx}}=\lambda_{\mathrm{len}}\cdot \mathrm{Length}(\mathrm{AST})$. \\
Dimensional homogeneity penalty weight & $P_{\mathrm{dim}}$ & \texttt{1.0} &
Penalty/reward weight for unit inconsistency (general-level constraint). \\
Differentiability penalty weight & $P_{\mathrm{diff}}$ & \texttt{0.5} &
Penalty/reward weight for non-smoothness on the data-defined domain (general-level constraint). \\
Domain-specific constraint weight (j-th) & $P_{\mathrm{domain}}^{(j)}$ & \texttt{0.5} &
Weight for the $j$-th domain prior (e.g., realizability, wall BC, asymptotic scaling, energy consistency). \\
Gating threshold for physics constraints & $\delta_{\mathrm{gate}}$ & \texttt{1e-3 * MSE\_\{initial\}} &
Activate physics penalties only after fitting reaches a sufficiently low-error regime. \\
Numerical stability constant in $\tau_i$ & $\epsilon$ & \texttt{1e-50} &
Used in $\tau_i = \frac{|b_i|}{\sum_j |b_j| + \epsilon}$ to avoid division by zero. \\
Redundancy threshold & $\rho$ & \texttt{1e-2} &
A term is considered redundant if $\tau_i \le \rho$. \\
Token penalty scale & $p$ & \texttt{0.5} &
Scaling coefficient in $P_{\mathrm{tok}} = p\cdot \max(0, -\log(|b_i|+\epsilon))$. \\
\hline
\end{tabular}
\caption{Hyperparameters for dual-constraint evaluation and token-level signal synthesis.}
\label{tab:dual_constraint_hparams}
\end{table*}

\section{Additional Hyperparameters of PiT-PO}
\label{app:pitpo_hparams}

Table~\ref{tab:dual_constraint_hparams} lists the additional hyperparameters of PiT-PO, including (i) general-level physical penalties (dimensional homogeneity and differentiability),
(ii) domain-specific constraint penalties, (iii) the gated activation threshold for physical constraints,
and (iv) theorem-guided redundancy detection and token-level penalization.

\section{Support Exclusion Theorem: Theory and Practice}
\label{app:therom}
\subsection{Preliminaries: Empirical Function Space and Orthogonality}

\begin{definition}[Basis functions / dictionary]
Let $\mathcal{X}\subseteq\mathbb{R}^d$ be the domain, and let
$\{\phi_j:\mathcal{X}\to\mathbb{R}\}_{j\in\mathcal{S}}$
be a collection of candidate basis (dictionary) functions indexed by $\mathcal{S}$.
For any subset $\mathcal{K}\subseteq\mathcal{S}$, denote the corresponding model subspace by
\[
\mathrm{span}\{\phi_j:j\in\mathcal{K}\}.
\]
\end{definition}

\begin{definition}[Target function $f^{*}$]
Assume the ground-truth target function admits a sparse expansion over the dictionary:
\[
f^{*}(x)=\sum_{j\in\mathcal{S}'} a_j\,\phi_j(x),
\]
where $\mathcal{S}'\subseteq\mathcal{S}$ is the true support set (indices of nonzero terms) and satisfies $|\mathcal{S}'|\le M$.
Moreover, the true coefficients are bounded as
\[
A \le |a_j| \le B,\qquad \forall j\in\mathcal{S}'.
\]
\end{definition}

\begin{definition}[Empirical inner product and empirical norm]
Given a finite dataset $D=\{x_n\}_{n=1}^N\subset\mathcal{X}$, for any two functions $f,g:\mathcal{X}\to\mathbb{R}$,
define the empirical inner product by
\[
\langle f,g\rangle_D := \frac{1}{N}\sum_{n=1}^N f(x_n)\,g(x_n),
\]
and the induced empirical norm by
\[
\|f\|_D := \sqrt{\langle f,f\rangle_D}.
\]
The empirical function space
\[
\mathcal{F}_D := \{(f(x_1),\ldots,f(x_N)):\ f:\mathcal{X}\to\mathbb{R}\}
\]
is a finite-dimensional inner-product space under $\langle\cdot,\cdot\rangle_D$.
\end{definition}

\begin{definition}[Empirical Orthogonality]
Two functions $f,g\in\mathcal{F}_D$ are empirically orthogonal if $\langle f,g\rangle_D = 0$.
\end{definition}

\noindent\textbf{Convention.}
Unless stated otherwise, throughout this appendix we write $\langle\cdot,\cdot\rangle$ and $\|\cdot\|$
to mean the empirical inner product $\langle\cdot,\cdot\rangle_D$ and the empirical norm $\|\cdot\|_D$.

\subsection{Proof of the Support Exclusion Theorem}

\begin{theorem}[Support Exclusion Theorem]
Let $\mathcal{K}\subseteq\mathcal{S}$ be a candidate skeleton (active set).
Consider the empirical least-squares fit over $\mathcal{K}$:
\[
b \in \arg\min_{\{b_j\}_{j\in\mathcal{K}}}\left\| f^{*}-\sum_{j\in\mathcal{K}} b_j \phi_j \right\|^2.
\]
Define the empirical Gram matrix $G\in\mathbb{R}^{|\mathcal{S}|\times|\mathcal{S}|}$ by
\[
G_{ij}:=\langle \phi_i,\phi_j\rangle,\qquad i,j\in\mathcal{S},
\]
and assume $G_{ii}>0$.
For any $i\in\mathcal{S}$, define
\[
T_{ij}:=\frac{G_{ji}}{G_{ii}},\qquad j\in\mathcal{S}.
\]
Further define the external-candidate vector
\[
u_i := \big(|T_{i\ell}|\big)_{\ell\in\mathcal{S}\setminus\mathcal{K}}
\in \mathbb{R}^{|\mathcal{S}\setminus\mathcal{K}|},
\]
and let $s(1)\ge s(2)\ge \cdots \ge s(|\mathcal{S}\setminus\mathcal{K}|)$ be the entries of $u_i$ sorted in
non-increasing order.
If for some $i\in\mathcal{K}$,
\[
|b_i|
<
A-\left(
\underbrace{\sum_{j\in\mathcal{K},\,j\ne i}(B+|b_j|)\,|T_{ij}|}_{\textnormal{Internal Interference}}
+
\underbrace{B\sum_{k=1}^{|\mathcal{S}\setminus\mathcal{K}|} s(k)}_{\textnormal{External Interference}}
\right),
\]
then $i\notin\mathcal{S}'$.
\end{theorem}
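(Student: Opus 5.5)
The plan is to argue by contraposition. Assume $i\in\mathcal{K}\cap\mathcal{S}'$, and show that $|b_i|$ is then at least the right-hand side of the stated threshold. This contradicts the hypothesis, so $i\notin\mathcal{S}'$.

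The only tool needed is the first-order optimality condition of the empirical least-squares problem. Any minimizer $b$, not necessarily unique, makes the residual $r:=f^{*}-\sum_{j\in\mathcal{K}}b_j\phi_j$ empirically orthogonal to every $\phi_i$ with $i\in\mathcal{K}$. So $\langle r,\phi_i\rangle=0$. Substituting $f^{*}=\sum_{j\in\mathcal{S}'}a_j\phi_j$ and using the Gram entries gives
\[
\sum_{j\in\mathcal{K}} b_j G_{ji}=\sum_{j\in\mathcal{S}'} a_j G_{ji}.
\]
Dividing by $G_{ii}>0$ turns this into an identity in the normalized coefficients $T_{ij}=G_{ji}/G_{ii}$, with $T_{ii}=1$:
\[
b_i+\sum_{j\in\mathcal{K},\,j\neq i} b_j T_{ij}= a_i+\sum_{j\in\mathcal{S}',\,j\ne i} a_j T_{ij}.
\]
Here the assumption $i\in\mathcal{S}'$ is used, so that $a_i$ appears isolated on the right.

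Next I would split the sum over $\mathcal{S}'\setminus\{i\}$ into an internal part, indices in $\mathcal{K}\setminus\{i\}$, and an external part, indices in $\mathcal{S}'\setminus\mathcal{K}$. Solving for $a_i$ and applying the triangle inequality together with $|a_j|\le B$ gives
\[
|a_i|\le |b_i|+\sum_{j\in\mathcal{K},\,j\neq i}\big(|b_j|+B\,\mathbbm{1}[j\in\mathcal{S}']\big)|T_{ij}| + B\sum_{\ell\in\mathcal{S}'\setminus\mathcal{K}}|T_{i\ell}|.
\]
The internal term is bounded by $\sum_{j\in\mathcal{K},j\ne i}(B+|b_j|)|T_{ij}|$ by dropping the indicator.

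For the external term, note that $\mathcal{S}'\setminus\mathcal{K}$ is a subset of $\mathcal{S}\setminus\mathcal{K}$. Since $i\in\mathcal{S}'\cap\mathcal{K}$, this subset has at most $\min(M-1,|\mathcal{S}\setminus\mathcal{K}|)=m$ elements. A sum of $|T_{i\ell}|$ over at most $m$ indices is bounded by the sum of the $m$ largest entries $s(1)+\dots+s(m)$. This yields the version stated in Theorem~\ref{thm:support_exclusion}. It is also trivially bounded by the sum over all entries of $u_i$, which yields the appendix version.

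Combining these bounds with $A\le|a_i|$ gives $|b_i|\ge A-(\text{Internal}+\text{External})$, which is the required contradiction. The main point needing care is the bookkeeping in the splitting step. One must make sure $i$ is excluded from both sums and that no index of $\mathcal{S}'$ is double-counted between the internal and external parts. One must also justify the cardinality bound $|\mathcal{S}'\setminus\mathcal{K}|\le M-1$, which is what permits the sharper top-$m$ external term. Everything else is the normal equations and the triangle inequality.
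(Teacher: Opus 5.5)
Your proposal is correct and follows essentially the same route as the paper. Both arguments use the normal equations from least-squares optimality, divide by $G_{ii}$, split $\mathcal{S}'\setminus\{i\}$ into in-skeleton and out-of-skeleton parts, apply the triangle inequality with $|a_j|\le B$, use the top-$m$ bound from $|\mathcal{S}'\setminus\mathcal{K}|\le M-1$, and reach a contradiction with $|a_i|\ge A$. The only differences are cosmetic: you bound $|a_i|$ directly where the paper bounds $|a_i-b_i|$ and applies the reverse triangle inequality, and your remark that the full sum over the entries of $u_i$ dominates the top-$m$ sum makes explicit how the argument covers the appendix statement as written, a step the paper leaves implicit.
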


\begin{proof}
\textbf{Step 1: Empirical orthogonality (first-order optimality of least squares).}
Let
\[
g(x):=\sum_{j\in\mathcal{K}} b_j \phi_j(x),\qquad r(x):=f^{*}(x)-g(x).
\]
The objective is $\|r\|^2=\langle r,r\rangle$.
For any $i\in\mathcal{K}$, the first-order optimality condition yields
\[
0=\frac{\partial}{\partial b_i}\|r\|^2 = -2\langle r,\phi_i\rangle
\quad\Longrightarrow\quad
\langle r,\phi_i\rangle=0.
\]
Hence, the residual $r$ is empirically orthogonal to $\mathrm{span}\{\phi_i:i\in\mathcal{K}\}$.

\textbf{Step 2: Set decomposition.}
Define three disjoint sets
\[
\mathcal{T}:=\mathcal{K}\cap\mathcal{S}',\qquad
\mathcal{F}:=\mathcal{K}\setminus\mathcal{S}',\qquad
\mathcal{R}:=\mathcal{S}'\setminus\mathcal{K}.
\]
Then $\mathcal{K}=\mathcal{T}\cup\mathcal{F}$ and $\mathcal{S}'=\mathcal{T}\cup\mathcal{R}$.
Expanding the residual gives
\[
r
=
\sum_{j\in\mathcal{T}} (a_j-b_j)\phi_j
+\sum_{j\in\mathcal{R}} a_j\phi_j
-\sum_{j\in\mathcal{F}} b_j\phi_j.
\]

\textbf{Step 3: An exact coefficient identity for $i\in\mathcal{T}$.}
Fix any $i\in\mathcal{T}\subseteq\mathcal{K}$. Using $\langle r,\phi_i\rangle=0$ and writing
$G_{ji}=\langle\phi_j,\phi_i\rangle$, we obtain
\[
(a_i-b_i)G_{ii}
+\sum_{j\in\mathcal{T},\,j\ne i}(a_j-b_j)G_{ji}
+\sum_{j\in\mathcal{R}} a_j G_{ji}
-\sum_{j\in\mathcal{F}} b_j G_{ji}
=0.
\]
Dividing by $G_{ii}>0$ and using $T_{ij}=G_{ji}/G_{ii}$ yields
\[
a_i-b_i
=
-\sum_{j\in\mathcal{T},\,j\ne i}(a_j-b_j)T_{ij}
-\sum_{j\in\mathcal{R}} a_j T_{ij}
+\sum_{j\in\mathcal{F}} b_j T_{ij}.
\]

\textbf{Step 4: Upper bound via internal and external interference.}
Taking absolute values and applying the triangle inequality gives
\[
|a_i-b_i|
\le
\sum_{j\in\mathcal{T},\,j\ne i}|a_j-b_j|\,|T_{ij}|
+\sum_{j\in\mathcal{R}} |a_j|\,|T_{ij}|
+\sum_{j\in\mathcal{F}} |b_j|\,|T_{ij}|.
\]
For $j\in\mathcal{T}\subseteq\mathcal{S}'$, we have $|a_j|\le B$ and thus
\[
|a_j-b_j|\le |a_j|+|b_j|\le B+|b_j|.
\]
Therefore,
\[
|a_i-b_i|
\le
\sum_{j\in\mathcal{T},\,j\ne i}(B+|b_j|)\,|T_{ij}|
+\sum_{j\in\mathcal{F}} |b_j|\,|T_{ij}|
+ B\sum_{j\in\mathcal{R}} |T_{ij}|.
\]
Since $|b_j|\le B+|b_j|$ for all $j\in\mathcal{F}$, we further have
\[
\sum_{j\in\mathcal{F}} |b_j|\,|T_{ij}|
\le
\sum_{j\in\mathcal{F}} (B+|b_j|)\,|T_{ij}|.
\]
Combining $\mathcal{T}\setminus\{i\}$ and $\mathcal{F}$ yields the internal-interference term:
\[
|a_i-b_i|
\le
\underbrace{\sum_{j\in\mathcal{K},\,j\ne i}(B+|b_j|)\,|T_{ij}|}_{\textnormal{Internal Interference}}
+
B\sum_{j\in\mathcal{R}} |T_{ij}|.
\]
For the external term, note that $\mathcal{R}\subseteq \mathcal{S}\setminus\mathcal{K}$, and the entries
$\{|T_{i\ell}|\}_{\ell\in\mathcal{S}\setminus\mathcal{K}}$ sorted in non-increasing order are
$s(1)\ge\cdots\ge s(|\mathcal{S}\setminus\mathcal{K}|)$. Hence, for any subset $\mathcal{R}$,
\[
\sum_{j\in\mathcal{R}} |T_{ij}|
\le
\sum_{k=1}^{|\mathcal{R}|} s(k).
\]
Assuming $|\mathcal{S}'|\le M$ and that the current support is partially correct, i.e., $\mathcal{K}\cap\mathcal{S}'\neq\emptyset$, we have
$|\mathcal{R}|=|\mathcal{S}'\setminus\mathcal{K}|\le |\mathcal{S}'|-1\le M-1$.
Define $m:=\min\!\big(M-1,\ |\mathcal{S}\setminus\mathcal{K}|\big)$. Since also $|\mathcal{R}|\le |\mathcal{S}\setminus\mathcal{K}|$, it follows that $|\mathcal{R}|\le m$, and thus
\[
\sum_{j\in\mathcal{R}} |T_{ij}|
\le
\sum_{k=1}^{|\mathcal{R}|} s(k)
\le
\sum_{k=1}^{m} s(k).
\]
Thus,
\[
|a_i-b_i|
\le
\sum_{j\in\mathcal{K},\,j\ne i}(B+|b_j|)\,|T_{ij}|
+
B\sum_{k=1}^{m} s(k).
\]

\textbf{Step 5: Lower bound when $i\in\mathcal{S}'$.}
If $i\in\mathcal{S}'$, then $|a_i|\ge A$. By the reverse triangle inequality,
\[
|a_i-b_i|
\ge
\big||a_i|-|b_i|\big|
\ge
A-|b_i|.
\]

\textbf{Step 6: Contradiction.}
Assume, for contradiction, that there exists $i\in\mathcal{K}\cap\mathcal{S}'$ satisfying
\[
|b_i|
<
A-\left(
\sum_{j\in\mathcal{K},\,j\ne i}(B+|b_j|)\,|T_{ij}|
+
B\sum_{k=1}^{m} s(k)
\right).
\]
Rearranging yields
\[
A-|b_i|
>
\sum_{j\in\mathcal{K},\,j\ne i}(B+|b_j|)\,|T_{ij}|
+
B\sum_{k=1}^{m} s(k).
\]
However, Step~5 implies $|a_i-b_i|\ge A-|b_i|$, while the bound above implies
\[
|a_i-b_i|
\le
\sum_{j\in\mathcal{K},\,j\ne i}(B+|b_j|)\,|T_{ij}|
+
B\sum_{k=1}^{m} s(k),
\]
which is a contradiction. Therefore such an $i$ cannot belong to $\mathcal{S}'$, i.e., $i\notin\mathcal{S}'$.
\end{proof}

\section{Supplementary Diagnostics and Analyses}
\subsection{Remaining Iteration Curves on Smaller Backbones}
\label{app:extra_curves}
In the main text, we report iteration curves for the 8B backbone. Here we provide the remaining curves for 3B and 1B (Figure~\ref{fig:iter_small_backbones}), which corroborate the observations in Section~\ref{4.4}.

\begin{figure}[t]
\centering
\begin{subfigure}{\columnwidth}
  \centering
  \includegraphics[width=\columnwidth]{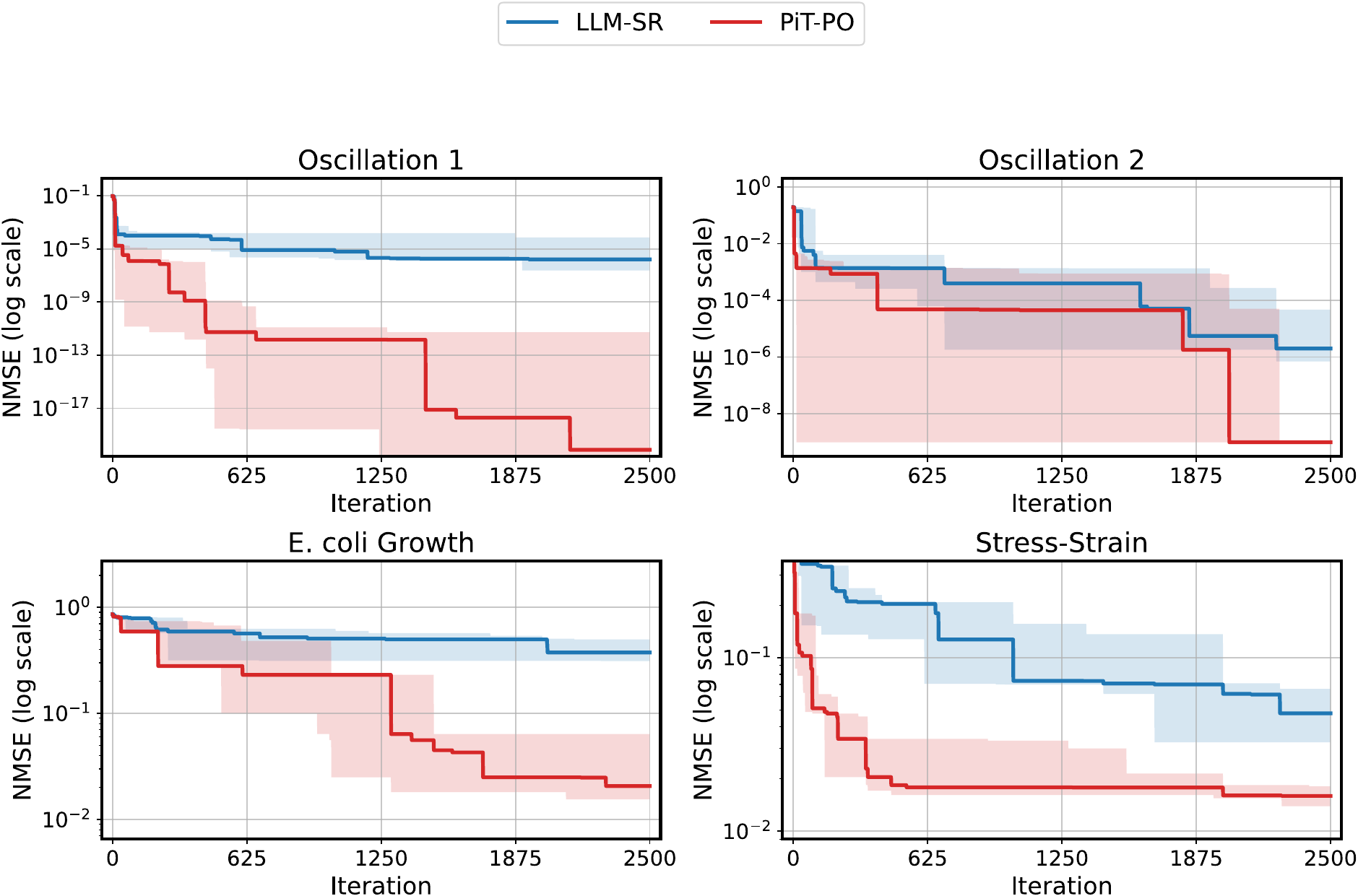} 
  \caption{Llama-3B}
  \label{fig:iter_3b}
\end{subfigure}\par
\vspace{0.35em}
\begin{subfigure}{\columnwidth}
  \centering
  \includegraphics[width=\columnwidth]{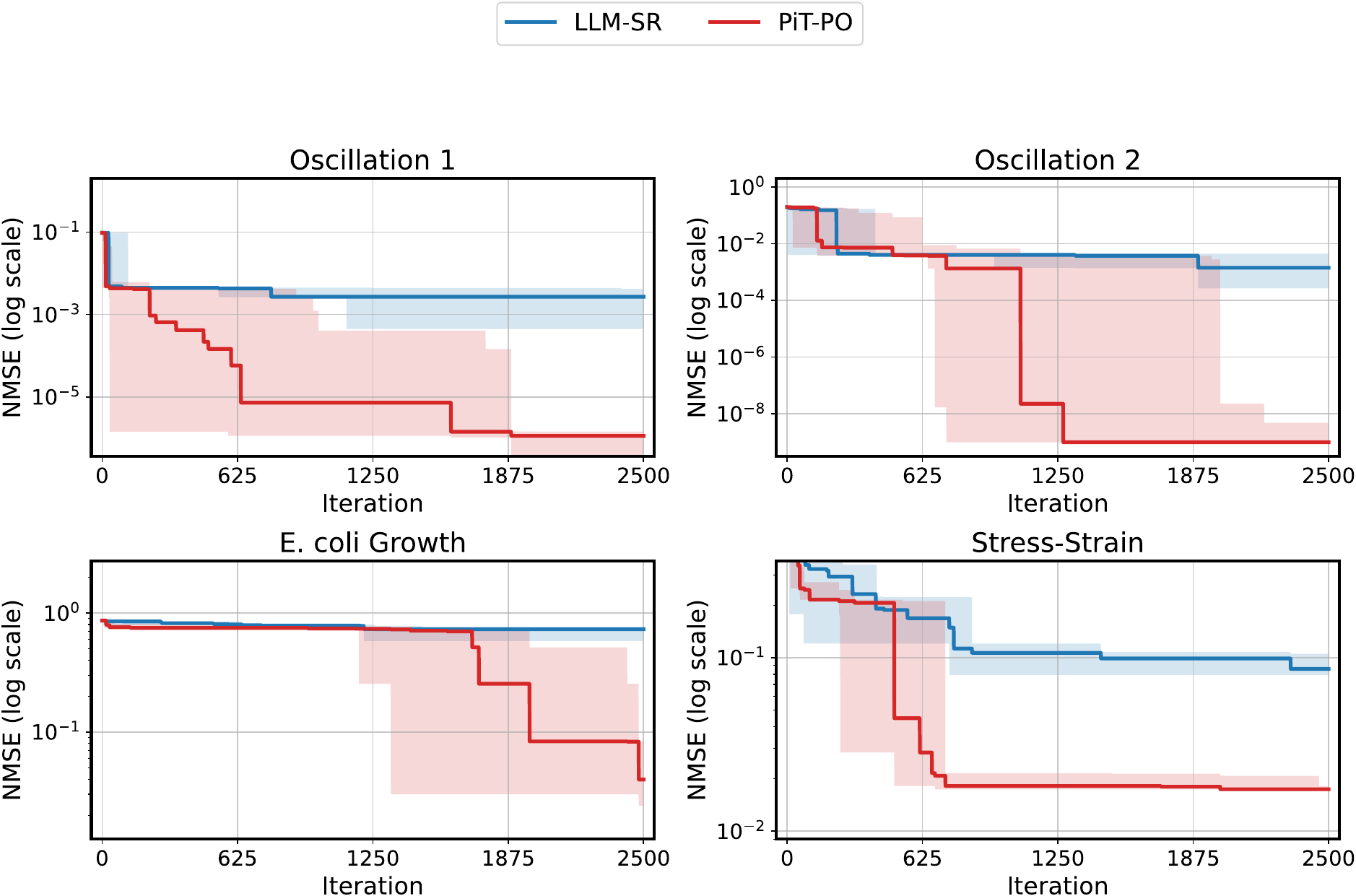} 
  \caption{Llama-1B}
  \label{fig:iter_1b}
\end{subfigure}
\caption{NMSE versus iteration on smaller backbones, consistent with the trends in the 8B setting.}
\label{fig:iter_small_backbones}
\end{figure}

\begin{figure}[t]
\centering

\begin{subfigure}{\columnwidth}
  \centering
  \includegraphics[width=\columnwidth]{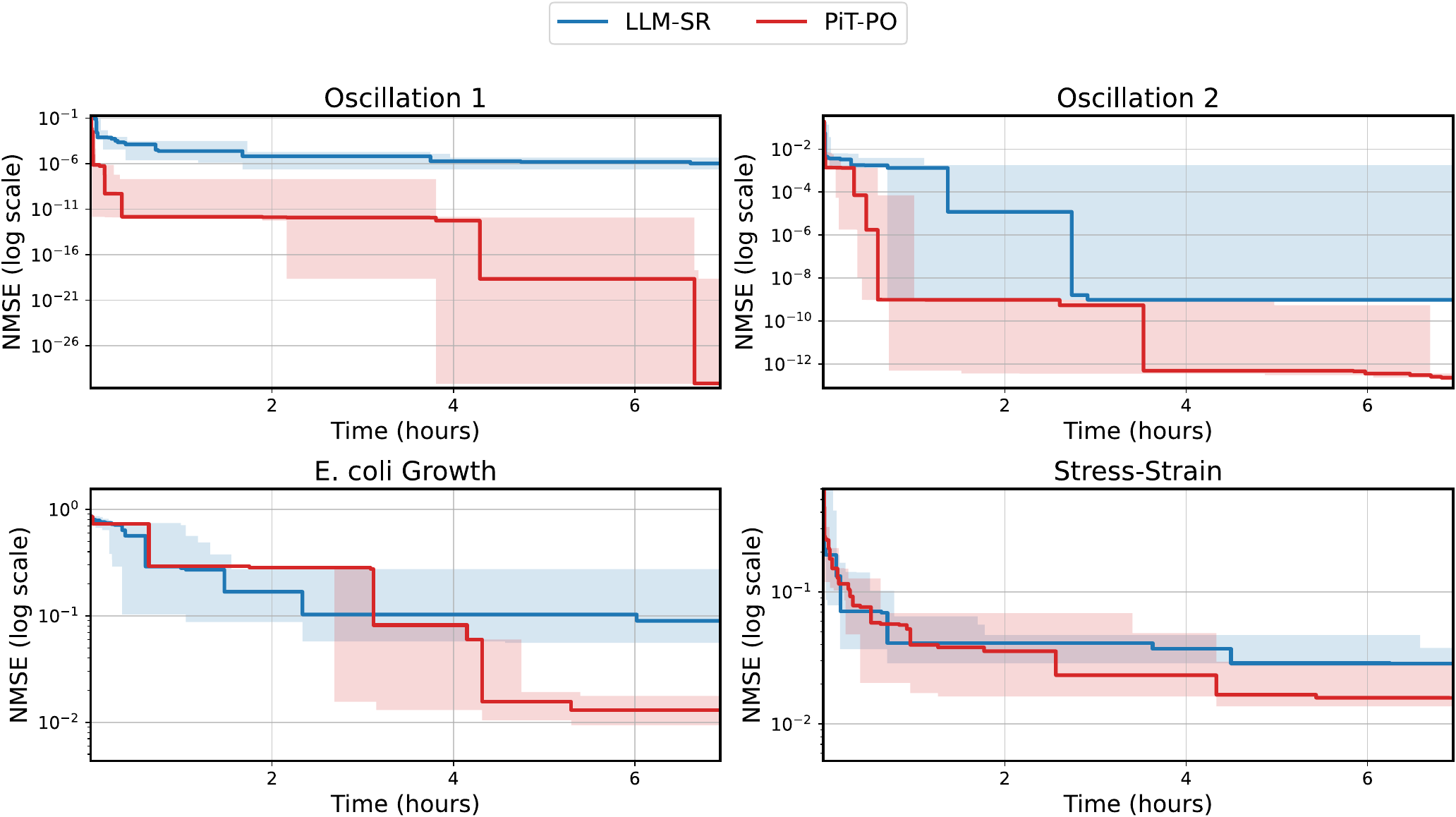}
  \caption{Llama-3.1-8B}
\end{subfigure}\par
\vspace{0.4em}

\begin{subfigure}{\columnwidth}
  \centering
  \includegraphics[width=\columnwidth]{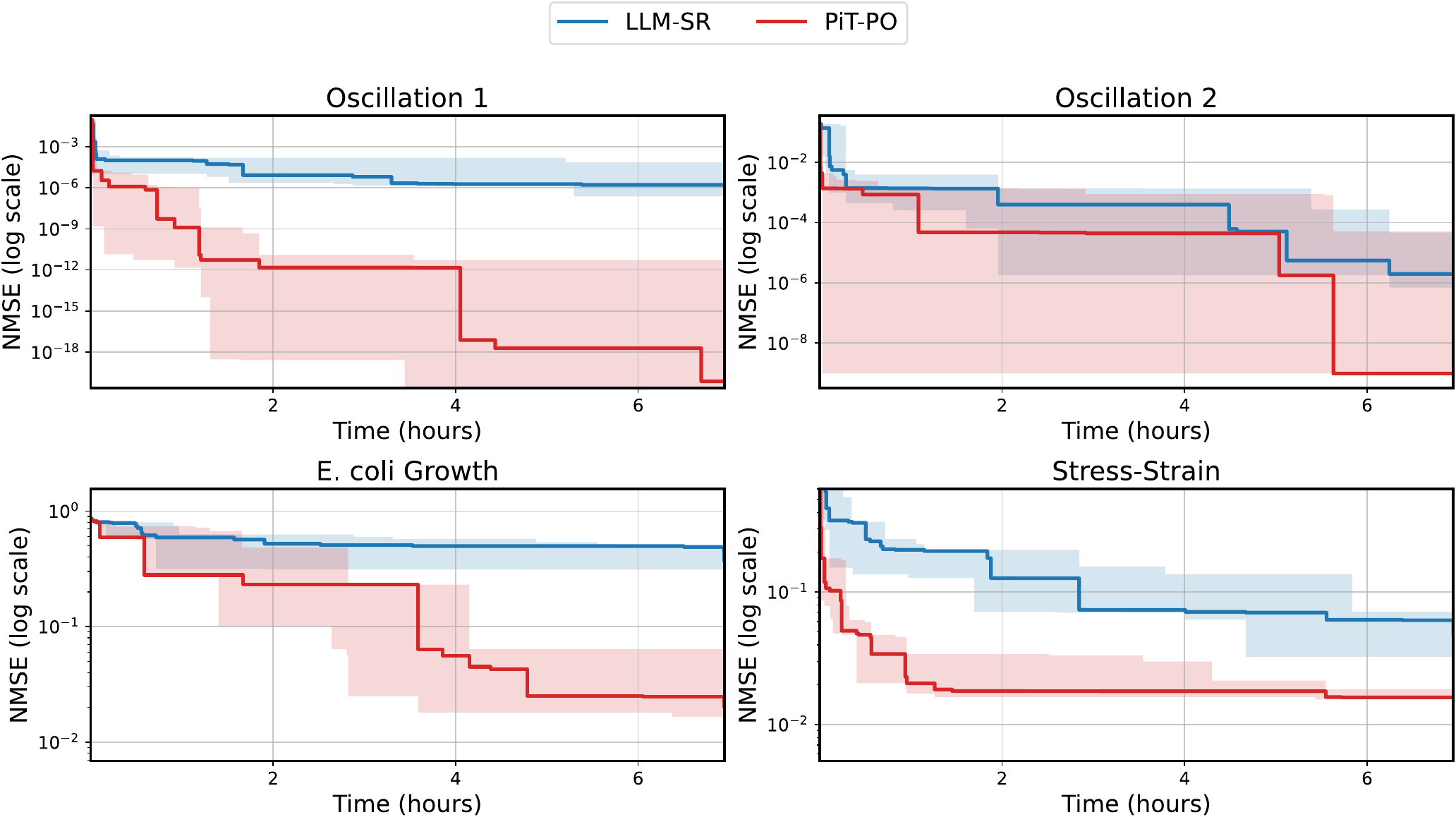}
  \caption{Llama-3.2-3B}
\end{subfigure}\par
\vspace{0.4em}

\begin{subfigure}{\columnwidth}
  \centering
  \includegraphics[width=\columnwidth]{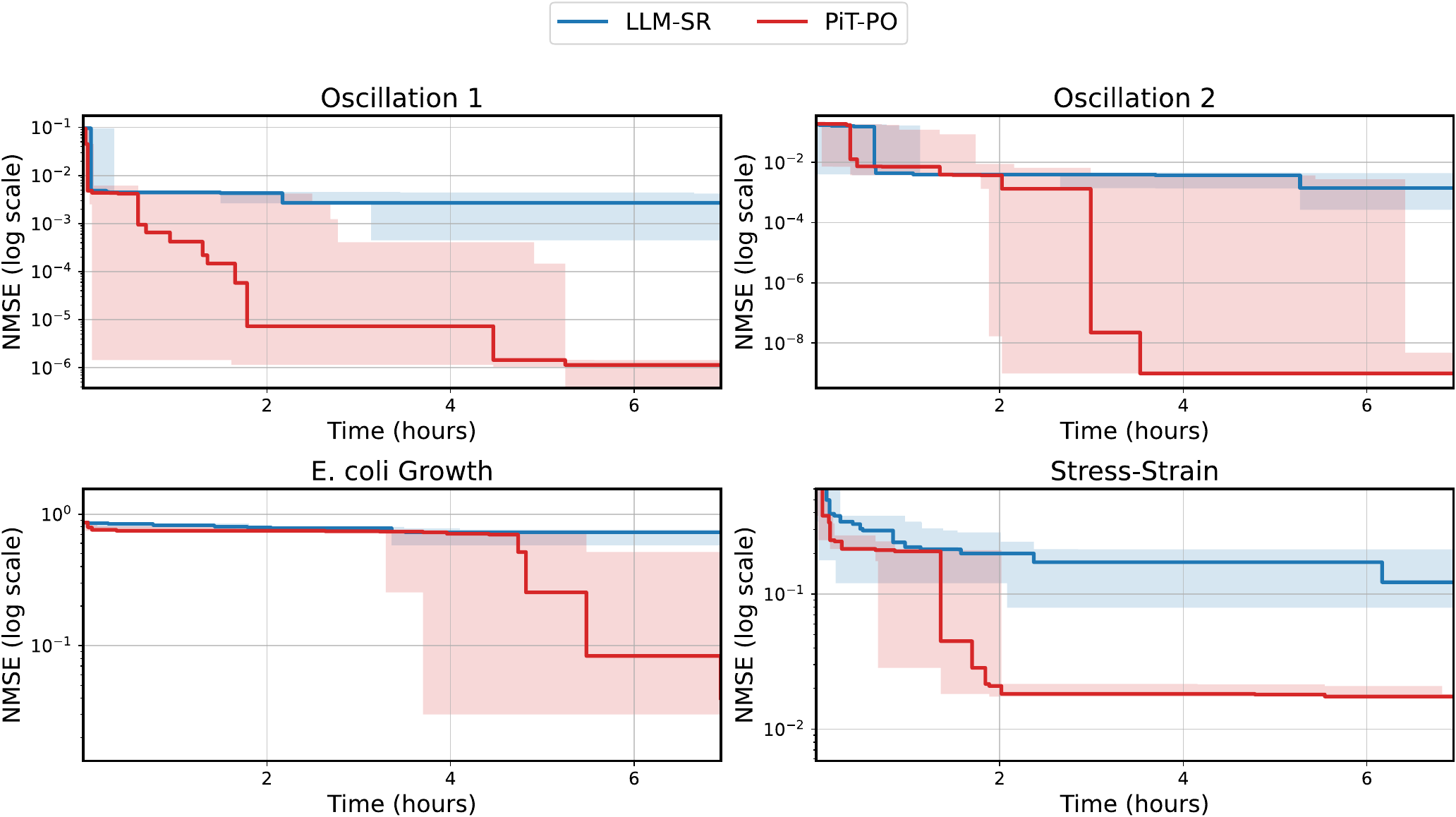}
  \caption{Llama-3.2-1B}
\end{subfigure}

\caption{Wall-clock efficiency under a fixed 25{,}000-second budget.}
\label{fig:time_cost_onecol_subfig}
\end{figure}

\subsection{Time-Cost Analysis under a Fixed Wall-Clock Budget}
\label{app:time_cost}
To complement the iteration-based efficiency analysis in Section~\ref{4.4}, we evaluate search efficiency under a strict wall-clock constraint. We fix the total runtime of each method to 25{,}000 seconds ($\approx$ 6.9 hours) and track the best-so-far NMSE as a function of elapsed time (Figure~\ref{fig:time_cost_onecol_subfig}). This protocol directly answers a practical question: given the same compute time, which method returns a more accurate recovered equation?

Across all three backbones (Llama-3.1-8B, Llama-3.2-3B, and Llama-3.2-1B), PiT-PO consistently yields lower NMSE trajectories than LLM-SR under the same 25{,}000-second budget. In line with the behavior discussed in Section~\ref{4.4}, the curves exhibit a characteristic divergence in the low-error regime: LLM-SR often plateaus after an initial reduction, while PiT-PO continues to realize step-wise decreases over time, indicating more effective late-stage credit assignment and continued progress rather than stagnation.

The advantage is most pronounced on the two oscillator systems. For 8B and 3B, PiT-PO achieves rapid early improvements and maintains additional phase transitions later in the run, reaching substantially lower NMSE within the same wall-clock budget. For the 1B model, both methods converge more slowly overall, yet PiT-PO still more reliably escapes stagnation and attains markedly lower final NMSE, suggesting that the proposed in-search tuning remains effective even in the small-model regime.

Similar trends hold for the E.\ coli Growth and Stress--Strain tasks, where progress is typically more gradual. Under equal wall-clock time, PiT-PO achieves a lower final NMSE and exhibits clearer late-stage improvements, whereas LLM-SR tends to flatten earlier. Importantly, these gains are obtained without increasing runtime: despite the additional overhead introduced by in-search policy optimization, PiT-PO delivers consistently better return-on-compute within a fixed time budget, validating the practical efficiency of the proposed approach.

\begin{table}[t]
\centering
\setlength{\tabcolsep}{5pt}
\renewcommand{\arraystretch}{1.12}
\resizebox{\columnwidth}{!}{%
\begin{tabular}{lcccc}
\hline
\multirow{2}{*}{\textbf{Method}} &
\multicolumn{1}{c}{\textbf{Oscillator1}} &
\multicolumn{1}{c}{\textbf{Oscillator2}} &
\multicolumn{1}{c}{\textbf{E. coli growth}} &
\multicolumn{1}{c}{\textbf{Stress--Strain}} \\
\cline{2-5}
& {NMSE$_{\text{OOD}}\downarrow$} & {NMSE$_{\text{OOD}}\downarrow$} & {NMSE$_{\text{OOD}}\downarrow$} & {NMSE$_{\text{OOD}}\downarrow$} \\
\hline
LLM-SR (Llama-3.1-8B) & 0.0114 & 8.05e{-10} & 0.1676    & 0.1026 \\
LLM-SR (Llama-3.2-3B) & 0.0191 & 5.56e{-5}  & 809.94  & 0.0264 \\
LLM-SR (Llama-3.2-1B) & 0.1182 & 7.57e{-3}  & 1888.5 & 0.2689 \\
\hline
PiT-PO (Llama-3.1-8B) & \cellcolor[HTML]{D9D9FF}\textbf{1.63e{-30}} & \cellcolor{gray!20}\textbf{1.36e{-11}} & 0.1163 & 0.0163 \\
PiT-PO (Llama-3.2-3B) & \cellcolor[HTML]{D9D9FF}\textbf{2.63e{-30}} & 8.52e{-10} & \cellcolor{gray!20}\textbf{0.0237} & 0.0144 \\
PiT-PO (Llama-3.2-1B) & 5.99e{-5}  & 8.58e{-10} & 0.3462 & \cellcolor{gray!20}\textbf{0.0124} \\
\hline
\end{tabular}}
\caption{OOD evaluation: NMSE on the OOD split.}
\label{tab:ood_nmse_only}
\end{table}

\subsection{OOD Evaluation (NMSE$_{\text{OOD}}$)}
We report out-of-distribution performance using NMSE on the OOD split (NMSE$_{\text{OOD}}$), summarized in Table~\ref{tab:ood_nmse_only}. Across all three backbones (8B/3B/1B) and all four tasks, PiT-PO achieves lower NMSE$_{\text{OOD}}$ than LLM-SR under the same setting. In particular, PiT-PO maintains extremely small OOD errors on the two oscillator systems, and yields substantially reduced OOD NMSE on E. coli growth and Stress--Strain, where LLM-SR exhibits noticeably larger errors. These results provide consistent evidence that the improvements observed in the main text persist under OOD evaluation.

\subsection{Equation Iterative Trajectories}
\label{app:eq_case_study}
\begin{figure}[t]
    \centering
    \begin{subfigure}{\columnwidth}
        \centering
        \includegraphics[width=\columnwidth]{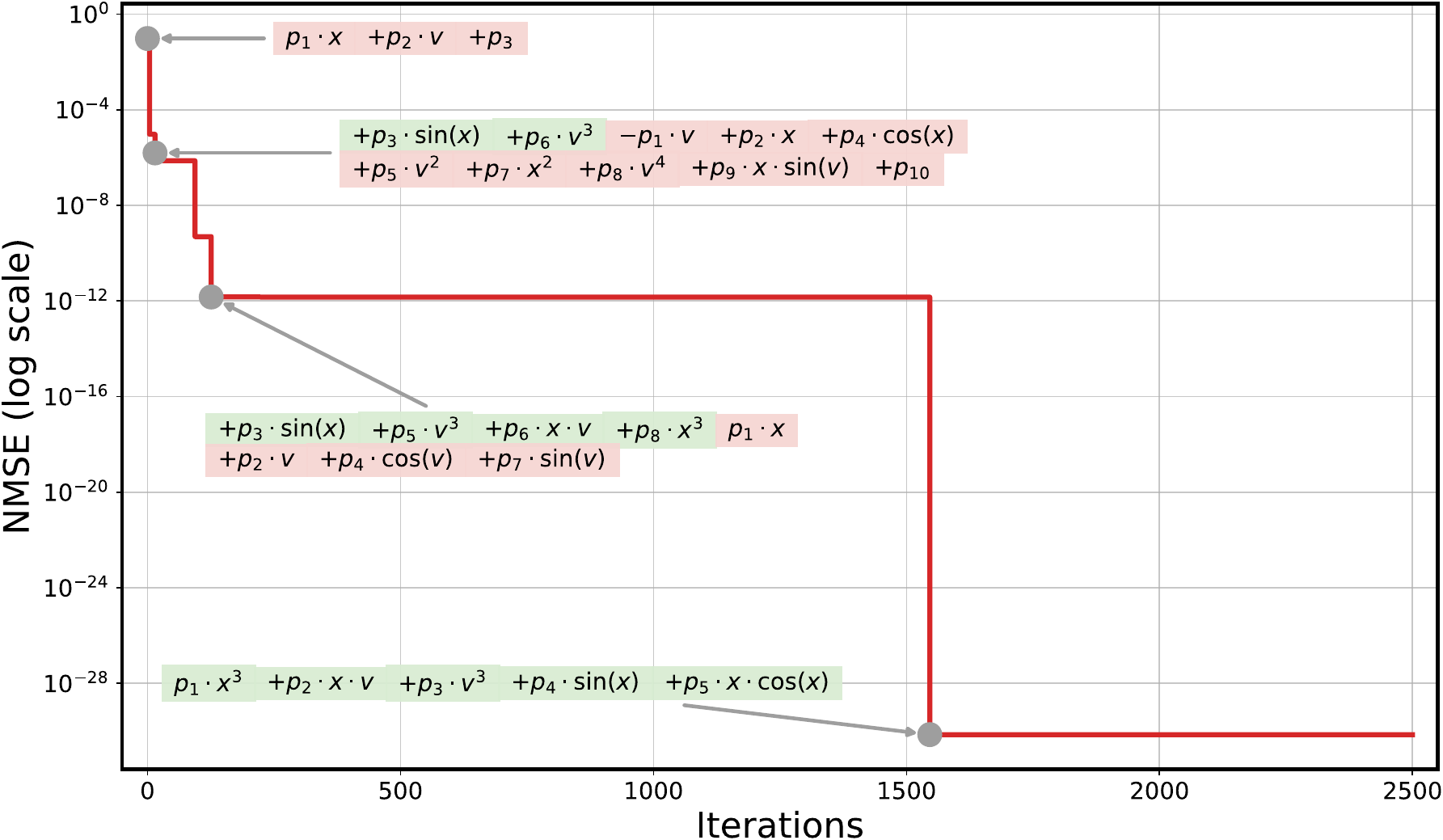}
        \caption{PiT-PO.}
        \label{fig:traj_osc1_pitpo}
    \end{subfigure}
    \vspace{0.6em}
    \begin{subfigure}{\columnwidth}
        \centering
        \includegraphics[width=\columnwidth]{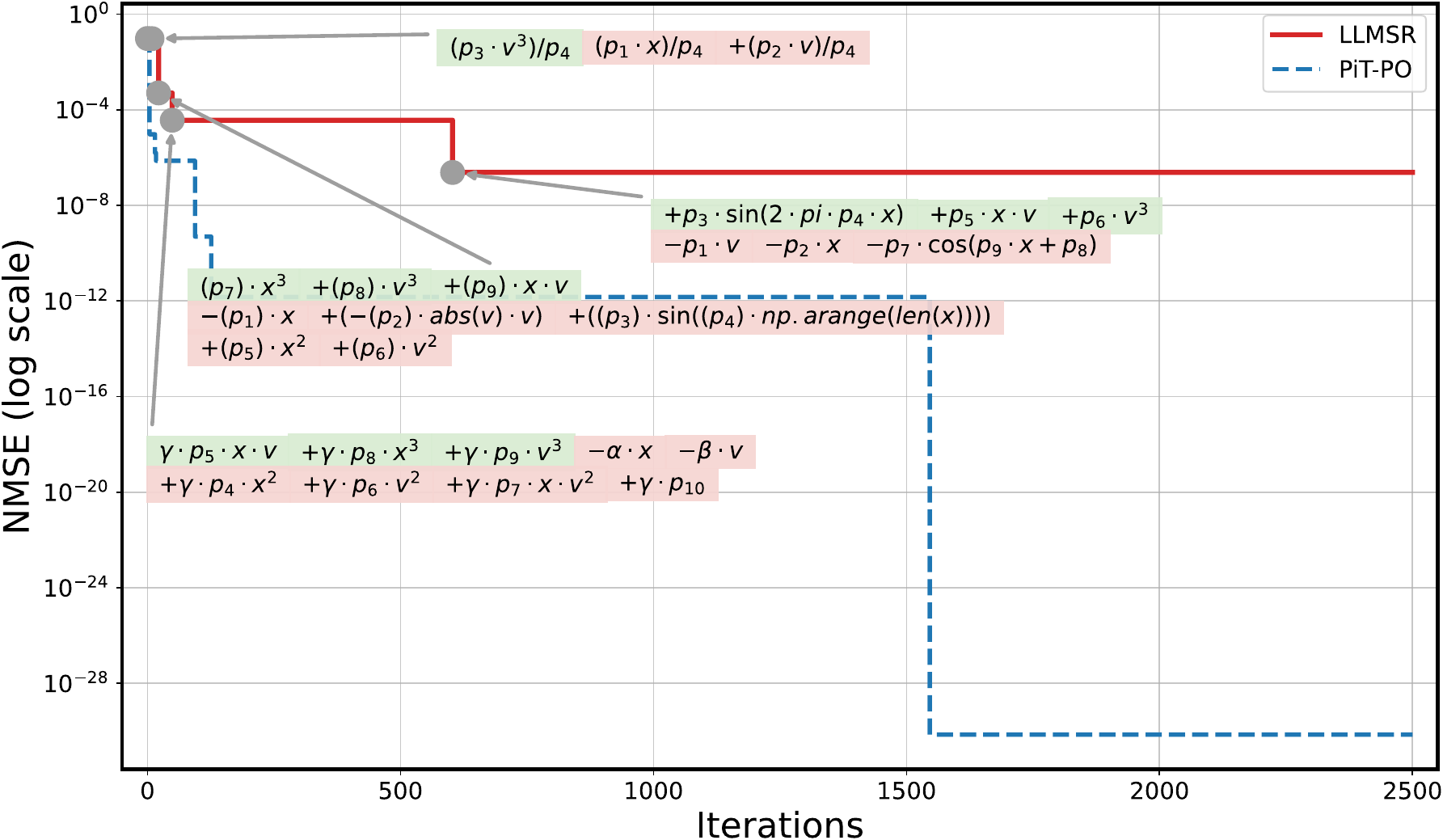}
        \caption{LLM-SR.}
        \label{fig:traj_osc1_llmsr}
    \end{subfigure}
    \caption{Iterative trajectories of PiT-PO and LLM-SR on Oscillator1 with Llama-3.1-8B-Instruct. The NMSE curves (log scale) report the best-so-far value over iterations. Annotated equation snapshots illustrate how the discovered structure evolves along the search trajectory. The green-shaded terms are correct, whereas the red-shaded terms are erroneous.}
    \label{fig:eq_iter_traj_osc1}
\end{figure}

Figure~\ref{fig:eq_iter_traj_osc1} visualizes how symbolic structures evolve along the iterative search. For PiT-PO (Figure~\ref{fig:traj_osc1_pitpo}), the trajectory reveals a clear process of progressively excluding spurious terms. Under the theorem-guided mathematical constraints and the token-aware policy update, tokens marked as redundant are assigned token-level penalties; as this penalty signal repeatedly accumulates across many sampled equations during optimization, such spurious components become progressively less preferred and are unlikely to reappear in later generations. As a result, the search does not merely fit numerically; instead, it repeatedly transitions to cleaner structures, and each step-wise NMSE drop aligns with the removal of nuisance terms and a move toward a more parsimonious functional form. This ``error-term elimination'' behavior effectively shrinks the search space and substantially strengthens the exploration capability.

In contrast, LLM-SR (Figure~\ref{fig:traj_osc1_llmsr}) exhibits much weaker structural correction. Since its guidance is primarily prompt-level rather than parameter-level, terms that appear early---even if structurally incorrect---tend to persist and continue influencing subsequent proposals. Consequently, the search frequently enters a plateau where NMSE stops improving meaningfully: the method keeps revisiting or retaining earlier spurious components instead of systematically pruning them, indicating a stagnation regime with limited ability to reach the correct structure.

\section{\textbf{Datasets}}
\label{appendix:datasets}

\subsection{LLM-SR Suite}
\label{appendix:llmsr-suite}

We use the \textsc{LLM-SR Suite}, which consists of four standard scientific equation discovery tasks spanning physics, biology, and materials science. The suite includes two nonlinear oscillator systems (Oscillation 1/2), an \textit{E. coli} growth model (E. coli Growth), and a temperature-dependent stress--strain dataset (Stress--Strain). The first three tasks are dynamical systems and are simulated over a fixed time horizon, while the last task is a static constitutive relation evaluated on experimental measurements.

\subsubsection{Oscillation 1 (Nonlinear Oscillator)}
Nonlinear oscillators with dissipative and non-polynomial couplings provide a demanding test for recovering interacting terms from trajectories. We define the state as \(x(t)\) and \(v(t)=\dot{x}(t)\), and simulate the following system:
\[
\begin{aligned}
\dot{x} &= v,\\
\dot{v} &= F \sin(\omega x) - \alpha v^3 - \beta x^3 - \gamma x v - x \cos(x).
\end{aligned}
\]
We set \(F=0.8\), \(\alpha=0.5\), \(\beta=0.2\), \(\gamma=0.5\), and \(\omega=1.0\), with initial conditions \(x(0)=0.5\), \(v(0)=0.5\), and simulate over \(t\in[0,50]\).

\subsubsection{Oscillation 2 (Nonlinear Oscillator)}
The second oscillator introduces explicit time forcing and an exponential nonlinearity, leading to a different coupling pattern between \(x\) and \(v\):
\[
\begin{aligned}
\dot{x} &= v,\\
\dot{v} &= F \sin(\omega t) - \alpha v^3 - \beta x v - \delta x \exp(\gamma x).
\end{aligned}
\]
We use \(F=0.3\), \(\alpha=0.5\), \(\beta=1.0\), \(\delta=5.0\), \(\gamma=0.5\), and \(\omega=1.0\). The initial conditions and simulation window are the same as Oscillation 1: \(x(0)=0.5\), \(v(0)=0.5\), \(t\in[0,50]\).

\subsubsection{E. coli Growth (Bacterial Growth)}
This task models the growth rate of \textit{E. coli} as a product of multiplicative factors capturing distinct physiological effects from population size, nutrient availability, temperature, and acidity:
\[
\frac{dB}{dt} = f_B(B)\cdot f_S(S)\cdot f_T(T)\cdot f_{\mathrm{pH}}(\mathrm{pH}),
\]
where \(B\) is bacterial density, \(S\) is nutrient concentration, \(T\) is temperature, and \(\mathrm{pH}\) denotes acidity. In our benchmark, the explicit functional form is:
\[
\resizebox{\columnwidth}{!}{$
\frac{dB}{dt} = \mu_{\text{max}} B \left( \frac{S}{K_S + S} \right)
\left(\frac{\tanh{k(T - x_0)}}{1 + c(T - x_{\text{decay}})^4}\right)
\exp\left(-|\text{pH} - \text{pH}_{\text{opt}}|\right)
\sin\left( \frac{(\text{pH} - \text{pH}_{\text{min}})\pi}{\text{pH}_{\text{max}} - \text{pH}_{\text{min}}} \right)^2.
$}
\]
This specification combines saturation kinetics, sharp temperature modulation, and a structured pH response, yielding a challenging multivariate nonlinear discovery setting.

\subsubsection{Stress--Strain (Material Stress Behavior)}
To assess performance on experimental measurements, we consider tensile stress--strain data for Aluminum 6061-T651 collected at multiple temperatures (from room temperature up to \(300^\circ\text{C}\)). While no single exact governing equation is provided, a commonly used phenomenological approximation for temperature-dependent hardening is:
\[
\sigma = \left( A + B \varepsilon^n \right)
\left( 1 - \left( \frac{T - T_r}{T_m - T_r} \right)^m \right).
\]
where \(\sigma\) is stress, \(\varepsilon\) is strain, \(T\) is temperature, \(T_r\) is a reference temperature, and \(T_m\) is the melting temperature. The coefficients \(A\), \(B\), \(n\), and \(m\) are fitted from data for the given alloy.

\subsection{LLM-SRBench}
\label{appendix:llmsrbench}

We additionally report results on \textsc{LLM-SRBench}, a benchmark specifically constructed to evaluate \emph{data-driven} scientific equation discovery with LLMs while reducing the risk of trivial memorization of canonical textbook formulas. It contains 239 problems spanning four scientific domains (chemistry, biology, physics, and material science), and each task is packaged with (i) a short scientific context describing variables and the target quantity and (ii) tabular numerical samples used for discovery.

\paragraph{Dataset composition}
\textsc{LLM-SRBench} is organized into two complementary subsets.

\textbf{(1) LSR-Transform.} This subset converts well-known physics equations into less-common but analytically equivalent forms by changing the prediction target. Concretely, starting from the original equation, one input variable is chosen as a \emph{pivot} and promoted to the new target; the equation is then symbolically solved (e.g., via \texttt{SymPy}) for that pivot variable, yielding an alternative representation of the same underlying physical law. Only transformations that admit an analytic solution are retained, and the original datapoints are filtered to satisfy the valid domain of the transformed expression (e.g., avoiding invalid denominators or square-root domains). Finally, a new natural-language problem statement is generated to match the transformed input--output specification. This pipeline produces 111 transformed problems. 

\textbf{(2) LSR-Synth.} This subset is designed to test discovery beyond memorized templates by composing equations from both \emph{known} scientific terms and \emph{synthetic} (novel yet plausible) terms. Candidate known/synthetic terms are proposed with an LLM given the domain context, combined into full equations, and then filtered through multiple checks: numerical solvability (e.g., via standard ODE solvers for dynamical systems), novelty assessment in context (using an LLM-based novelty evaluator), and expert plausibility validation. After passing these filters, datapoints are generated for each approved equation, yielding 128 synthetic problems across the same four domains. 

\paragraph{Symbolic Accuracy (SA)}
Besides numeric fit metrics (e.g., \(\mathrm{NMSE}\) and \(\mathrm{Acc}_{all}(\tau)\)), \textsc{LLM-SRBench} emphasizes \emph{symbolic correctness}. Symbolic Accuracy (SA) is computed using an LLM-based equivalence judge (GPT-4o): given a predicted hypothesis and the ground-truth expression, the evaluator first normalizes both by removing extraneous text (e.g., comments) and replacing explicit numeric constants with placeholder parameters, then decides whether there exist parameter values that make the hypothesis mathematically equivalent to the ground truth. SA is the fraction (percentage) of problems judged equivalent under this protocol. The benchmark authors validated this evaluation by comparing against human judgments on 130 sampled problems and reported 94.6\% agreement between GPT-4o and human evaluators.

\section{Turbulence Task Setup: Periodic Hill Flow}
\label{sec:app_turbulence_setup}

\subsection{Dataset}
\label{subsec:turb_dataset}

We consider the periodic hill flow configuration with streamwise periodic boundary conditions and no-slip walls.

In this work, the features selected for the symbolic regression process are computed from the mean strain-rate ($S$)
and rotation-rate ($R$) tensors. Following established practice, these tensors are normalized using
local turbulence scales prior to invariant calculation to ensure well-conditioned inputs:
\begin{align}
S &= \frac{1}{2}\,\frac{1}{\beta^{*}\omega}\Bigl[\nabla \boldsymbol{u} + (\nabla \boldsymbol{u})^{\mathsf{T}}\Bigr], \label{eq:turb_S_def}\\
R &= \frac{1}{2}\,\frac{1}{\beta^{*}\omega}\Bigl[\nabla \boldsymbol{u} - (\nabla \boldsymbol{u})^{\mathsf{T}}\Bigr], \label{eq:turb_R_def}
\end{align}
where $(\cdot)^{\mathsf{T}}$ denotes matrix transposition. The invariants are then computed as
$I_1=\mathrm{Tr}(S^2)$ and $I_2=\mathrm{Tr}(R^2)$.

The dataset provides (i) two invariant features $(I_1, I_2)$ (stored in the \texttt{Lambda} columns),
(ii) tensor bases $\{T^m\}_{m=1}^{3}$ (stored in the first three $3\times 3$ tensors in the \texttt{Tensor} columns),
and (iii) the target anisotropy tensor field $b$ (and its nonlinear correction form used for training, denoted by $b^\perp$),
together with auxiliary fields such as turbulent kinetic energy $k$ when needed by the evaluator.
\subsubsection{Data Collection}
\label{subsubsec:turb_data_collection}

\paragraph{DNS source.}
The DNS data are obtained from the publicly available NASA turbulence modeling resource
(\emph{Turbulence Modeling Resource / TMBWG}) for periodic hills of parameterized geometries.\footnote{
\url{https://tmbwg.github.io/turbmodels/Other_DNS_Data/parameterized_periodic_hills.html}
}

\paragraph{RANS baseline generation (for non-DNS methods).}
For all non-DNS methods, we compute baseline solutions using OpenFOAM with the same initial and boundary conditions as the DNS.
Specifically, we employ the incompressible steady-state solver \texttt{simpleFoam} and apply the turbulence models considered in this paper
(all belonging to RANS and its SST variants).
The solver outputs, at each sampling point and converged iteration/time-step, the required physical quantities for subsequent post-processing.

\paragraph{DNS--RANS grid alignment and target construction.}
To construct training targets consistent with the RANS discretization, we interpolate the DNS fields onto the RANS grid points.
The interpolated DNS quantities are then post-processed \emph{in the CFD workflow} (i.e., as offline preprocessing prior to training) to compute the Reynolds stress tensor, which is packaged into the dataset files as the supervision signal for training.

\paragraph{Feature \& basis construction.}
Following Pope's tensor-basis expansion hypothesis, the Reynolds stress decomposition involves (i) tensor bases and (ii) scalar coefficient functions
of invariant features.
Since our final turbulence closure is embedded as a correction to the baseline \texttt{kOmegaSST} model, we use the physical fields produced by
the baseline \texttt{kOmegaSST} computation for post-processing to obtain the input features and the tensor bases.
These are stored in separate files (features and bases), while the learning target is defined to match the Reynolds-stress-related quantity
described in Section~\ref{subsec:turb_to_sr}.

\subsubsection{Pre-processing}
\label{subsubsec:turb_preprocess}
We normalize the two invariant inputs prior to feeding them into the learned symbolic expressions:
\begin{equation}
\tilde{I} = \tanh\!\left(\frac{I}{2.0}\right), \qquad
\tilde{I}_1 = \tanh\!\left(\frac{I_1}{2.0}\right), \quad
\tilde{I}_2 = \tanh\!\left(\frac{I_2}{2.0}\right),
\label{eq:turb_invariant_norm}
\end{equation}
which maps the raw values into approximately $[-1, 1]$ and improves numerical stability during parameter fitting.

\subsection{Casting Turbulence Closure as a Symbolic Regression Problem}
\label{subsec:turb_to_sr}

\paragraph{Predicting scalar coefficients instead of tensor entries.}
Rather than directly searching over tensor-valued symbolic forms, we let the LLM predict three scalar coefficient
functions $(G^1, G^2, G^3)$, each parameterized as a symbolic expression of only two variables $(I_1, I_2)$.
The evaluator then reconstructs the predicted tensor via the (fixed) tensor bases:
\begin{equation}
\hat{b} \;=\; G^1(I_1,I_2)\,T^1 \;+\; G^2(I_1,I_2)\,T^2 \;+\; G^3(I_1,I_2)\,T^3.
\label{eq:turb_reconstruct_bhat}
\end{equation}
This design compresses the LLM input variable dimension to two, significantly reducing the symbolic search space.

\paragraph{Scoring by MSE in the evaluator.}
Given a candidate hypothesis, we compute the fitting score by the mean squared error between the target tensor
and the reconstructed tensor. In the final evaluator, we additionally scale both prediction and target by the sample-wise
$k$ before error aggregation:
\begin{equation}
\hat{\tau}_{ij}(x_n) = k(x_n)\,\hat{b}_{ij}(x_n), \qquad
\tau_{ij}(x_n) = k(x_n)\,b_{ij}(x_n),
\label{eq:turb_k_scaling}
\end{equation}
We compute the MSE only over a selected subset of tensor components.
This is because the Reynolds-stress (and anisotropy) tensor is symmetric, so only six components are independent,
and in the present periodic hill flow setting two off-plane shear components are typically several orders of magnitude smaller
and are commonly neglected in practical turbulence modeling.
Following this standard practice, we evaluate the error only on

\begin{equation}
\Omega = \{(0,0),(0,1),(1,1),(2,2)\},
\label{eq:turb_mse_selected}
\end{equation}

\begin{equation}
 \qquad
\mathrm{MSE} = \frac{1}{|\Omega|}\sum_{(i,j)\in\Omega}\frac{1}{N}\sum_{n=1}^{N}
\Bigl(\tau_{ij}(x_n)-\hat{\tau}_{ij}(x_n)\Bigr)^2.
\end{equation}
The resulting MSE is used as the numerical fitting score for the symbolic regression loop.

\paragraph{Prompt and fairness across methods.}
We provide the same problem specification text (task description, variable definitions, and operator constraints)
to both \textsc{LLM-SR} and \textsc{PiT-PO} to ensure a fair comparison. An example prompt is shown in
Figure~\ref{fig:spec1}--\ref{fig:spec3}. In our current experiments, we also explicitly restrict the hypothesis space by
asking the LLM to avoid trigonometric functions.

\begin{figure}[t]
  \centering
  \adjustbox{max totalsize={0.45\textwidth}{0.85\textheight},center}{%
    \includegraphics{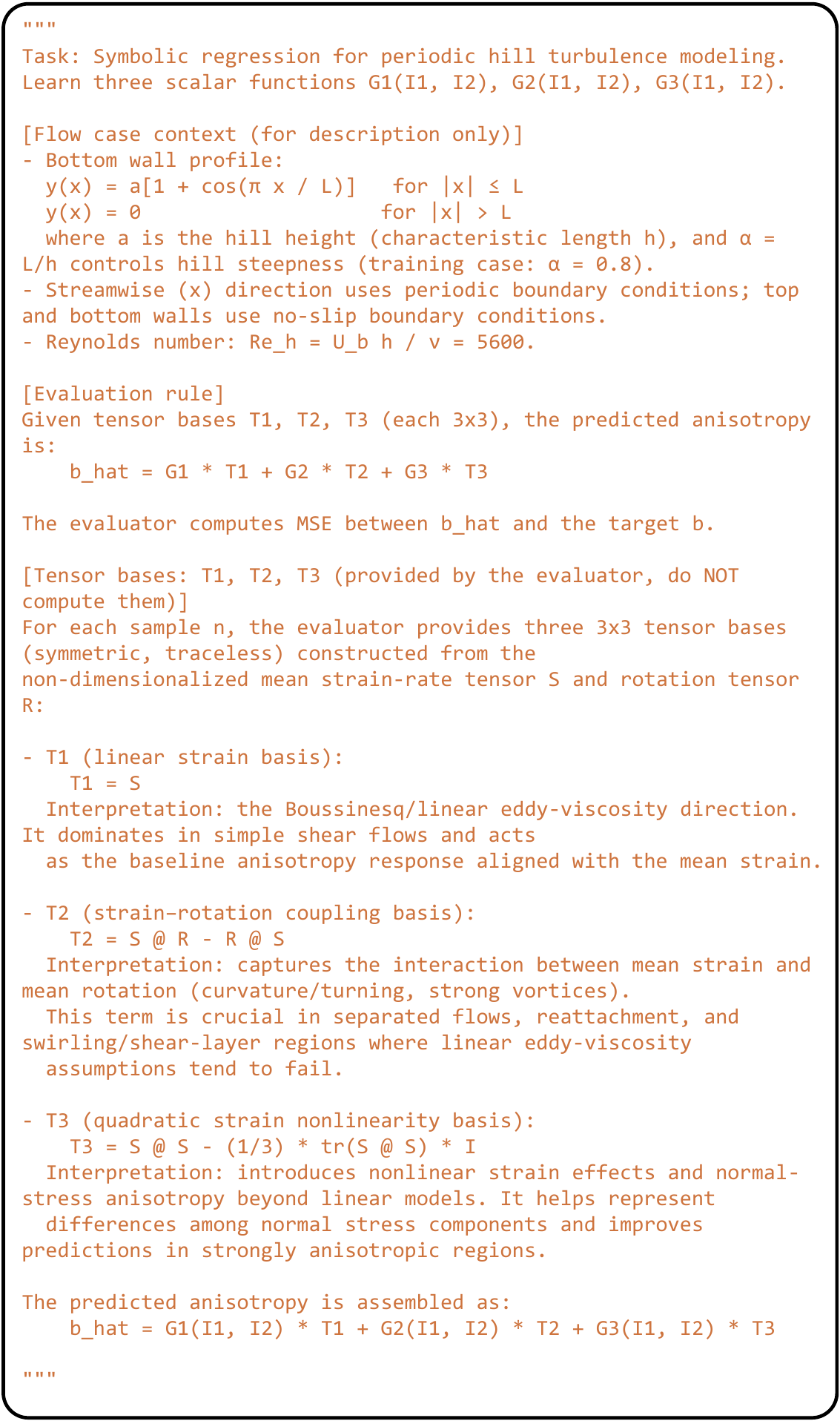}%
  }
  \caption{Problem Specification.}
  \label{fig:spec1}
\end{figure}

\begin{figure}[t]
  \centering
  \adjustbox{max totalsize={0.45\textwidth}{0.85\textheight},center}{%
    \includegraphics{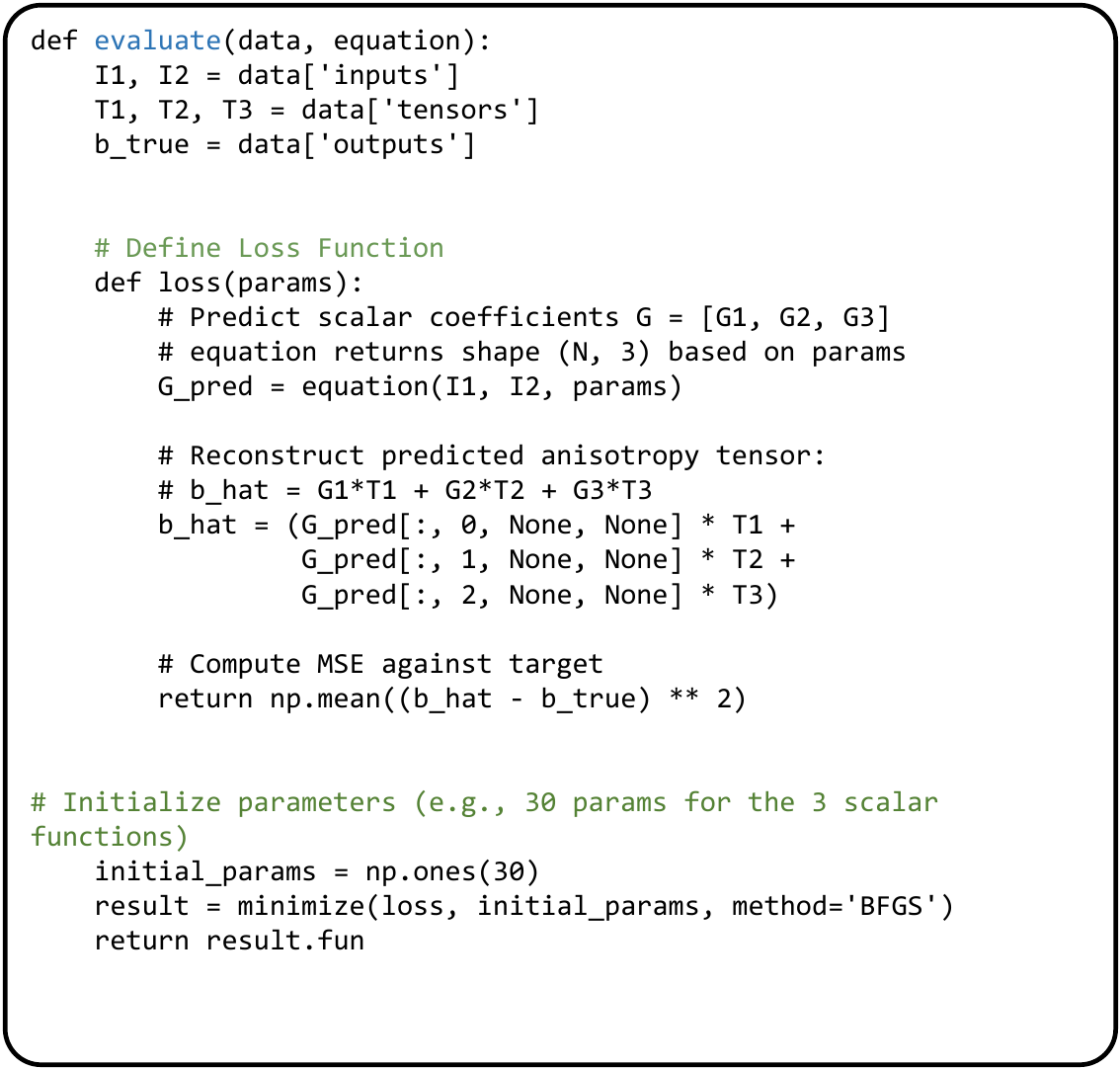}%
  }
  \caption{Evaluation and Optimization.}
  \label{fig:spec2}
\end{figure}

\begin{figure}[t]
  \centering
  \adjustbox{max totalsize={0.45\textwidth}{0.85\textheight},center}{%
    \includegraphics{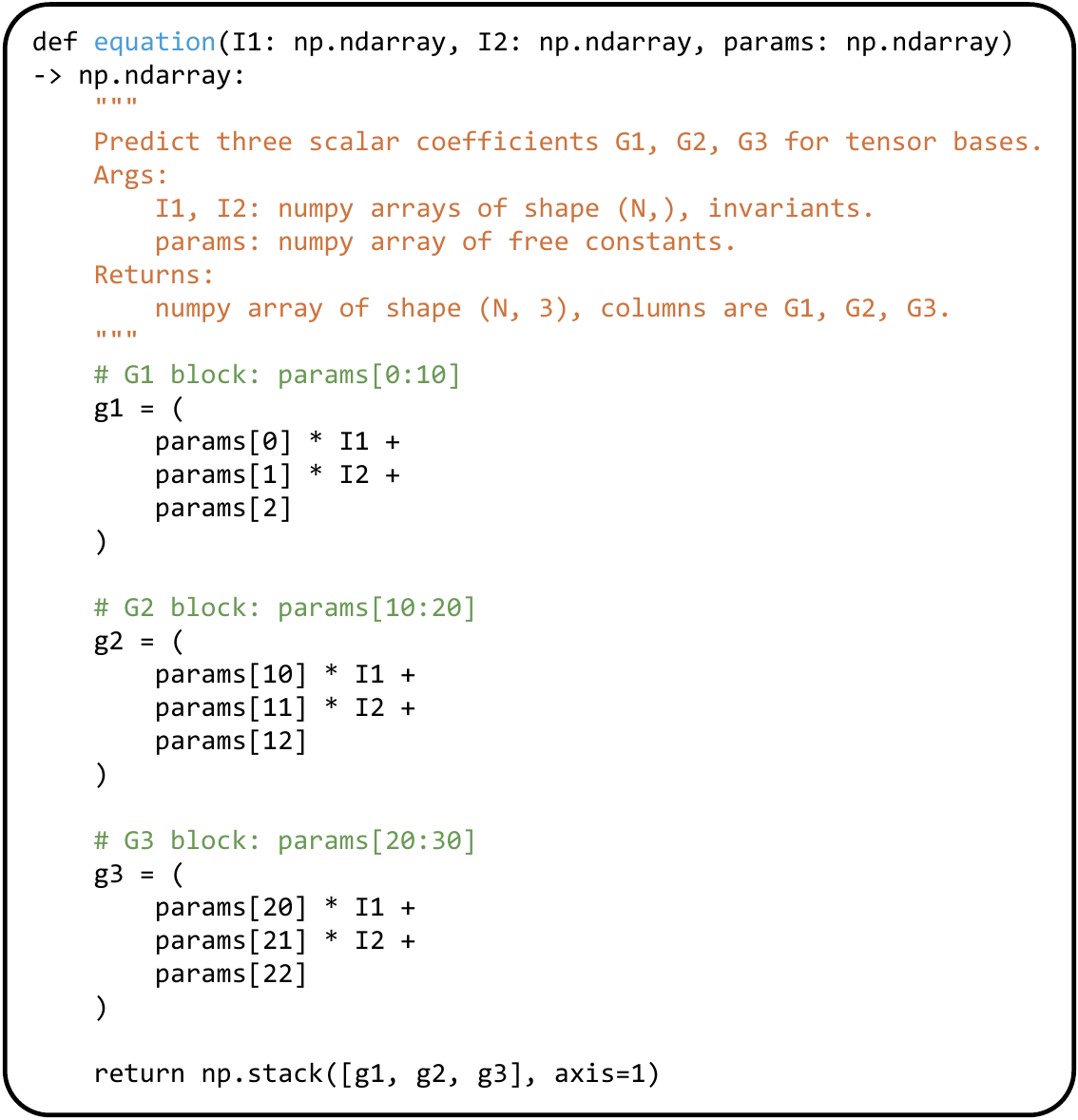}%
  }
  \caption{Equation Program Example.}
  \label{fig:spec3}
\end{figure}

\subsection{Back to Turbulence Modeling}
\label{subsec:turb_back_to_modeling}

\paragraph{From discovered expressions to a deployable closure}
After obtaining an explicit expression for the Reynolds-stress-related term from symbolic regression,
we rewrite the discovered formula into an OpenFOAM-readable form.

\paragraph{Embedding into OpenFOAM}
We embed the resulting closure into the existing \texttt{kOmegaSST} implementation by replacing the routine that evaluates the Reynolds stress
(or its modeled contribution) with the discovered expression, thereby forming an optimized turbulence model within the RANS framework.

\paragraph{Verification against DNS}
We then run RANS simulations with the modified model under the same configuration as the baseline and compare the resulting flow statistics
against the DNS reference data.
This completes the end-to-end turbulence modeling and validation pipeline considered in this work.

\subsection{Domain-Specific Constraints for Turbulence Modeling}
\label{subsec:turb_domain_constraints}

To encode expert knowledge as inductive biases, we augment the reward with a domain-specific penalty term:
\begin{equation}
P_{\text{domain}} \;=\; \sum_{j=1}^{4} w_j\,P_{\text{domain}}^{(j)},
\end{equation}
where each $P_{\text{domain}}^{(j)} \ge 0$ quantifies violations of the $j$-th turbulence constraint.

\paragraph{(1) Realizability.}
A physically admissible Reynolds stress tensor must be symmetric and positive semi-definite (PSD), which implies that all eigenvalues are nonnegative.
Non-realizable predictions are penalized by
\begin{equation}
P_{\text{domain}}^{(1)} \;=\; \frac{1}{N}\sum_{n=1}^{N} \max\Bigl(0,\,-\lambda_{\min}(\tau(x_n))\Bigr),
\label{eq:realizability_penalty}
\end{equation}
where $\lambda_{\min}(\cdot)$ denotes the smallest eigenvalue. This PSD requirement is a standard realizability condition for Reynolds stresses~\cite{Pope2000}. 

\paragraph{(2) Boundary-condition consistency.}
At a no-slip wall, the Reynolds stress tensor must decay to zero. Accordingly, non-vanishing stresses in a near-wall band
$\mathcal{W}=\{x: y(x)<y_0\}$ are penalized as
\begin{equation}
P_{\text{domain}}^{(2)} \;=\; \frac{1}{|\mathcal{W}|}\sum_{x_n\in\mathcal{W}} \|\tau(x_n)\|_F,
\label{eq:wall_penalty}
\end{equation}
where $y(x)$ is the wall distance field. In practice, $y(x)$ can be computed and exported using standard CFD post-processing tools
(OpenFOAM wall-distance utilities such as \texttt{wallDist} and \texttt{checkMesh-writeAllFields})~\cite{Monkewitz2021}. 

\paragraph{(3) Asymptotic scaling in the viscous sublayer.}
Near-wall Taylor expansions under the no-slip constraint imply that $u'=O(y)$ and, for incompressible flow, $v'=O(y^2)$. Consequently, the Reynolds shear stress
$-\overline{u'v'}$ exhibits cubic leading-order scaling, $O(y^3)$, in the viscous sublayer~\cite{Tennekes1972}. 
This constraint is enforced by evaluating the slope of $\log|\tau_{xy}^+|$ with respect to $\log(y^+)$ over the viscous sublayer range
$\mathcal{V}=\{x: y^+(x)\in[y_1^+,y_2^+]\}$:
\begin{equation}
p \;=\; \mathrm{slope}\bigl(\log|\tau_{xy}^+| \;\text{vs}\; \log(y^+)\bigr), \qquad
P_{\text{domain}}^{(3)} \;=\; |p-3|.
\label{eq:asymptotic_penalty}
\end{equation}
Here $y^+=y\,u_\tau/\nu$ is the wall unit, with friction velocity $u_\tau=\sqrt{\tau_w/\rho}$.
In CFD, $y^+$ and $\tau_w$ can be obtained through standard post-processing utilities (OpenFOAM \texttt{yPlus} and \texttt{wallShearStress}). 

\paragraph{(4) Energy consistency.}
Energy consistency is enforced by constraining the turbulent kinetic energy production implied by the predicted stress:
\begin{equation}
P_k(x) \;=\; -\tau_{ij}(x)\,\frac{\partial U_i}{\partial x_j}(x),
\label{eq:pk_def}
\end{equation}
and penalizing mismatches with the reference production $P_k^{\text{ref}}$ (computed from DNS-aligned fields) via
\begin{equation}
P_{\text{domain}}^{(4)} \;=\; \mathrm{MSE}\bigl(P_k,\,P_k^{\text{ref}}\bigr).
\label{eq:energy_penalty}
\end{equation}
The definition in Eq.~\eqref{eq:pk_def} is the standard production term in the TKE budget~\cite{Pope2000}. 

\paragraph{Efficient evaluation via elite-CFD refinement (multi-fidelity constraint scheduling).}
Constraints (2)--(4) require additional CFD-derived fields (wall distance $y$, wall shear stress $\tau_w$ and the associated $y^+$, and mean velocity gradients $\nabla U$),
which can be expensive to obtain at each candidate-evaluation step. To keep the symbolic regression loop efficient, we adopt a two-stage evaluation schedule.

\textbf{Stage-A (inexpensive screening, applied to all candidates)} evaluates the data-fitting score (MSE) together with realizability $P_{\text{domain}}^{(1)}$
using only the predicted stress tensor. 

\textbf{Stage-B (expensive physics checks, triggered by an improving best MSE)} is activated when a candidate in the current batch achieves a new best MSE value relative to all previously evaluated candidates. In this case, the candidate is treated as an elite solution, and CFD-based post-processing is performed to obtain the required auxiliary fields, compute $P_{\text{domain}}^{(2)}$--$P_{\text{domain}}^{(4)}$, and conduct an additional online fine-tuning step using the full domain-specific penalty. All non-elite candidates are trained only with the realizability constraint, whereas elite candidates receive the full turbulence constraint suite after CFD evaluation. This design enables rigorous physics enforcement without incurring prohibitive CFD costs during exploration.


\begin{thebibliography}{53}

%%% ====================================================================
%%% NOTE TO THE USER: you can override these defaults by providing
%%% customized versions of any of these macros before the \bibliography
%%% command.  Each of them MUST provide its own final punctuation,
%%% except for \shownote{} and \showURL{}.  The latter two
%%% do not use final punctuation, in order to avoid confusing it with
%%% the Web address.
%%%
%%% To suppress output of a particular field, define its macro to expand
%%% to an empty string, or better, \unskip, like this:
%%%
%%% \newcommand{\showURL}[1]{\unskip}   % LaTeX syntax
%%%
%%% \def \showURL #1{\unskip}           % plain TeX syntax
%%%
%%% ====================================================================

\ifx \showCODEN    \undefined \def \showCODEN     #1{\unskip}     \fi
\ifx \showISBNx    \undefined \def \showISBNx     #1{\unskip}     \fi
\ifx \showISBNxiii \undefined \def \showISBNxiii  #1{\unskip}     \fi
\ifx \showISSN     \undefined \def \showISSN      #1{\unskip}     \fi
\ifx \showLCCN     \undefined \def \showLCCN      #1{\unskip}     \fi
\ifx \shownote     \undefined \def \shownote      #1{#1}          \fi
\ifx \showarticletitle \undefined \def \showarticletitle #1{#1}   \fi
\ifx \showURL      \undefined \def \showURL       {\relax}        \fi
% The following commands are used for tagged output and should be
% invisible to TeX
\providecommand\bibfield[2]{#2}
\providecommand\bibinfo[2]{#2}
\providecommand\natexlab[1]{#1}
\providecommand\showeprint[2][]{arXiv:#2}

\bibitem[Aakash et~al\mbox{.}(2019)]%
        {Aakash2019}
\bibfield{author}{\bibinfo{person}{B.~S. Aakash}, \bibinfo{person}{JohnPatrick
  Connors}, {and} \bibinfo{person}{Michael~D Shields}.}
  \bibinfo{year}{2019}\natexlab{}.
\newblock \showarticletitle{Stress-strain data for aluminum 6061-T651 from 9
  lots at 6 temperatures under uniaxial and plane strain tension}.
\newblock \bibinfo{journal}{\emph{Data in Brief}}  \bibinfo{volume}{25}
  (\bibinfo{date}{Aug} \bibinfo{year}{2019}), \bibinfo{pages}{104085}.
\newblock
\showISSN{2352-3409}
\href{https://doi.org/10.1016/j.dib.2019.104085}{doi:\nolinkurl{10.1016/j.dib.2019.104085}}


\bibitem[Biggio et~al\mbox{.}(2021)]%
        {biggio2021neuralsymbolicregressionscales}
\bibfield{author}{\bibinfo{person}{Luca Biggio}, \bibinfo{person}{Tommaso
  Bendinelli}, \bibinfo{person}{Alexander Neitz}, \bibinfo{person}{Aurelien
  Lucchi}, {and} \bibinfo{person}{Giambattista Parascandolo}.}
  \bibinfo{year}{2021}\natexlab{}.
\newblock \bibinfo{title}{Neural Symbolic Regression that Scales}.
\newblock
\showeprint[arxiv]{2106.06427}~[cs.LG]
\urldef\tempurl%
\url{https://arxiv.org/abs/2106.06427}
\showURL{%
\tempurl}


\bibitem[Biggio* et~al\mbox{.}(2021)]%
        {Biggioetal21}
\bibfield{author}{\bibinfo{person}{L. Biggio*}, \bibinfo{person}{T.
  Bendinelli*}, \bibinfo{person}{A. Neitz}, \bibinfo{person}{A. Lucchi}, {and}
  \bibinfo{person}{G. Parascandolo}.} \bibinfo{year}{2021}\natexlab{}.
\newblock \showarticletitle{Neural Symbolic Regression that Scales}. In
  \bibinfo{booktitle}{\emph{Proceedings of 38th International Conference on
  Machine Learning (ICML 2021)}} \emph{(\bibinfo{series}{Proceedings of Machine
  Learning Research}, Vol.~\bibinfo{volume}{139})}. \bibinfo{publisher}{PMLR},
  \bibinfo{pages}{936--945}.
\newblock
\urldef\tempurl%
\url{https://proceedings.mlr.press/v139/biggio21a.html}
\showURL{%
\tempurl}
\newblock
\shownote{*equal contribution}.


\bibitem[Chen et~al\mbox{.}(2025)]%
        {ICLR2025_a76b693f}
\bibfield{author}{\bibinfo{person}{Jindou Chen}, \bibinfo{person}{Jidong Tian},
  \bibinfo{person}{Liang Wu}, \bibinfo{person}{ChenXinWei},
  \bibinfo{person}{Xiaokang Yang}, \bibinfo{person}{Yaohui Jin}, {and}
  \bibinfo{person}{Yanyan Xu}.} \bibinfo{year}{2025}\natexlab{}.
\newblock \showarticletitle{KinFormer: Generalizable Dynamical Symbolic
  Regression for Catalytic Organic Reaction Kinetics}. In
  \bibinfo{booktitle}{\emph{International Conference on Representation
  Learning}}, \bibfield{editor}{\bibinfo{person}{Y.~Yue},
  \bibinfo{person}{A.~Garg}, \bibinfo{person}{N.~Peng},
  \bibinfo{person}{F.~Sha}, {and} \bibinfo{person}{R.~Yu}} (Eds.),
  Vol.~\bibinfo{volume}{2025}. \bibinfo{pages}{67058--67080}.
\newblock
\urldef\tempurl%
\url{https://proceedings.iclr.cc/paper_files/paper/2025/file/a76b693f36916a5ed84d6e5b39a0dc03-Paper-Conference.pdf}
\showURL{%
\tempurl}


\bibitem[Cranmer(2023)]%
        {cranmer2023interpretablemachinelearningscience}
\bibfield{author}{\bibinfo{person}{Miles Cranmer}.}
  \bibinfo{year}{2023}\natexlab{}.
\newblock \bibinfo{title}{Interpretable Machine Learning for Science with PySR
  and SymbolicRegression.jl}.
\newblock
\showeprint[arxiv]{2305.01582}~[astro-ph.IM]
\urldef\tempurl%
\url{https://arxiv.org/abs/2305.01582}
\showURL{%
\tempurl}


\bibitem[Crochepierre et~al\mbox{.}(2022)]%
        {crochepierre:hal-03695471}
\bibfield{author}{\bibinfo{person}{Laure Crochepierre}, \bibinfo{person}{Lydia
  Boudjeloud-Assala}, {and} \bibinfo{person}{Vincent Barbesant}.}
  \bibinfo{year}{2022}\natexlab{}.
\newblock \showarticletitle{{Interactive Reinforcement Learning for Symbolic
  Regression from Multi-Format Human-Preference Feedbacks}}. In
  \bibinfo{booktitle}{\emph{{IJCAI 2022- 31st International Joint Conference on
  Artificial Intelligence}}}. \bibinfo{address}{Vienne, Austria}.
\newblock
\urldef\tempurl%
\url{https://hal.science/hal-03695471}
\showURL{%
\tempurl}


\bibitem[Deng et~al\mbox{.}(2023)]%
        {DENG2023109010}
\bibfield{author}{\bibinfo{person}{Song Deng}, \bibinfo{person}{Junjie Wang},
  \bibinfo{person}{Li Tao}, \bibinfo{person}{Su Zhang}, {and}
  \bibinfo{person}{Hongwei Sun}.} \bibinfo{year}{2023}\natexlab{}.
\newblock \showarticletitle{EV charging load forecasting model mining algorithm
  based on hybrid intelligence}.
\newblock \bibinfo{journal}{\emph{Computers and Electrical Engineering}}
  \bibinfo{volume}{112} (\bibinfo{year}{2023}), \bibinfo{pages}{109010}.
\newblock
\showISSN{0045-7906}
\href{https://doi.org/10.1016/j.compeleceng.2023.109010}{doi:\nolinkurl{10.1016/j.compeleceng.2023.109010}}


\bibitem[Du et~al\mbox{.}(2023)]%
        {du2023discoverdeepidentificationsymbolically}
\bibfield{author}{\bibinfo{person}{Mengge Du}, \bibinfo{person}{Yuntian Chen},
  {and} \bibinfo{person}{Dongxiao Zhang}.} \bibinfo{year}{2023}\natexlab{}.
\newblock \bibinfo{title}{DISCOVER: Deep identification of symbolically concise
  open-form PDEs via enhanced reinforcement-learning}.
\newblock
\showeprint[arxiv]{2210.02181}~[cs.LG]
\urldef\tempurl%
\url{https://arxiv.org/abs/2210.02181}
\showURL{%
\tempurl}


\bibitem[Fletcher(1987)]%
        {fletcher1987_practical_methods}
\bibfield{author}{\bibinfo{person}{Roger Fletcher}.}
  \bibinfo{year}{1987}\natexlab{}.
\newblock \bibinfo{booktitle}{\emph{Practical Methods of Optimization}
  (\bibinfo{edition}{2nd} ed.)}.
\newblock \bibinfo{publisher}{John Wiley \& Sons},
  \bibinfo{address}{Chichester, New York}.
\newblock
\showISBNx{0471915475}


\bibitem[Grayeli et~al\mbox{.}(2024)]%
        {grayeli2024symbolicregressionlearnedconcept}
\bibfield{author}{\bibinfo{person}{Arya Grayeli}, \bibinfo{person}{Atharva
  Sehgal}, \bibinfo{person}{Omar Costilla-Reyes}, \bibinfo{person}{Miles
  Cranmer}, {and} \bibinfo{person}{Swarat Chaudhuri}.}
  \bibinfo{year}{2024}\natexlab{}.
\newblock \bibinfo{title}{Symbolic Regression with a Learned Concept Library}.
\newblock
\showeprint[arxiv]{2409.09359}~[cs.LG]
\urldef\tempurl%
\url{https://arxiv.org/abs/2409.09359}
\showURL{%
\tempurl}


\bibitem[Guo et~al\mbox{.}(2025)]%
        {guo2025coevocontinualevolutionsymbolic}
\bibfield{author}{\bibinfo{person}{Ping Guo}, \bibinfo{person}{Qingfu Zhang},
  {and} \bibinfo{person}{Xi Lin}.} \bibinfo{year}{2025}\natexlab{}.
\newblock \bibinfo{title}{CoEvo: Continual Evolution of Symbolic Solutions
  Using Large Language Models}.
\newblock
\showeprint[arxiv]{2412.18890}~[cs.AI]
\urldef\tempurl%
\url{https://arxiv.org/abs/2412.18890}
\showURL{%
\tempurl}


\bibitem[Hu et~al\mbox{.}(2021)]%
        {hu2021loralowrankadaptationlarge}
\bibfield{author}{\bibinfo{person}{Edward~J. Hu}, \bibinfo{person}{Yelong
  Shen}, \bibinfo{person}{Phillip Wallis}, \bibinfo{person}{Zeyuan Allen-Zhu},
  \bibinfo{person}{Yuanzhi Li}, \bibinfo{person}{Shean Wang},
  \bibinfo{person}{Lu Wang}, {and} \bibinfo{person}{Weizhu Chen}.}
  \bibinfo{year}{2021}\natexlab{}.
\newblock \bibinfo{title}{LoRA: Low-Rank Adaptation of Large Language Models}.
\newblock
\showeprint[arxiv]{2106.09685}~[cs.CL]
\urldef\tempurl%
\url{https://arxiv.org/abs/2106.09685}
\showURL{%
\tempurl}


\bibitem[Huang et~al\mbox{.}(2025)]%
        {huang2025calmcoevolutionalgorithmslanguage}
\bibfield{author}{\bibinfo{person}{Ziyao Huang}, \bibinfo{person}{Weiwei Wu},
  \bibinfo{person}{Kui Wu}, \bibinfo{person}{Jianping Wang}, {and}
  \bibinfo{person}{Wei-Bin Lee}.} \bibinfo{year}{2025}\natexlab{}.
\newblock \bibinfo{title}{CALM: Co-evolution of Algorithms and Language Model
  for Automatic Heuristic Design}.
\newblock
\showeprint[arxiv]{2505.12285}~[cs.NE]
\urldef\tempurl%
\url{https://arxiv.org/abs/2505.12285}
\showURL{%
\tempurl}


\bibitem[Kamienny et~al\mbox{.}(2022)]%
        {kamienny2022endtoendsymbolicregressiontransformers}
\bibfield{author}{\bibinfo{person}{Pierre-Alexandre Kamienny},
  \bibinfo{person}{Stéphane d'Ascoli}, \bibinfo{person}{Guillaume Lample},
  {and} \bibinfo{person}{François Charton}.} \bibinfo{year}{2022}\natexlab{}.
\newblock \bibinfo{title}{End-to-end symbolic regression with transformers}.
\newblock
\showeprint[arxiv]{2204.10532}~[cs.LG]
\urldef\tempurl%
\url{https://arxiv.org/abs/2204.10532}
\showURL{%
\tempurl}


\bibitem[Kassianik et~al\mbox{.}(2025)]%
        {kassianik2025llama31foundationaisecurityllmbase8btechnicalreport}
\bibfield{author}{\bibinfo{person}{Paul Kassianik}, \bibinfo{person}{Baturay
  Saglam}, \bibinfo{person}{Alexander Chen}, \bibinfo{person}{Blaine Nelson},
  \bibinfo{person}{Anu Vellore}, \bibinfo{person}{Massimo Aufiero},
  \bibinfo{person}{Fraser Burch}, \bibinfo{person}{Dhruv Kedia},
  \bibinfo{person}{Avi Zohary}, \bibinfo{person}{Sajana Weerawardhena},
  \bibinfo{person}{Aman Priyanshu}, \bibinfo{person}{Adam Swanda},
  \bibinfo{person}{Amy Chang}, \bibinfo{person}{Hyrum Anderson},
  \bibinfo{person}{Kojin Oshiba}, \bibinfo{person}{Omar Santos},
  \bibinfo{person}{Yaron Singer}, {and} \bibinfo{person}{Amin Karbasi}.}
  \bibinfo{year}{2025}\natexlab{}.
\newblock \bibinfo{title}{Llama-3.1-FoundationAI-SecurityLLM-Base-8B Technical
  Report}.
\newblock
\showeprint[arxiv]{2504.21039}~[cs.CR]
\urldef\tempurl%
\url{https://arxiv.org/abs/2504.21039}
\showURL{%
\tempurl}


\bibitem[Koza(1990)]%
        {130444}
\bibfield{author}{\bibinfo{person}{J.R. Koza}.}
  \bibinfo{year}{1990}\natexlab{}.
\newblock \showarticletitle{Genetically breeding populations of computer
  programs to solve problems in artificial intelligence}. In
  \bibinfo{booktitle}{\emph{[1990] Proceedings of the 2nd International IEEE
  Conference on Tools for Artificial Intelligence}}. \bibinfo{pages}{819--827}.
\newblock
\href{https://doi.org/10.1109/TAI.1990.130444}{doi:\nolinkurl{10.1109/TAI.1990.130444}}


\bibitem[Landajuela et~al\mbox{.}(2022)]%
        {NEURIPS2022_dbca58f3}
\bibfield{author}{\bibinfo{person}{Mikel Landajuela},
  \bibinfo{person}{Chak~Shing Lee}, \bibinfo{person}{Jiachen Yang},
  \bibinfo{person}{Ruben Glatt}, \bibinfo{person}{Claudio~P Santiago},
  \bibinfo{person}{Ignacio Aravena}, \bibinfo{person}{Terrell Mundhenk},
  \bibinfo{person}{Garrett Mulcahy}, {and} \bibinfo{person}{Brenden~K
  Petersen}.} \bibinfo{year}{2022}\natexlab{}.
\newblock \showarticletitle{A Unified Framework for Deep Symbolic Regression}.
  In \bibinfo{booktitle}{\emph{Advances in Neural Information Processing
  Systems}}, \bibfield{editor}{\bibinfo{person}{S.~Koyejo},
  \bibinfo{person}{S.~Mohamed}, \bibinfo{person}{A.~Agarwal},
  \bibinfo{person}{D.~Belgrave}, \bibinfo{person}{K.~Cho}, {and}
  \bibinfo{person}{A.~Oh}} (Eds.), Vol.~\bibinfo{volume}{35}.
  \bibinfo{publisher}{Curran Associates, Inc.}, \bibinfo{pages}{33985--33998}.
\newblock
\urldef\tempurl%
\url{https://proceedings.neurips.cc/paper_files/paper/2022/file/dbca58f35bddc6e4003b2dd80e42f838-Paper-Conference.pdf}
\showURL{%
\tempurl}


\bibitem[Landajuela et~al\mbox{.}(2021)]%
        {landajuela2021improvingexplorationpolicygradient}
\bibfield{author}{\bibinfo{person}{Mikel Landajuela},
  \bibinfo{person}{Brenden~K. Petersen}, \bibinfo{person}{Soo~K. Kim},
  \bibinfo{person}{Claudio~P. Santiago}, \bibinfo{person}{Ruben Glatt},
  \bibinfo{person}{T.~Nathan Mundhenk}, \bibinfo{person}{Jacob~F. Pettit},
  {and} \bibinfo{person}{Daniel~M. Faissol}.} \bibinfo{year}{2021}\natexlab{}.
\newblock \bibinfo{title}{Improving exploration in policy gradient search:
  Application to symbolic optimization}.
\newblock
\showeprint[arxiv]{2107.09158}~[cs.LG]
\urldef\tempurl%
\url{https://arxiv.org/abs/2107.09158}
\showURL{%
\tempurl}


\bibitem[Li et~al\mbox{.}(2023)]%
        {Li2023TransformerbasedMF}
\bibfield{author}{\bibinfo{person}{Wenqiang Li}, \bibinfo{person}{Weijun Li},
  \bibinfo{person}{Linjun Sun}, \bibinfo{person}{Min Wu}, \bibinfo{person}{Lina
  Yu}, \bibinfo{person}{Jingyi Liu}, \bibinfo{person}{Yanjie Li}, {and}
  \bibinfo{person}{Song Tian}.} \bibinfo{year}{2023}\natexlab{}.
\newblock \showarticletitle{Transformer-based model for symbolic regression via
  joint supervised learning}. In \bibinfo{booktitle}{\emph{International
  Conference on Learning Representations}}.
\newblock
\urldef\tempurl%
\url{https://api.semanticscholar.org/CorpusID:259298765}
\showURL{%
\tempurl}


\bibitem[Ma et~al\mbox{.}(2024)]%
        {pmlr-v235-ma24m}
\bibfield{author}{\bibinfo{person}{Pingchuan Ma}, \bibinfo{person}{Tsun-Hsuan
  Wang}, \bibinfo{person}{Minghao Guo}, \bibinfo{person}{Zhiqing Sun},
  \bibinfo{person}{Joshua~B. Tenenbaum}, \bibinfo{person}{Daniela Rus},
  \bibinfo{person}{Chuang Gan}, {and} \bibinfo{person}{Wojciech Matusik}.}
  \bibinfo{year}{2024}\natexlab{}.
\newblock \showarticletitle{{LLM} and Simulation as Bilevel Optimizers: A New
  Paradigm to Advance Physical Scientific Discovery}. In
  \bibinfo{booktitle}{\emph{Proceedings of the 41st International Conference on
  Machine Learning}} \emph{(\bibinfo{series}{Proceedings of Machine Learning
  Research}, Vol.~\bibinfo{volume}{235})},
  \bibfield{editor}{\bibinfo{person}{Ruslan Salakhutdinov},
  \bibinfo{person}{Zico Kolter}, \bibinfo{person}{Katherine Heller},
  \bibinfo{person}{Adrian Weller}, \bibinfo{person}{Nuria Oliver},
  \bibinfo{person}{Jonathan Scarlett}, {and} \bibinfo{person}{Felix
  Berkenkamp}} (Eds.). \bibinfo{publisher}{PMLR},
  \bibinfo{pages}{33940--33962}.
\newblock
\urldef\tempurl%
\url{https://proceedings.mlr.press/v235/ma24m.html}
\showURL{%
\tempurl}


\bibitem[Makke and Chawla(2024a)]%
        {10.1093pnasnexuspgae467}
\bibfield{author}{\bibinfo{person}{Nour Makke} {and} \bibinfo{person}{Sanjay
  Chawla}.} \bibinfo{year}{2024}\natexlab{a}.
\newblock \showarticletitle{Data-driven discovery of Tsallis-like distribution
  using symbolic regression in high-energy physics}.
\newblock \bibinfo{journal}{\emph{PNAS Nexus}} \bibinfo{volume}{3},
  \bibinfo{number}{11} (\bibinfo{date}{10} \bibinfo{year}{2024}),
  \bibinfo{pages}{pgae467}.
\newblock
\showISSN{2752-6542}
\showeprint{https://academic.oup.com/pnasnexus/article-pdf/3/11/pgae467/60816181/pgae467.pdf}
\href{https://doi.org/10.1093/pnasnexus/pgae467}{doi:\nolinkurl{10.1093/pnasnexus/pgae467}}


\bibitem[Makke and Chawla(2024b)]%
        {article}
\bibfield{author}{\bibinfo{person}{Nour Makke} {and} \bibinfo{person}{Sanjay
  Chawla}.} \bibinfo{year}{2024}\natexlab{b}.
\newblock \showarticletitle{Interpretable scientific discovery with symbolic
  regression: a review}.
\newblock \bibinfo{journal}{\emph{Artificial Intelligence Review}}
  \bibinfo{volume}{57} (\bibinfo{date}{01} \bibinfo{year}{2024}).
\newblock
\href{https://doi.org/10.1007/s10462-023-10622-0}{doi:\nolinkurl{10.1007/s10462-023-10622-0}}


\bibitem[Menter(1994)]%
        {menter1994two}
\bibfield{author}{\bibinfo{person}{Florian~R Menter}.}
  \bibinfo{year}{1994}\natexlab{}.
\newblock \showarticletitle{Two-equation eddy-viscosity turbulence models for
  engineering applications}.
\newblock \bibinfo{journal}{\emph{AIAA journal}} \bibinfo{volume}{32},
  \bibinfo{number}{8} (\bibinfo{year}{1994}), \bibinfo{pages}{1598--1605}.
\newblock


\bibitem[Menter et~al\mbox{.}(2003)]%
        {menter2003ten}
\bibfield{author}{\bibinfo{person}{Florian~R Menter}, \bibinfo{person}{Martin
  Kuntz}, \bibinfo{person}{Robin Langtry}, {et~al\mbox{.}}}
  \bibinfo{year}{2003}\natexlab{}.
\newblock \showarticletitle{Ten years of industrial experience with the SST
  turbulence model}.
\newblock \bibinfo{journal}{\emph{Turbulence, heat and mass transfer}}
  \bibinfo{volume}{4}, \bibinfo{number}{1} (\bibinfo{year}{2003}),
  \bibinfo{pages}{625--632}.
\newblock


\bibitem[Merler et~al\mbox{.}(2024)]%
        {Merler_2024}
\bibfield{author}{\bibinfo{person}{Matteo Merler}, \bibinfo{person}{Katsiaryna
  Haitsiukevich}, \bibinfo{person}{Nicola Dainese}, {and}
  \bibinfo{person}{Pekka Marttinen}.} \bibinfo{year}{2024}\natexlab{}.
\newblock \showarticletitle{In-Context Symbolic Regression: Leveraging Large
  Language Models for Function Discovery}. In
  \bibinfo{booktitle}{\emph{Proceedings of the 62nd Annual Meeting of the
  Association for Computational Linguistics (Volume 4: Student Research
  Workshop)}}. \bibinfo{publisher}{Association for Computational Linguistics},
  \bibinfo{pages}{589–606}.
\newblock
\href{https://doi.org/10.18653/v1/2024.acl-srw.49}{doi:\nolinkurl{10.18653/v1/2024.acl-srw.49}}


\bibitem[MOCHIZUKI and OSAKA(2000)]%
        {MOCHIZUKI2000}
\bibfield{author}{\bibinfo{person}{Shinsuke MOCHIZUKI} {and}
  \bibinfo{person}{Hideo OSAKA}.} \bibinfo{year}{2000}\natexlab{}.
\newblock \showarticletitle{Management of a Stronger Wall Jet by a Pair of
  Streamwise Vortices. Reynolds Stress Tensor and Production Tenns.}
\newblock \bibinfo{journal}{\emph{TRANSACTIONS OF THE JAPAN SOCIETY OF
  MECHANICAL ENGINEERS Series B}} \bibinfo{volume}{66}, \bibinfo{number}{646}
  (\bibinfo{year}{2000}), \bibinfo{pages}{1309–1317}.
\newblock
\showISSN{1884-8346}
\href{https://doi.org/10.1299/kikaib.66.646_1309}{doi:\nolinkurl{10.1299/kikaib.66.646_1309}}


\bibitem[Monkewitz(2021)]%
        {Monkewitz2021}
\bibfield{author}{\bibinfo{person}{Peter~A. Monkewitz}.}
  \bibinfo{year}{2021}\natexlab{}.
\newblock \showarticletitle{Asymptotics of streamwise Reynolds stress in wall
  turbulence}.
\newblock \bibinfo{journal}{\emph{Journal of Fluid Mechanics}}
  \bibinfo{volume}{931} (\bibinfo{date}{Nov.} \bibinfo{year}{2021}).
\newblock
\showISSN{1469-7645}
\href{https://doi.org/10.1017/jfm.2021.924}{doi:\nolinkurl{10.1017/jfm.2021.924}}


\bibitem[Monod(1949)]%
        {annurev:contentjournals10.1146annurev.mi.03.100149.002103}
\bibfield{author}{\bibinfo{person}{Jacques Monod}.}
  \bibinfo{year}{1949}\natexlab{}.
\newblock \showarticletitle{THE GROWTH OF BACTERIAL CULTURES}.
\newblock \bibinfo{journal}{\emph{Annual Review of Microbiology}}
  \bibinfo{volume}{3}, \bibinfo{number}{Volume 3, 1949} (\bibinfo{year}{1949}),
  \bibinfo{pages}{371--394}.
\newblock
\showISSN{1545-3251}
\href{https://doi.org/10.1146/annurev.mi.03.100149.002103}{doi:\nolinkurl{10.1146/annurev.mi.03.100149.002103}}


\bibitem[Mundhenk et~al\mbox{.}(2021)]%
        {mundhenk2021symbolicregressionneuralguidedgenetic}
\bibfield{author}{\bibinfo{person}{T.~Nathan Mundhenk}, \bibinfo{person}{Mikel
  Landajuela}, \bibinfo{person}{Ruben Glatt}, \bibinfo{person}{Claudio~P.
  Santiago}, \bibinfo{person}{Daniel~M. Faissol}, {and}
  \bibinfo{person}{Brenden~K. Petersen}.} \bibinfo{year}{2021}\natexlab{}.
\newblock \bibinfo{title}{Symbolic Regression via Neural-Guided Genetic
  Programming Population Seeding}.
\newblock
\showeprint[arxiv]{2111.00053}~[cs.NE]
\urldef\tempurl%
\url{https://arxiv.org/abs/2111.00053}
\showURL{%
\tempurl}


\bibitem[Neamtiu et~al\mbox{.}(2005)]%
        {10.11451083142.1083143}
\bibfield{author}{\bibinfo{person}{Iulian Neamtiu}, \bibinfo{person}{Jeffrey~S.
  Foster}, {and} \bibinfo{person}{Michael Hicks}.}
  \bibinfo{year}{2005}\natexlab{}.
\newblock \showarticletitle{Understanding source code evolution using abstract
  syntax tree matching}. In \bibinfo{booktitle}{\emph{Proceedings of the 2005
  International Workshop on Mining Software Repositories}} (St. Louis,
  Missouri) \emph{(\bibinfo{series}{MSR '05})}. \bibinfo{publisher}{Association
  for Computing Machinery}, \bibinfo{address}{New York, NY, USA},
  \bibinfo{pages}{1–5}.
\newblock
\showISBNx{1595931236}
\href{https://doi.org/10.1145/1083142.1083143}{doi:\nolinkurl{10.1145/1083142.1083143}}


\bibitem[Petersen et~al\mbox{.}(2021)]%
        {petersen2021deepsymbolicregressionrecovering}
\bibfield{author}{\bibinfo{person}{Brenden~K. Petersen}, \bibinfo{person}{Mikel
  Landajuela}, \bibinfo{person}{T.~Nathan Mundhenk},
  \bibinfo{person}{Claudio~P. Santiago}, \bibinfo{person}{Soo~K. Kim}, {and}
  \bibinfo{person}{Joanne~T. Kim}.} \bibinfo{year}{2021}\natexlab{}.
\newblock \bibinfo{title}{Deep symbolic regression: Recovering mathematical
  expressions from data via risk-seeking policy gradients}.
\newblock
\showeprint[arxiv]{1912.04871}~[cs.LG]
\urldef\tempurl%
\url{https://arxiv.org/abs/1912.04871}
\showURL{%
\tempurl}


\bibitem[Pope(2000)]%
        {Pope2000}
\bibfield{author}{\bibinfo{person}{Stephen~B. Pope}.}
  \bibinfo{year}{2000}\natexlab{}.
\newblock \bibinfo{booktitle}{\emph{Turbulent Flows}}.
\newblock \bibinfo{publisher}{Cambridge University Press}.
\newblock
\showISBNx{9780511840531}
\href{https://doi.org/10.1017/cbo9780511840531}{doi:\nolinkurl{10.1017/cbo9780511840531}}


\bibitem[Pourcel et~al\mbox{.}(2025)]%
        {pourcel2025selfimprovinglanguagemodelsevolutionary}
\bibfield{author}{\bibinfo{person}{Julien Pourcel}, \bibinfo{person}{Cédric
  Colas}, {and} \bibinfo{person}{Pierre-Yves Oudeyer}.}
  \bibinfo{year}{2025}\natexlab{}.
\newblock \bibinfo{title}{Self-Improving Language Models for Evolutionary
  Program Synthesis: A Case Study on ARC-AGI}.
\newblock
\showeprint[arxiv]{2507.14172}~[cs.LG]
\urldef\tempurl%
\url{https://arxiv.org/abs/2507.14172}
\showURL{%
\tempurl}


\bibitem[Reuter et~al\mbox{.}(2023)]%
        {10.1007978-3-031-29573-7_3}
\bibfield{author}{\bibinfo{person}{Julia Reuter}, \bibinfo{person}{Hani
  Elmestikawy}, \bibinfo{person}{Fabien Evrard}, \bibinfo{person}{Sanaz
  Mostaghim}, {and} \bibinfo{person}{Berend van Wachem}.}
  \bibinfo{year}{2023}\natexlab{}.
\newblock \showarticletitle{Graph Networks as Inductive Bias for Genetic
  Programming: Symbolic Models for Particle-Laden Flows}. In
  \bibinfo{booktitle}{\emph{Genetic Programming}},
  \bibfield{editor}{\bibinfo{person}{Gisele Pappa}, \bibinfo{person}{Mario
  Giacobini}, {and} \bibinfo{person}{Zdenek Vasicek}} (Eds.).
  \bibinfo{publisher}{Springer Nature Switzerland}, \bibinfo{address}{Cham},
  \bibinfo{pages}{36--51}.
\newblock
\showISBNx{978-3-031-29573-7}


\bibitem[Romera-Paredes et~al\mbox{.}(2024)]%
        {romera2024mathematical}
\bibfield{author}{\bibinfo{person}{Bernardino Romera-Paredes},
  \bibinfo{person}{Mohammadamin Barekatain}, \bibinfo{person}{Alexander
  Novikov}, \bibinfo{person}{Matej Balog}, \bibinfo{person}{M~Pawan Kumar},
  \bibinfo{person}{Emilien Dupont}, \bibinfo{person}{Francisco~JR Ruiz},
  \bibinfo{person}{Jordan~S Ellenberg}, \bibinfo{person}{Pengming Wang},
  \bibinfo{person}{Omar Fawzi}, {et~al\mbox{.}}}
  \bibinfo{year}{2024}\natexlab{}.
\newblock \showarticletitle{Mathematical discoveries from program search with
  large language models}.
\newblock \bibinfo{journal}{\emph{Nature}} \bibinfo{volume}{625},
  \bibinfo{number}{7995} (\bibinfo{year}{2024}), \bibinfo{pages}{468--475}.
\newblock


\bibitem[Rosso et~al\mbox{.}(1995)]%
        {Rosso1995}
\bibfield{author}{\bibinfo{person}{L Rosso}, \bibinfo{person}{J.~R. Lobry},
  \bibinfo{person}{S Bajard}, {and} \bibinfo{person}{J.~P. Flandrois}.}
  \bibinfo{year}{1995}\natexlab{}.
\newblock \showarticletitle{Convenient Model To Describe the Combined Effects
  of Temperature and {pH} on Microbial Growth}.
\newblock \bibinfo{journal}{\emph{Applied and Environmental Microbiology}}
  \bibinfo{volume}{61}, \bibinfo{number}{2} (\bibinfo{date}{Feb}
  \bibinfo{year}{1995}), \bibinfo{pages}{610--6}.
\newblock
\showISSN{0099-2240}
\href{https://doi.org/10.1128/aem.61.2.610-616.1995}{doi:\nolinkurl{10.1128/aem.61.2.610-616.1995}}


\bibitem[Schmidt and Lipson(2009)]%
        {doi:10.1126science.1165893}
\bibfield{author}{\bibinfo{person}{Michael Schmidt} {and} \bibinfo{person}{Hod
  Lipson}.} \bibinfo{year}{2009}\natexlab{}.
\newblock \showarticletitle{Distilling Free-Form Natural Laws from Experimental
  Data}.
\newblock \bibinfo{journal}{\emph{Science}} \bibinfo{volume}{324},
  \bibinfo{number}{5923} (\bibinfo{year}{2009}), \bibinfo{pages}{81--85}.
\newblock
\showeprint{https://www.science.org/doi/pdf/10.1126/science.1165893}
\href{https://doi.org/10.1126/science.1165893}{doi:\nolinkurl{10.1126/science.1165893}}


\bibitem[Shao et~al\mbox{.}(2024)]%
        {shao2024deepseekmathpushinglimitsmathematical}
\bibfield{author}{\bibinfo{person}{Zhihong Shao}, \bibinfo{person}{Peiyi Wang},
  \bibinfo{person}{Qihao Zhu}, \bibinfo{person}{Runxin Xu},
  \bibinfo{person}{Junxiao Song}, \bibinfo{person}{Xiao Bi},
  \bibinfo{person}{Haowei Zhang}, \bibinfo{person}{Mingchuan Zhang},
  \bibinfo{person}{Y.~K. Li}, \bibinfo{person}{Y. Wu}, {and}
  \bibinfo{person}{Daya Guo}.} \bibinfo{year}{2024}\natexlab{}.
\newblock \bibinfo{title}{DeepSeekMath: Pushing the Limits of Mathematical
  Reasoning in Open Language Models}.
\newblock
\showeprint[arxiv]{2402.03300}~[cs.CL]
\urldef\tempurl%
\url{https://arxiv.org/abs/2402.03300}
\showURL{%
\tempurl}


\bibitem[Shi et~al\mbox{.}(2024)]%
        {shi2024alphaforgeframeworkdynamicallycombine}
\bibfield{author}{\bibinfo{person}{Hao Shi}, \bibinfo{person}{Weili Song},
  \bibinfo{person}{Xinting Zhang}, \bibinfo{person}{Jiahe Shi},
  \bibinfo{person}{Cuicui Luo}, \bibinfo{person}{Xiang Ao},
  \bibinfo{person}{Hamid Arian}, {and} \bibinfo{person}{Luis Seco}.}
  \bibinfo{year}{2024}\natexlab{}.
\newblock \bibinfo{title}{AlphaForge: A Framework to Mine and Dynamically
  Combine Formulaic Alpha Factors}.
\newblock
\showeprint[arxiv]{2406.18394}~[q-fin.CP]
\urldef\tempurl%
\url{https://arxiv.org/abs/2406.18394}
\showURL{%
\tempurl}


\bibitem[Shojaee et~al\mbox{.}(2025a)]%
        {shojaee2025llmsrscientificequationdiscovery}
\bibfield{author}{\bibinfo{person}{Parshin Shojaee}, \bibinfo{person}{Kazem
  Meidani}, \bibinfo{person}{Shashank Gupta}, \bibinfo{person}{Amir~Barati
  Farimani}, {and} \bibinfo{person}{Chandan~K Reddy}.}
  \bibinfo{year}{2025}\natexlab{a}.
\newblock \bibinfo{title}{LLM-SR: Scientific Equation Discovery via Programming
  with Large Language Models}.
\newblock
\showeprint[arxiv]{2404.18400}~[cs.LG]
\urldef\tempurl%
\url{https://arxiv.org/abs/2404.18400}
\showURL{%
\tempurl}


\bibitem[Shojaee et~al\mbox{.}(2025b)]%
        {shojaee2025llmsrbenchnewbenchmarkscientific}
\bibfield{author}{\bibinfo{person}{Parshin Shojaee}, \bibinfo{person}{Ngoc-Hieu
  Nguyen}, \bibinfo{person}{Kazem Meidani}, \bibinfo{person}{Amir~Barati
  Farimani}, \bibinfo{person}{Khoa~D Doan}, {and} \bibinfo{person}{Chandan~K
  Reddy}.} \bibinfo{year}{2025}\natexlab{b}.
\newblock \bibinfo{title}{LLM-SRBench: A New Benchmark for Scientific Equation
  Discovery with Large Language Models}.
\newblock
\showeprint[arxiv]{2504.10415}~[cs.CL]
\urldef\tempurl%
\url{https://arxiv.org/abs/2504.10415}
\showURL{%
\tempurl}


\bibitem[Tang et~al\mbox{.}(2023a)]%
        {doi:10.10635.0135638}
\bibfield{author}{\bibinfo{person}{Hongwei Tang}, \bibinfo{person}{Yan Wang},
  \bibinfo{person}{Tongguang Wang}, {and} \bibinfo{person}{Linlin Tian}.}
  \bibinfo{year}{2023}\natexlab{a}.
\newblock \showarticletitle{Discovering explicit Reynolds-averaged turbulence
  closures for turbulent separated flows through deep learning-based symbolic
  regression with non-linear corrections}.
\newblock \bibinfo{journal}{\emph{Physics of Fluids}} \bibinfo{volume}{35},
  \bibinfo{number}{2} (\bibinfo{year}{2023}), \bibinfo{pages}{025118}.
\newblock
\showeprint{https://doi.org/10.1063/5.0135638}
\href{https://doi.org/10.1063/5.0135638}{doi:\nolinkurl{10.1063/5.0135638}}


\bibitem[Tang et~al\mbox{.}(2023b)]%
        {Tang2023}
\bibfield{author}{\bibinfo{person}{Hongwei Tang}, \bibinfo{person}{Yan Wang},
  \bibinfo{person}{Tongguang Wang}, {and} \bibinfo{person}{Linlin Tian}.}
  \bibinfo{year}{2023}\natexlab{b}.
\newblock \showarticletitle{Discovering explicit Reynolds-averaged turbulence
  closures for turbulent separated flows through deep learning-based symbolic
  regression with non-linear corrections}.
\newblock \bibinfo{journal}{\emph{Physics of Fluids}} \bibinfo{volume}{35},
  \bibinfo{number}{2} (\bibinfo{date}{Feb.} \bibinfo{year}{2023}).
\newblock
\showISSN{1089-7666}
\href{https://doi.org/10.1063/5.0135638}{doi:\nolinkurl{10.1063/5.0135638}}


\bibitem[Tennekes and Lumley(1972)]%
        {Tennekes1972}
\bibfield{author}{\bibinfo{person}{Henk Tennekes} {and}
  \bibinfo{person}{John~L. Lumley}.} \bibinfo{year}{1972}\natexlab{}.
\newblock \bibinfo{booktitle}{\emph{A First Course in Turbulence}}.
\newblock \bibinfo{publisher}{The MIT Press}.
\newblock
\showISBNx{9780262310901}
\href{https://doi.org/10.7551/mitpress/3014.001.0001}{doi:\nolinkurl{10.7551/mitpress/3014.001.0001}}


\bibitem[Valipour et~al\mbox{.}(2021)]%
        {valipour2021symbolicgptgenerativetransformermodel}
\bibfield{author}{\bibinfo{person}{Mojtaba Valipour}, \bibinfo{person}{Bowen
  You}, \bibinfo{person}{Maysum Panju}, {and} \bibinfo{person}{Ali Ghodsi}.}
  \bibinfo{year}{2021}\natexlab{}.
\newblock \bibinfo{title}{SymbolicGPT: A Generative Transformer Model for
  Symbolic Regression}.
\newblock
\showeprint[arxiv]{2106.14131}~[cs.LG]
\urldef\tempurl%
\url{https://arxiv.org/abs/2106.14131}
\showURL{%
\tempurl}


\bibitem[Vastl et~al\mbox{.}(2024)]%
        {10462113}
\bibfield{author}{\bibinfo{person}{Martin Vastl}, \bibinfo{person}{Jonáš
  Kulhánek}, \bibinfo{person}{Jiří Kubalík}, \bibinfo{person}{Erik Derner},
  {and} \bibinfo{person}{Robert Babuška}.} \bibinfo{year}{2024}\natexlab{}.
\newblock \showarticletitle{SymFormer: End-to-End Symbolic Regression Using
  Transformer-Based Architecture}.
\newblock \bibinfo{journal}{\emph{IEEE Access}}  \bibinfo{volume}{12}
  (\bibinfo{year}{2024}), \bibinfo{pages}{37840--37849}.
\newblock
\href{https://doi.org/10.1109/ACCESS.2024.3374649}{doi:\nolinkurl{10.1109/ACCESS.2024.3374649}}


\bibitem[Virgolin and Pissis(2022)]%
        {virgolin2022symbolicregressionnphard}
\bibfield{author}{\bibinfo{person}{Marco Virgolin} {and}
  \bibinfo{person}{Solon~P. Pissis}.} \bibinfo{year}{2022}\natexlab{}.
\newblock \bibinfo{title}{Symbolic Regression is NP-hard}.
\newblock
\showeprint[arxiv]{2207.01018}~[cs.NE]
\urldef\tempurl%
\url{https://arxiv.org/abs/2207.01018}
\showURL{%
\tempurl}


\bibitem[Wahlquist et~al\mbox{.}(2024)]%
        {Wahlquist2024}
\bibfield{author}{\bibinfo{person}{Ylva Wahlquist}, \bibinfo{person}{Jesper
  Sundell}, {and} \bibinfo{person}{Kristian Soltesz}.}
  \bibinfo{year}{2024}\natexlab{}.
\newblock \showarticletitle{Learning pharmacometric covariate model structures
  with symbolic regression networks}.
\newblock \bibinfo{journal}{\emph{Journal of Pharmacokinetics and
  Pharmacodynamics}} \bibinfo{volume}{51}, \bibinfo{number}{2}
  (\bibinfo{year}{2024}), \bibinfo{pages}{155--167}.
\newblock
\showISSN{1573-8744}
\href{https://doi.org/10.1007/s10928-023-09887-3}{doi:\nolinkurl{10.1007/s10928-023-09887-3}}


\bibitem[WANG et~al\mbox{.}(2019)]%
        {0258-1825(2019)03-0419-07}
\bibfield{author}{\bibinfo{person}{Wei WANG}, \bibinfo{person}{Yang ZHANG},
  {and} \bibinfo{person}{Lili CHEN}.} \bibinfo{year}{2019}\natexlab{}.
\newblock \showarticletitle{An application of the shear stress transport
  low-Reynolds-number k-ε turbulence model on turbulent flows}.
\newblock \bibinfo{journal}{\emph{ACTA AERODYNAMICA SINICA}}
  \bibinfo{volume}{37}, \bibinfo{number}{3} (\bibinfo{year}{2019}),
  \bibinfo{pages}{419--425}.
\newblock
\showISSN{0258-1825}
\href{https://doi.org/10.7638/kqdlxxb-2016.0158}{doi:\nolinkurl{10.7638/kqdlxxb-2016.0158}}


\bibitem[Weller et~al\mbox{.}(1998)]%
        {OpenFOAM}
\bibfield{author}{\bibinfo{person}{H.G. Weller}, \bibinfo{person}{Gavin Tabor},
  \bibinfo{person}{Hrvoje Jasak}, {and} \bibinfo{person}{Christer Fureby}.}
  \bibinfo{year}{1998}\natexlab{}.
\newblock \showarticletitle{A Tensorial Approach to Computational Continuum
  Mechanics Using Object Orientated Techniques}.
\newblock \bibinfo{journal}{\emph{Computers in Physics}}  \bibinfo{volume}{12}
  (\bibinfo{date}{11} \bibinfo{year}{1998}), \bibinfo{pages}{620--631}.
\newblock
\href{https://doi.org/10.1063/1.168744}{doi:\nolinkurl{10.1063/1.168744}}


\bibitem[Xia et~al\mbox{.}(2025)]%
        {xia2025srscientistscientificequationdiscovery}
\bibfield{author}{\bibinfo{person}{Shijie Xia}, \bibinfo{person}{Yuhan Sun},
  {and} \bibinfo{person}{Pengfei Liu}.} \bibinfo{year}{2025}\natexlab{}.
\newblock \bibinfo{title}{SR-Scientist: Scientific Equation Discovery With
  Agentic AI}.
\newblock
\showeprint[arxiv]{2510.11661}~[cs.AI]
\urldef\tempurl%
\url{https://arxiv.org/abs/2510.11661}
\showURL{%
\tempurl}


\bibitem[Xiao et~al\mbox{.}(2020)]%
        {XIAO2020104431}
\bibfield{author}{\bibinfo{person}{Heng Xiao}, \bibinfo{person}{Jin-Long Wu},
  \bibinfo{person}{Sylvain Laizet}, {and} \bibinfo{person}{Lian Duan}.}
  \bibinfo{year}{2020}\natexlab{}.
\newblock \showarticletitle{Flows over periodic hills of parameterized
  geometries: A dataset for data-driven turbulence modeling from direct
  simulations}.
\newblock \bibinfo{journal}{\emph{Computers \& Fluids}}  \bibinfo{volume}{200}
  (\bibinfo{year}{2020}), \bibinfo{pages}{104431}.
\newblock
\showISSN{0045-7930}
\href{https://doi.org/10.1016/j.compfluid.2020.104431}{doi:\nolinkurl{10.1016/j.compfluid.2020.104431}}


\bibitem[Zhang et~al\mbox{.}(2025)]%
        {zhang2025ragsr}
\bibfield{author}{\bibinfo{person}{Hengzhe Zhang}, \bibinfo{person}{Qi Chen},
  \bibinfo{person}{Bing XUE}, \bibinfo{person}{Wolfgang Banzhaf}, {and}
  \bibinfo{person}{Mengjie Zhang}.} \bibinfo{year}{2025}\natexlab{}.
\newblock \showarticletitle{{RAG}-{SR}: Retrieval-Augmented Generation for
  Neural Symbolic Regression}. In \bibinfo{booktitle}{\emph{The Thirteenth
  International Conference on Learning Representations}}.
\newblock
\urldef\tempurl%
\url{https://openreview.net/forum?id=NdHka08uWn}
\showURL{%
\tempurl}


\end{thebibliography}
\end{document}